\newtheorem{theorem}{Theorem}
\DeclarePairedDelimiter\norm{\lVert}{\rVert}
\definecolor{mydarkblue}{rgb}{0,0.08,0.45}
\DeclareMathOperator{\argmax}{arg\,max}
\icmltitlerunning{Differential Privacy, Linguistic Fairness, and Training Data Influence}
\begin{document}

\twocolumn[
\icmltitle{Differential Privacy, Linguistic Fairness, and Training Data Influence: \\ Impossibility and Possibility Theorems for Multilingual Language Models}

\icmlsetsymbol{equal}{*}

\begin{icmlauthorlist}
\icmlauthor{Phillip Rust}{ucph}
\icmlauthor{Anders S{\o}gaard}{ucph}
\end{icmlauthorlist}

\icmlaffiliation{ucph}{Department of Computer Science, University of Copenhagen}
\icmlcorrespondingauthor{Phillip Rust}{p.rust@di.ku.dk}
\icmlkeywords{Machine Learning, ICML}

\vskip 0.3in
]

\printAffiliationsAndNotice{}
\begin{abstract}

Language models such as mBERT, XLM-R, and BLOOM aim to achieve multilingual generalization or compression to facilitate transfer to a large number of (potentially unseen) languages. However, these models should ideally also be private, linguistically fair, and transparent, by relating their predictions to training data. Can these requirements be simultaneously satisfied? We show that multilingual compression and linguistic fairness are compatible with differential privacy, but that differential privacy is at odds with training data influence sparsity, an objective for transparency. We further present a series of experiments on two common NLP tasks and evaluate multilingual compression and training data influence sparsity under different privacy guarantees, exploring these trade-offs in more detail. Our results suggest that we need to develop ways to jointly optimize for these objectives in order to find practical trade-offs.

\end{abstract}

\section{Introduction}
\label{sec:intro}

One of the open challenges in AI is bridging the widening digital language divide by providing technologies that work well for all languages. Multilingual language models such as mBERT \citep{devlin-etal-2019-bert}, XLM-R \citep{conneau-etal-2020-unsupervised}, and BLOOM \citep{https://doi.org/10.48550/arxiv.2211.05100}, facilitate transfer between closely related languages, enabling roll-out of technologies for low-resource languages, and are used for a wide range of real-world applications in many languages---e.g., from named entity recognition \citep{khalifa-etal-2021-self} to legal document classification \citep{wang-banko-2021-practical}. Generalization across languages is challenged by typological divides, language families, or scripts \citep{singh-etal-2019-bert,dufter-schutze-2020-identifying} and finding architectures that best facilitate such transfer, achieving optimal {\bf multilingual compression} \citep{ravishankar-sogaard-2021-impact} through parameter sharing (rather than compartmentalization), remains an open research problem. 

With the widespread adaptation of multilingual language models also comes responsibility and requirements that models are trustworthy \citep{trustnlp-2021-trustworthy}. What does trustworthiness amount to for multilingual language models? A crucial requirement is that multilingual NLP models perform equally well across languages, not favoring any languages over others. \citet{choudhury2021how} refer to this property as {\bf linguistic fairness}. Linguistic fairness is defined as zero variance across language-specific losses, typically estimated on held-out data.\footnote{This definition of linguistic fairness is an instantiation of  \emph{equal risk fairness} or overall performance parity, i.e., 
equal model performance across groups \citep{Berk2018FairnessIC,Verma2018FairnessDE,williamson-etal-2019-fairness}, which balances precision-based and recall-based metrics and is considered more relevant than calibration-based metrics for standard NLP applications. Since the three are mutually exclusive \citep{https://doi.org/10.48550/arxiv.1707.01195}, we ignore calibration and balance precision and recall.}

Another crucial requirement is {\em transparency}, i.e., the ability to say {\em why} models make particular predictions. Methods to achieve transparency come in two flavors; Some methods---commonly referred to as feature attribution methods---present rationales behind predictions in terms of input token attributions, but such rationales are limited in that they cannot explain predictions motivated by the absence of input tokens or the presence of particular token combinations. Feature attribution methods have also been shown to be unreliable \citep{Kindermans2019TheO,Arun2020.07.28.20163899}. Other methods highlight training data influence, i.e., provide influential data points as rationales for decisions. Often referred to as instance-based interpretability methods, they are argued to be more useful across different NLP tasks \citep{han-etal-2020-explaining, han-tsvetkov-2021-influence-tuning, zhou2021do}. We refer to the objective of achieving sparse training data influence, i.e. strong instance-interpretability, as \textbf{training data influence sparsity}.
Finally, for many NLP applications, we further need our models to be private, for which {\bf differential privacy} \citep[DP; ][]{dwork-etal-2006-dp} provides a theoretically rigorous framework. 

The trustworthiness objectives as defined above have primarily been considered in a monolingual context, and are often (falsely) assumed to be independent \citep{ruder-etal-2022-square}.\footnote{One exception is a growing body of work showing fairness and differential privacy are at odds \citep{bagdasaryan-etal-2019-differential, cummings-etal-2019, chang-shokri-2021, hansen-etal-2022-impact}. While \citet{naidu2021differential} show that differential privacy and GradCAM \citep{selvaraju2019}, 
 a feature attribution method, are compatible, the interaction between differential privacy and training data influence remains unexplored.} Our paper investigates {\em the extent to which these objectives align or are at odds}.
We do so in a multilingual setting and show how multilinguality presents options and challenges.\footnote{We are, to the best of our knowledge, first to consider differential privacy in a multilingual setting specifically, with the exception of work on differentially private neural machine translation \citep{kim-etal-2021-using}. } Our theoretical contributions show that while privacy and linguistic fairness are compatible through multilingual compression, privacy and training data influence sparsity are not, and our empirical results indicate that these objectives interact in non-linear ways.\footnote{Our code is available at \url{https://github.com/xplip/multilingual-lm-objectives}.}

\paragraph{Contributions}
We begin (in \S\ref{sec:background}) with a theoretical exploration of differential privacy, training data influence, and linguistic fairness in the context of multilingual language models. We show that differential privacy and training data influence sparsity are fundamentally at odds, a result which is not limited to the multilingual setting. While differential privacy and fairness are often said to be at odds, we also show that differential privacy and linguistic fairness over languages are compatible in the multilingual setting, as a result of compression.

Subsequently (in \S\ref{sec:exp_setup}--\S\ref{sec:empirical_interpretability}), we present empirical results on the impact of differentially private fine-tuning on multilingual compression and training data influence: We analyze the effect of such fine-tuning on the multilingual compression of large LMs and find that it is possible to achieve (i) high compression with strong privacy at the cost of performance; (ii) high compression with high performance at the cost of privacy; or (iii) privacy and accuracy at the cost of compression. Since we show in \S\ref{sec:background} that performance, privacy and compression {\em are theoretically} compatible, this leaves us with an open problem: How do we practically optimize for both performance, privacy and compression?

Furthermore, we compare four (proxy) metrics for quantifying multilingual compression---sentence retrieval, centered kernel alignment \citep[CKA; ][]{kornblith-etal-2019}, IsoScore \citep{rudman-etal-2021-iso}, representational similarity analysis \citep[RSA; ][]{kriegeskorte-etal-2008, edelman-1998}---and discuss their usefulness for balancing these trade-offs.

\newpage

Finally, we show that LMs exhibiting high multilingual compression are less instance-interpretable in that they make highlighting training data influence more difficult.

In sum, our work shows that {\em linguistically fair and private high-performance multilingual models are possible, even if  learning them is challenging. However, training data influence methods will fail for such models}.

\section{Theoretical Exploration}
\label{sec:background}

We consider language model learning and fine-tuning in a multilingual setting, in which our training data $D=D_1\cup\ldots\cup D_{|L|}$ is the union of disjoint training data from $|L|$ different languages. We consider the interaction of differential privacy, training data influence and linguistic fairness with performance and compression in this setting.

\paragraph{Preliminaries} We briefly introduce our formal definitions here: A randomized algorithm, here model, ${\mathcal{M}: \mathcal{D} \to \mathcal{Y}}$ is $\varepsilon_p$-\emph{differentially private} \citep{dwork-etal-2006-dp} iff for all adjacent datasets ${D, D' \in \mathcal{D}}$ and all ${Y \subset \mathcal{Y}}$, ${\mathbb{P}(\mathcal{M}(D) \in Y)} \leq {\exp(\varepsilon_p)\cdot \mathbb{P}(\mathcal{M}(D') \in Y)}$.\footnote{
Note how standard empirical risk minimization is not private, since it is a linear combination of training samples near the decision boundary, and if $D$ and ${D'}$ differ in one of those, the classifier changes significantly.} Adjacent means that the datasets differ by exactly one example $x_{\mathit{diff}}$.

A model $\mathcal{M}$ is said to be {\em $\varepsilon_i$-instance-interpretable}, i.e., having sparse training data influence, iff for any ${D, D', D'' \in \mathcal{D}}$ with $D' =D\setminus\{x_{\mathit{diff}}\}$, ${D''=D\setminus\{x'\}}$, and $x_{\mathit{diff}}\neq x'$, where $x_{\mathit{diff}}$ is the most influential training data point under leave-one-out influence,\footnote{Leave-one-out here means $D'=D\setminus \{x_{\mathit{diff}}\}$ and is the gold standard for instance-based methods, which explains the close connection to DP where we also deal with adjacent datasets.} it holds that ${\mathbb{P}(\mathcal{M}(D) \in Y)} - {\mathbb{P}(\mathcal{M}(D') \in Y)} > {\exp(\varepsilon_i)\cdot (\mathbb{P}(\mathcal{M}(D) \in Y) - \mathbb{P}(\mathcal{M}(D'') \in Y))}$. In other words, $x_{\mathit{diff}}$ had more influence on $\mathcal{M}$ than any other data point $x'$ by some margin $\exp(\varepsilon_i)$ \citep{10.5555/3305381.3305576}.

A model $\mathcal{M}$ is said to be fair if for a group partitioning $g(D)\rightarrow D_{g_1},\ldots,D_{g_n}$ into smaller samples and for some loss function $\ell$, e.g., 0-1 loss, $\ell(\mathcal{M}(D_{g_i}))=\ell(\mathcal{M}(D_{g_j}))$ \citep{williamson-etal-2019-fairness}. A model that is fair for a group partitioning by languages is said to be linguistically fair \citep{choudhury2021how}.

Finally, a model $\mathcal{M}$ exhibits perfect multilingual compression when it outputs identical representations for semantically equivalent inputs irrespective of the input language. Formally, for a pair of translation equivalent sentences, ($i_j$, $i_q$), the representations of $i_j$ and $i_q$ are identical at any layer $l$ of the model, i.e $\mathcal{M}^l(i_j)=\mathcal{M}^l(i_q)$. 

\newpage

In the following paragraphs, we discuss under what conditions DP, training data influence, linguistic fairness, and multilingual compression are at odds or are compatible, and how these conditions align with common scenarios in multilingual NLP.\footnote{Differential privacy meaningfully protects any individual training example. However, sensitive information may be repeated across many training examples, so $\varepsilon$-DP does not necessarily prevent leakage of such information at the granularity of individual people, real-world events, etc. For example, in our multilingual setting, an attacker may still gain access to a social security number learned by the model, but they will be unable to identify whether the number was leaked  in a particular language.}

\paragraph{Differential Privacy and Training Data Influence Sparsity} We first show that differential privacy and training data influence sparsity are fundamentally at odds:

\begin{theorem} A model $\mathcal {M}$ becomes less $\varepsilon_i$-instance-interpretable as it becomes more $\varepsilon_p$-differentially private, and vice-versa.
\end{theorem}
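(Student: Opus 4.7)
My plan is to combine the two definitions by evaluating them on a common event $Y\subset\mathcal{Y}$ for the same triple of datasets. Write $p = \mathbb{P}(\mathcal{M}(D)\in Y)$, $p' = \mathbb{P}(\mathcal{M}(D')\in Y)$ with $D' = D\setminus\{x_{\mathit{diff}}\}$, and $p'' = \mathbb{P}(\mathcal{M}(D'')\in Y)$ with $D'' = D\setminus\{x'\}$ and $x'\neq x_{\mathit{diff}}$. The objective is to derive a single inequality coupling $\varepsilon_p$ and $\varepsilon_i$, so that shrinking one forces the other to shrink too.

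First, I would apply $\varepsilon_p$-DP to the adjacent pair $(D,D')$, which gives $p \leq e^{\varepsilon_p}\,p'$, or equivalently $p - p' \leq p\,(1 - e^{-\varepsilon_p})$. Second, I would restrict to an event $Y$ on which $x_{\mathit{diff}}$ exerts strictly positive leave-one-out influence ($p>p'$) and on which the second-most influential example $x'$ acts in the same direction ($p>p''$); this is the regime in which the interpretability definition is non-vacuous. Third, I would substitute the DP upper bound into the interpretability inequality $p - p' > e^{\varepsilon_i}(p - p'')$, yielding
\[
e^{\varepsilon_i}(p - p'') \;<\; p\,\bigl(1 - e^{-\varepsilon_p}\bigr).
\]
Rearranging gives $e^{\varepsilon_i} < (1 - e^{-\varepsilon_p})\cdot p/(p - p'')$, so with $p/(p-p'')$ bounded, pushing $\varepsilon_p\to 0$ forces $\varepsilon_i$ downward, and symmetrically driving $\varepsilon_i$ up forces $\varepsilon_p$ up. That is precisely the trade-off claimed by the theorem, and the argument is independent of multilinguality.

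The hard part will be the corner case $p - p'' \leq 0$, in which the right-hand side of the interpretability inequality becomes non-positive and the coupling at this particular $Y$ collapses to a trivial bound. I would address this either by (i) adopting an absolute-value formulation of influence, so that both sides of the inequality are non-negative, or (ii) arguing that the instance-interpretability definition is only meaningful on events where the most- and second-most-influential examples push the output distribution in the same direction (which is precisely the setting in which leave-one-out rationales are actually invoked). A secondary subtlety is that the additive bound $p\,(1-e^{-\varepsilon_p})$ is a worst-case estimate in $p$; a cleaner version of the argument would work directly with the multiplicative ratio $p/p' \leq e^{\varepsilon_p}$ and compare it to $e^{\varepsilon_i}$, keeping both parameters on the same log-scale from the outset.
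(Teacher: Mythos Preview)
Your proposal is correct and follows essentially the same route as the paper: fix a common event $Y$, write the DP constraint on the adjacent pair $(D,D')$ and the instance-interpretability inequality on $(D,D',D'')$, and chain them to obtain a monotone coupling between $\varepsilon_p$ and $\varepsilon_i$. The only cosmetic difference is that you convert the DP bound to the additive form $p-p'\le p(1-e^{-\varepsilon_p})$ before substituting, whereas the paper keeps the multiplicative ratio $p/p'\le e^{\varepsilon_p}$, divides the interpretability inequality through by $p'$, and arrives at $e^{\varepsilon_p}>e^{\varepsilon_i}(p-p'')/p'+1$; your closing remark that a ``cleaner version'' would stay on the multiplicative log-scale is exactly what the paper does. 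You are also more explicit than the paper about the sign condition $p>p''$ (the paper uses it tacitly when reading monotonicity off the final inequality), so your discussion of that corner case is a genuine improvement in rigor rather than a gap.
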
 
\begin{proof}
Let ${\mathbb{P}(\mathcal {M}(D) \in Y)}$ be abbreviated as $p$, ${\mathbb{P}(\mathcal {M}(D') \in Y)}
=\mathbb{P}(\mathcal {M}(D \setminus \{x_{\mathit{diff}}\} \in Y)$ be abbreviated as $p_d$, and let ${\mathbb{P}(\mathcal {M}(D'') \in Y)}
={\mathbb{P}(\mathcal {M}(D \setminus \{x'\} \in Y)}$ be abbreviated as $p_2$.
Assume that $\mathcal {M}$ is  $\varepsilon_i$-instance-interpretable and $\varepsilon_p$-differentially private.

If $\mathcal {M}$  is $\varepsilon_p$-differentially private, it holds that
\begin{equation}\label{eq1}
    \setlength\abovedisplayskip{7pt}
    \setlength\belowdisplayskip{7pt}
    \begin{aligned}
        &p&\leq&\exp(\varepsilon_p)\cdot p_d\\
        \Rightarrow\quad&\exp(\varepsilon_p)&\geq&\frac{p}{p_d}\\
    \end{aligned}
\end{equation}

If $\mathcal{M}$ is also $\varepsilon_i$-instance-interpretable, it also holds that
\begin{equation}\label{eq2}
    \setlength\abovedisplayskip{7pt}
    \setlength\belowdisplayskip{6pt}
    \begin{aligned}
        (i)&&p-p_d&>&\exp(\varepsilon_i)(p-p_2)\\
        (ii)&\Rightarrow\quad&p&>&\exp(\varepsilon_i)(p-p_2)+p_d\\
        (iii)&\Rightarrow\quad&\frac{p}{p_d}&>&\frac{\exp(\varepsilon_i)(p-p_2)+p_d}{p_d}\\
        (iv)&\Rightarrow\quad&\exp(\varepsilon_p)&>&\frac{\exp(\varepsilon_i)(p-p_2)}{p_d}+1\\
    \end{aligned}
\end{equation}

Step $(iv)$ follows from Equation~\ref{eq1}. We can now see from Equation~\ref{eq2} step $(iv)$ that $\varepsilon_p$ increases with increasing $\varepsilon_i$, i.e. the model becomes less differentially private as it becomes more instance-interpretable, and vice-versa.
\end{proof}

This result is not limited to the multilingual setting.

\paragraph{Differential Privacy and Linguistic Fairness} Fairness and differential privacy are occasionally at odds, as shown by \citet{bagdasaryan-etal-2019-differential, cummings-etal-2019, chang-shokri-2021, hansen-etal-2022-impact},\footnote{Several authors have considered practical trade-offs between privacy and fairness, including \citet{jagielski-etal-2019-private-fair}, \citet{lyu-etal-2020-differentially}, \citet{pannekoek2021investigating}, and \citet{liu2021fair}.} but in the multilingual setting, fairness and privacy can be compatible (for the common definitions above). We first note that there is a trivial solution to obtaining differential privacy and linguistic fairness (a joint optimum), namely randomness. This simply shows that the two objectives can be simultaneously satisfied. Next, imagine a perfectly compressed multilingual language model trained on a multi-parallel dataset.

\begin{theorem} If a model $\mathcal{M}_D$ trained on parallel data from ${|L|} \geq 2$ languages, $D=\{\ldots, i_1, \ldots, i_{|L|}, \ldots \}$, with $i_j$ and $i_q$ being translation equivalents, is perfectly multilingually compressed, then it is $\varepsilon_p$-differentially private.
\end{theorem}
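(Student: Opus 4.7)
The plan is to show that any adjacent datasets $D$ and $D' = D \setminus \{x_{\mathit{diff}}\}$ induce essentially the same output distribution, so that $\mathbb{P}(\mathcal{M}(D) \in Y) \leq \exp(\varepsilon_p) \cdot \mathbb{P}(\mathcal{M}(D') \in Y)$ for any $\varepsilon_p \geq 0$ (in fact, with equality). The key leverage comes from the combination of perfect multilingual compression with the multi-parallel structure of $D$: every training example has at least one translation equivalent also present in $D$, and compression makes these equivalents indistinguishable to the model.

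First I would unpack the perfect compression assumption: for any pair of translation equivalents $(i_j, i_q)$, the representations $\mathcal{M}^l(i_j) = \mathcal{M}^l(i_q)$ at every layer $l$. Because $D$ is multi-parallel across $|L| \geq 2$ languages, any example $x_{\mathit{diff}} \in D$ has at least $|L|-1 \geq 1$ translation equivalents in $D$. Pick any such equivalent $x_{\mathit{twin}} \in D \setminus \{x_{\mathit{diff}}\} = D'$. Perfect compression implies that $x_{\mathit{diff}}$ and $x_{\mathit{twin}}$ produce identical activations, and hence identical outputs and identical per-example losses, at every stage of training where compression holds.

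Next I would argue that the information contributed by $x_{\mathit{diff}}$ to the training process is already redundantly present through $x_{\mathit{twin}} \in D'$: the gradients, loss contributions, and any other update signals derivable from $x_{\mathit{diff}}$ coincide with those derivable from $x_{\mathit{twin}}$. Consequently, the training trajectory on $D'$ reaches a model that is functionally indistinguishable from the one trained on $D$ on any input, i.e.\ $\mathcal{M}(D)$ and $\mathcal{M}(D')$ induce the same output distribution. This immediately yields $\mathbb{P}(\mathcal{M}(D) \in Y) = \mathbb{P}(\mathcal{M}(D') \in Y)$, which satisfies the $\varepsilon_p$-DP inequality trivially. Since the choice of $x_{\mathit{diff}}$ was arbitrary, the bound holds for all adjacent pairs.

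The main obstacle is making the informal step \emph{``identical representations imply identical training outcomes''} rigorous: one must ensure that perfect compression is preserved along the training trajectory on $D'$, not just at the final model trained on $D$. I would handle this either by treating perfect compression as a property of the learned representation space that is stable to removing a single redundant example (since the residual translation equivalent $x_{\mathit{twin}}$ suffices to pin down the shared representation), or by observing that any optimizer that converged to a perfectly compressed minimum on $D$ has a matching perfectly compressed minimum on $D'$ by symmetry, so the two training runs can be coupled to produce the same output distribution. Under either reading the DP inequality is satisfied, establishing the theorem.
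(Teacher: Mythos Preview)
Your approach is genuinely different from the paper's. The paper does not argue via redundancy of gradient signals or coupling of training trajectories on $D$ and $D'$. Instead, it observes that perfect compression gives every data point $|L|-1$ representationally identical copies in the model's activation space, which is exactly $k$-anonymity with $k=|L|$ and all representation dimensions as quasi-identifiers. It then argues that because this $k$-anonymity arises from a randomized learning procedure (random initialization, random sampling) rather than a deterministic transformation, the anonymization entails differential privacy, concluding that $\mathcal{M}_D$ is $0$-differentially private. So the paper reasons about indistinguishability in the \emph{output representation space}, whereas you reason about indistinguishability of the \emph{training process} on adjacent datasets.

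Your route is more elementary in spirit and avoids leaning on the ``$k$-anonymity $\Rightarrow$ DP'' implication, which is not a general theorem and requires the side conditions the paper invokes. However, the obstacle you flag is a real gap, not a technicality. From ``$x_{\mathit{diff}}$ and $x_{\mathit{twin}}$ have identical representations in $\mathcal{M}_D$'' you cannot conclude that training on $D$ and on $D'$ yield the same model distribution: $D$ contains $|L|$ copies of that semantic content while $D'$ contains $|L|-1$, so even if per-example gradients coincide, the aggregate loss, its minimizers, and the optimizer trajectory can differ. Your two suggested fixes (stability of compression under removal; symmetry of minima plus coupling) are assertions rather than arguments, and neither follows from the hypothesis, which only constrains the final model on $D$, not the dynamics or the model on $D'$. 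The paper sidesteps this entirely by never comparing training runs; it stays in the representation space of the already-trained $\mathcal{M}_D$ and lets the $k$-anonymity bridge carry the weight.
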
 

\begin{proof}
Since $\mathcal{M}_D$ is perfectly compressed, the representation of $i_j$ is identical to $i_q$ at any layer $l$, i.e., $\mathcal{M}_D^l(i_j)=\mathcal{M}_D^l(i_q)$. This gives us strong $k$-anonymity \citep{10.1145/2414456.2414474} in the representation space of $\mathcal{M}_D$, with $k = |L|$ and all dimensions as quasi-identifiers. Since $k$-anonymity is not obtained through a deterministic (reversible) procedure, but a randomly initialized learning procedure with random sampling, and since our attributes are randomly initialized, {$k$-anonymization} entails differential privacy in our setting.\footnote{The procedure also is not dependent on any individual input, because all individual data properties are either random (from initialization) or $k$-anonymous, by construction.}
$\mathcal{M}_D$, given perfect compression and convergence, is 0-differentially private, i.e., the probability distribution of $\mathcal{M}_D$ is unaffected by the removal of any single row.
\end{proof}

It follows directly from perfect compression that $\mathcal{M}_D$ is also linguistically fair because identical representations imply identical performance across languages. It is therefore an immediate corollary of the above result that a linguistically fair model can be differentially private. 

While the assumptions of a perfectly compressed model and clean multi-parallel dataset rarely hold up in practice and there is no obvious way to satisfy them while maintaining utility, the practical significance of this result is a reminder that multilingual training converges toward $k$-anonymization, and that safe $k$-anonymization of the representation space, if obtained, would provide us differential privacy. In the absence of strong guarantees, increasing the number of training languages (larger $k$) would strengthen privacy \citep{10.1145/2414456.2414474}. Our empirical results below (\S\ref{sec:results}) suggest that we can often obtain strong privacy and strong compression, but at the cost of performance.  

\section{Experimental Setup}
\label{sec:exp_setup}

In our experiments, we investigate the relation between the performance and multilingual compression of fine-tuned multilingual language models, and their privacy and training data influence. We rely on a commonly used multilingual pretrained language model, which we fine-tune with different levels of ($\varepsilon$, $\delta$)-differential privacy
on two common NLP tasks and evaluate using metrics of compression and training data influence.\footnote{For completeness, we explain the difference between $\varepsilon$-DP and ($\varepsilon$, $\delta$)-DP in Appendix~\ref{sec:eps_delta_dp}.} This section presents the pretrained language model, the tasks, the training protocol, the metrics of compression and training data influence, and the evaluation procedure. 

\paragraph{Model} We use a pretrained XLM-R Base \citep{conneau-etal-2020-unsupervised}, which is a 12-layer encoder-only transformer with {\raise.17ex\hbox{$\scriptstyle\sim$}}277M parameters and 250k vocabulary size trained on CC-100 (100 languages) via masked language modeling.

\paragraph{Tasks and Data} We fine-tune in a zero-shot cross-lingual transfer setting for part-of-speech (POS) tagging and natural language inference (NLI). 
Why these tasks? First, while POS tagging is driven by lower-level syntactic features, NLI requires a higher-level understanding \citep{lauscher-etal-2020-zero}. Second, we can leverage \emph{multi-parallel} corpora for multilingual fine-tuning and zero-shot cross-lingual transfer in both tasks, which helps eliminate confounders.\footnote{One limitation of this selection is that we only consider classification but no generative tasks, which could be worth exploring in the future.}

For POS tagging, we use the Parallel Universal Dependencies (PUD) treebank from Universal Dependencies (UD) v2.8 \citep{nivre-etal-2020-universal, zeman-etal-2021-ud}, which contains 1000 sentences parallel across 15 languages. We train in 7 of these languages (\textsc{fr}, \textsc{it}, \textsc{ja}, \textsc{pt}, \textsc{th}, \textsc{tr}, \textsc{zh}),\footnote{See Table~\ref{tab:languages} for language details.} exclude English,\footnote{We exclude English to keep the number of languages balanced and because the combined corpus is already biased towards Indo-European with Latin scripts (see Table~\ref{tab:languages}).} and use the remaining 7 languages (\textsc{ar}, \textsc{de}, \textsc{es}, \textsc{hi}, \textsc{id}, \textsc{ko}, \textsc{ru}) for validation. This split ensures that (1) we both train and evaluate on typologically diverse language samples, (2) there exist additional UD v2.8 treebanks in our validation set languages that we can harness for testing, and (3) there exist parallel sentences in our training set languages that we can harness to evaluate multilingual compression. We use the test splits of the following treebanks for testing: Arabic-PADT, German-GSD, Spanish-GSD, Hindi-HDTB, Indonesian-GSD, Korean-Kaist, and Russian-SynTagRus. Appendix Table~\ref{tab:treebank_sizes} lists the treebanks' sizes.\footnote{Regardless of test split size, each language contributes equally to the mean accuracy reported in Figure~\ref{fig:section3}.}

For NLI, we rely on the XNLI dataset \citep{conneau-etal-2018-xnli}, which contains (premise, hypothesis, label)-triplets multi-parallel across 15 languages. We, again, train in 7 of these languages (\textsc{bg}, \textsc{es}, \textsc{fr}, \textsc{hi}, \textsc{tr}, \textsc{vi}, \textsc{zh}), exclude the original English data, and validate in the remaining 7 languages (\textsc{ar}, \textsc{de}, \textsc{el}, \textsc{ru}, \textsc{sw}, \textsc{th}, \textsc{ur}). We train and validate our models on the original XNLI validation data (7500 examples per language), and we test the models on the original test data (15000 examples per language) in the validation set languages.

The idea to train and validate on the same sentences (in different languages) while testing on sentences from different treebanks (as we do for POS) or a different dataset split (as for XNLI) is to induce a slight distributional shift between validation and test data for the same language sample. This shift lets us evaluate the regularization strength of the gradient noise added by the DP-optimizer.

\paragraph{Training} We employ the standard fine-tuning procedures for token classification (POS) and sequence classification (XNLI) proposed by \citet{devlin-etal-2019-bert}. Similar to \citet{li-etal-2021-dp-lm}, we use DP-AdamW (i.e., the DP-SGD algorithm \citep{abadi-etal-2016-dpsgd} applied to the AdamW optimizer with default hyperparameters \citep{loshchilov2018decoupled, kingma-ba-2015-adam}) to train with ($\varepsilon$, $\delta$)-DP. We evaluate 6 different privacy budgets with ${\varepsilon \in \{1, 3, 8, 15, 30, \infty\}}$.\footnote{$\varepsilon = \infty$ refers to the standard, non-private setting.} We set $\delta = \frac{\num{1e-4}}{|D_{train}|}$ for POS, where $|D_{train}|=7000$ is the length of the training dataset, and $\delta = \num{1e-6}$ for XNLI.\footnote{We deliberately use a larger $\delta$ for XNLI because it turned out to be much harder to achieve convergence than for POS. Even with the looser DP bounds from $\delta = 1e-6$, we were unable to find a hyper-parameter setting for $\varepsilon = 1$ where the fine-tuned model was substantially better than random guessing.} The noise multiplier $\sigma$ corresponding to a particular ($\varepsilon$, $\delta$)-budget is determined numerically before training through binary search. Our implementation builds upon the optimized Opacus \citep{yousefpour-etal-2021-opacus} privacy engine by \citet{li-etal-2021-dp-lm}.\footnote{\href{https://github.com/lxuechen/private-transformers}{https://github.com/lxuechen/private-transformers}}\textsuperscript{,}\footnote{We do not use ghost clipping, their proposed technique to fit larger batches on the GPU at the cost of training time, as we can still fit sufficiently large batches on our GPUs without.} We use the R\'{e}nyi differential privacy \citep[RDP; ][]{mironov-2017-rdp, mironov-etal-2019-sampled} accountant with conversion to ($\varepsilon$, $\delta$)-DP \citep{canonne-etal-2020-discrete}. Hyper-parameter tuning on private data---which the POS and XNLI data in our study simulate---has been shown to incur additional privacy leakage \citep{liu-talwar-2019-private, papernot-steinke-2021-hyperparameter}. Therefore, we try to keep hyper-parameter tuning to a minimum and rely on sensible priors to select a suitable range of hyper-parameters. For POS, we find that the range of good hyper-parameters for non-private settings transfers well to private settings if we just use slightly higher learning rates. For XNLI, we select hyper-parameters such that the sampling rate matches that used by \citet{li-etal-2021-dp-lm} for NLI tasks in the GLUE benchmark \citep{wang-etal-2018-glue}.\footnote{The sampling rate $q = \frac{B_{\text{train}}}{|D_{\text{train}}|}$, $B$ denoting the batch size.} Accordingly, we train with a maximum sequence length of 128 for 10 epochs with a total batch size of 96 for POS and 30 epochs with batch size 512 for XNLI.\footnote{Note that using fixed-size batches technically breaks the privacy guarantees of RDP based on the Sampled Gaussian Mechanism \citep{mironov-etal-2019-sampled}. We follow the convention of using fixed-size batches, avoiding potential out-of-memory GPU issues, as a proxy for the true privacy spending and performance (see \citep{li-etal-2021-dp-lm} and Appendix~D.4 in \citep{tramer-2021-differentially}).} At each privacy budget, we train models (3 random initializations each) with 6 learning rates for POS (\num{1e-4}, \num{3e-4}, \num{5e-4}, \num{7e-4}, \num{1e-5}, \num{5e-5}, \num{7e-5}, \num{1e-6}) and 3 learning rates for XNLI (\num{3e-4}, \num{4e-4}, \num{5e-4} for private models and \num{9e-5}, \num{1e-4}, \num{2e-4} for non-private models). Based on the validation accuracy we then select the 5 best settings for each privacy level and task, listed in Appendix~\ref{sec:best_settings}. The learning rate is linearly decayed after 50 warm-up steps for POS and without warm-up for XNLI. We perform gradient clipping (per-sample in private settings) with a threshold of 0.1. Weight decay is set to 0.01.

\paragraph{Quantifying Multilingual Compression}
\label{sec:metrics_compression}
We present four metrics of multilingual compression: A common proxy task to measure the quality of cross-lingual representations is sentence retrieval \citep{artetxe-schwenk-2019-massively, dufter-schutze-2020-identifying, libovicky-etal-2020-language, ravishankar-sogaard-2021-impact, liu-etal-2021-preserving, maronikolakis-etal-2021-wine-v}. \citet{dufter-schutze-2020-identifying} quantify the degree of multilingual compression using bidirectional sentence retrieval precision
as follows:\footnote{Note that \citet{dufter-schutze-2020-identifying} also consider word alignment in their multilinguality score. We omit this task as it is not trivial to obtain ground truth alignments in our setup.}
\begin{equation}
\label{eq:retrieval_precision}
\setlength\abovedisplayskip{7pt}
\setlength\belowdisplayskip{7pt}
\mathrm{P} = \frac{1}{2m}\sum_{i=1}^{m}\mathds{1}_{\argmax_k R_{ik}=i} + \mathds{1}_{\argmax_k R_{ki}=i}.
\end{equation}

Here, $R \in \mathbb{R}^{m \times m}$ denotes the matrix of cosine similarities $R_{ij} = \mathrm{cos}(e_i^q,e_j^r)$ between the $m$ sub-word representations $e_i^q$ and $e_j^r$ from a LM at indices $i$ and $j$ for a set of parallel sentences in the languages $q$ and $r$.\footnote{The sub-word representations are taken from the LM's layer $l$ and mean-pooled over the sequence length (excluding special tokens).}

\citet{kornblith-etal-2019} propose to use linear centered kernel alignment (CKA) as a similarity index for neural network representations. It is defined as 
\begin{equation}
\setlength\abovedisplayskip{10pt}
\setlength\belowdisplayskip{10pt}
\mathrm{CKA}(X, Y) = \frac{\norm{Y^{\text{T}}X}_{F}^{2}}{\norm{X^{\text{T}}X}_{F}\norm{Y^{\text{T}}Y}_{F}}.
\end{equation}
For LMs, the matrices $X$ and $Y$ are obtained by mean-pooling $n$ sub-word representations at model layer $l$ \citep{conneau-etal-2020-emerging, glavas-vulic-2021-supervised}. Typically, $X$ and $Y$ correspond to the representations from two different models for identical examples \citep{kornblith-etal-2019, phang-etal-2021-fine}. We instead use the representations from a single model for a parallel sentence pair $(s_q, s_r)$ in languages $q$ and $r$ as $X$ and $Y$, respectively, to study the similarity of representations across languages, similar to \citet{muller-etal-2021-first} and \citet{conneau-etal-2020-emerging}. \citet{anonymous2022enhancing} also use CKA as a metric of compression. 

IsoScore \citep{rudman-etal-2021-iso} is an isotropy metric, computed as outlined in Appendix~\ref{a:definition_isoscore}, that quantifies the degree to which a point cloud uniformly utilizes the vector space.
In our context, this point cloud corresponds to the $n$ sub-word representations of all examples in a corpus at layer $l$. Prior work has shown that anisotropic representation spaces, such as the embedding spaces of large LMs \citep{ethayarajh-2019-contextual}, suffer from so-called \emph{representation degeneration} \citep{gao-etal-2019-representation}, and that the isotropy of a model's representation space correlates with its task performance \citep[][\textit{inter alia}]{zhou-etal-2019-getting, wang-etal-2020-improv, zhou-etal-2021-isobn, rajaee-pilehvar-2021-cluster}. High isotropy also means languages are not compartmentalized and should therefore correlate with high compression.

Representational similarity analysis \citep[RSA; ][]{kriegeskorte-etal-2008, edelman-1998} was originally introduced in the field of cognitive neuroscience to analyze the similarity of fMRI activity patterns, but it is also applicable to neural network representations \citep[][\textit{inter alia}]{bouchacourt-baroni-2018-agents, chrupala-2019-symbolic, chrupala-alishahi-2019-correlating, lepori-mccoy-2020-picking, he-etal-2021-effectiveness}, e.g., to analyze their similarity across languages. RSA measures the similarity between the representational geometries (i.e., the arrangement in the vector space) of two sets of representations. The representational geometry is determined through pairwise (dis)similarity metrics, and similarity is typically measured using a rank-based correlation metric such as Spearman's~$\rho$ \citep{diederichsen-kriegeskorte-2017-representational}.

\paragraph{Quantifying Training Data Influence}
Training data influence metrics can help us gain an understanding of the inner workings of a model \citep[][\textit{inter alia}]{10.5555/3305381.3305576, yeh-etal-2018-representer, charpiat-etal-2019-input, koh-etal-2019-accuracy, pruthi-etal-2020, basu-etal-2020-second, k-soegaard-2021-revisiting, zhang-etal-2021-sample, kong-chaudhuri-2021-understanding}. Such metrics are approximations of leave-one-out-influence. \citet{pruthi-etal-2020} proposed a both effective and practical method, called $\mathrm{TracInCP}$,\footnote{``CP'' stands for checkpoint; the method approximates $\mathrm{TracInIdeal}$, which is impractical to compute, through model checkpoints taken during training \citep{pruthi-etal-2020}.} to compute the influence of a training example $z$ on the model's prediction for another example $z'$, which could be a test example or $z$ itself (called the self-influence). The influence is computed as follows: 
\begin{equation}
    \setlength\abovedisplayskip{7pt}
    \setlength\belowdisplayskip{7pt}
    \mathrm{TracInCP}(z, z') = \sum_{i=1}^{k} \eta_i \nabla \ell (\theta_i, z) \cdot \nabla \ell (\theta_i, z'),
\end{equation} where $\eta_i$ is the learning rate and $\nabla \ell (\theta_i, z)$ is the gradient of the loss w.r.t. the model parameters $\theta_i$ and inputs $z$ for the $i$-th model checkpoint. We will use $\mathrm{TracInCP}$ as an approximation of training data influence in our experiments.

\begin{figure*}[t!]
    \centering
    \begin{subfigure}[b]{0.19\textwidth}
        \centering
        \includegraphics[width=\textwidth]{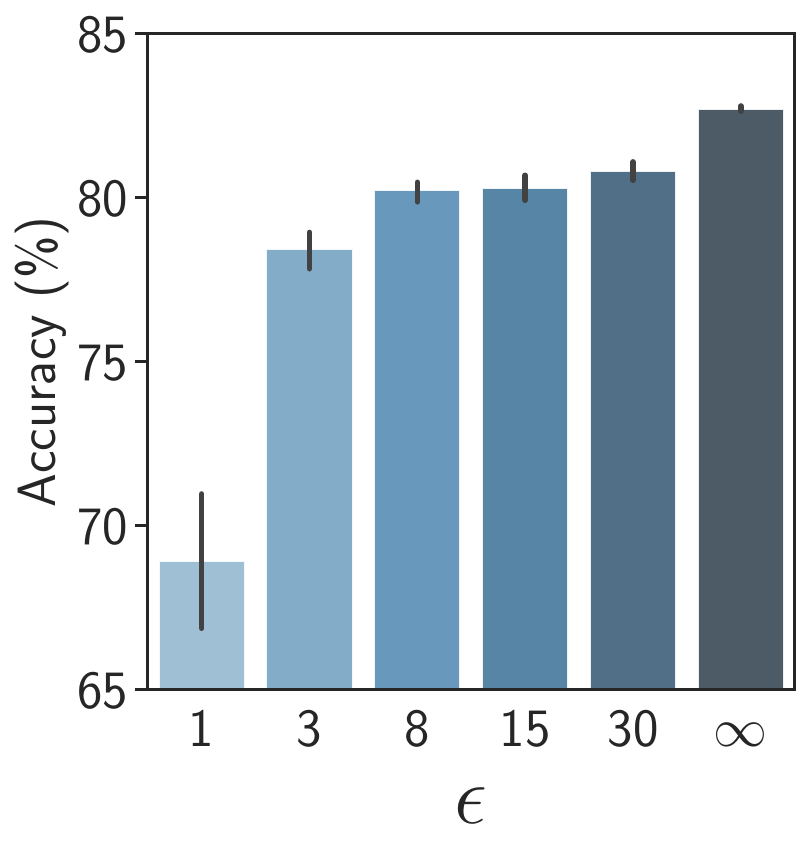}
        \caption{POS Performance}
        \label{fig:pos_acc}
    \end{subfigure}
    \begin{subfigure}[b]{0.192\textwidth}
        \centering
        \includegraphics[width=\textwidth]{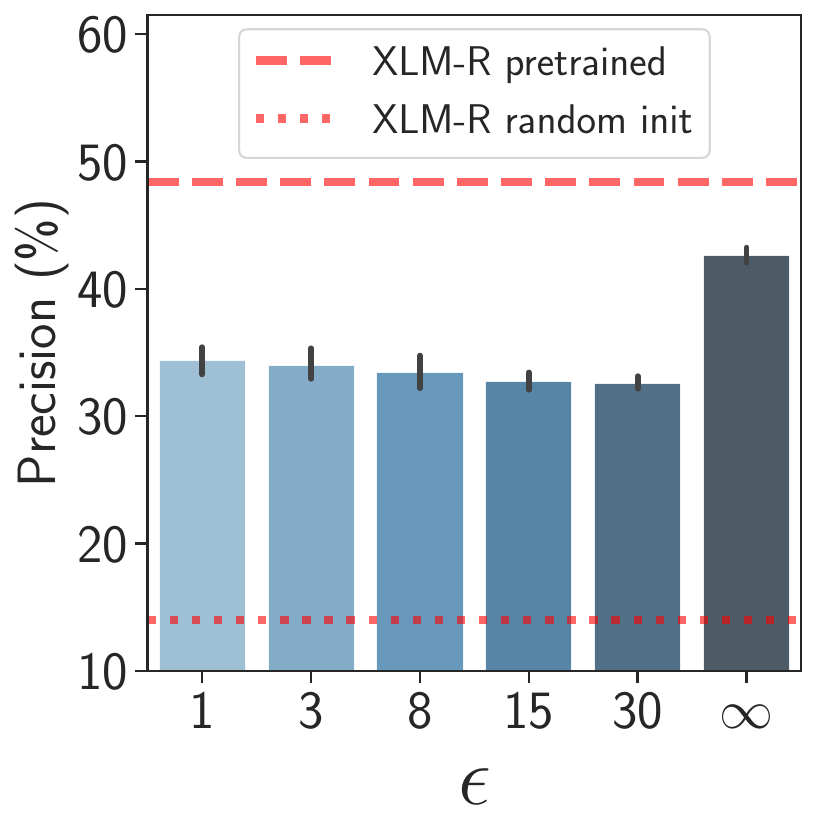}
        \caption{POS Retrieval}
        \label{fig:pos_compression}
    \end{subfigure}
    \begin{subfigure}[b]{0.1984\textwidth}
        \centering
        \includegraphics[width=\textwidth]{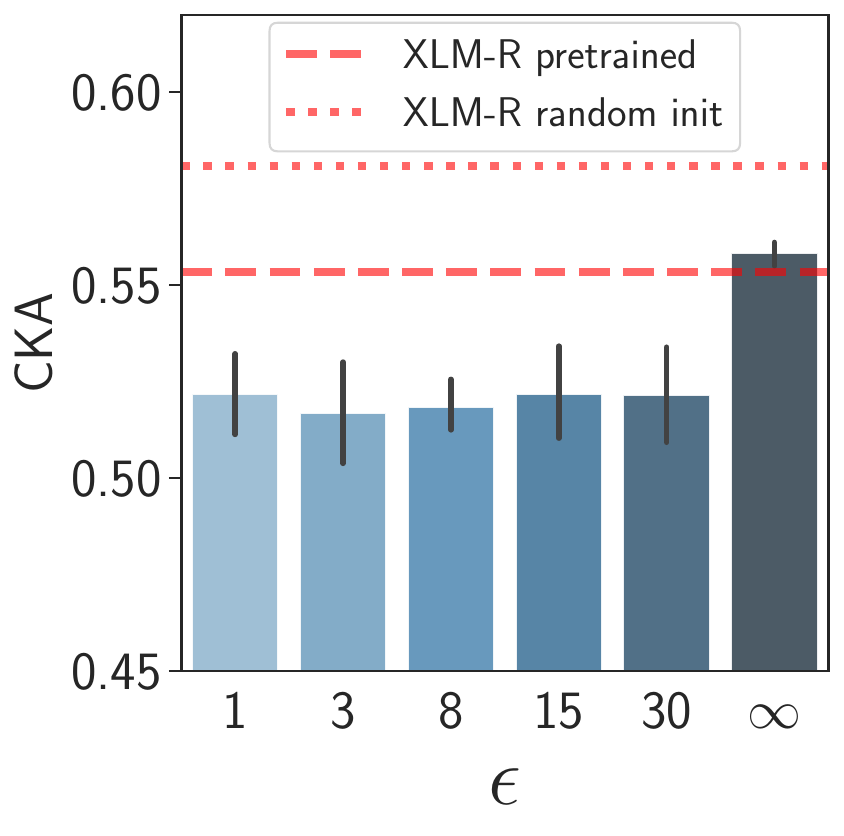}
        \caption{POS CKA}
        \label{fig:pos_cka}
    \end{subfigure}
    \begin{subfigure}[b]{0.196\textwidth}
        \centering
        \includegraphics[width=\textwidth]{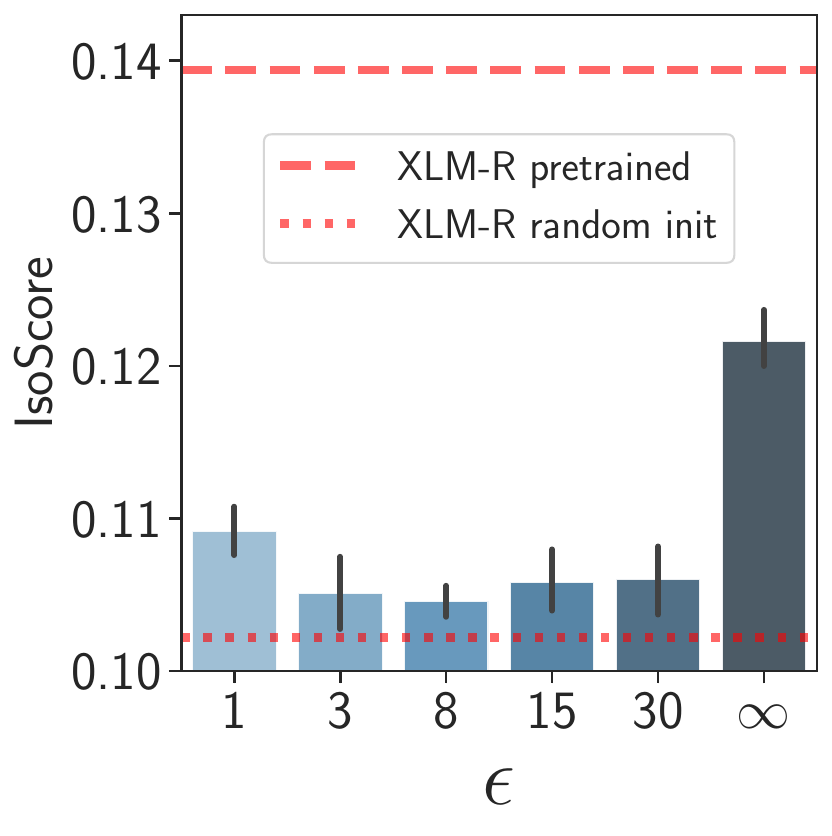}
        \caption{POS IsoScore}
        \label{fig:pos_iso}
    \end{subfigure}
    \begin{subfigure}[b]{0.1992\textwidth}
        \centering
        \includegraphics[width=\textwidth]{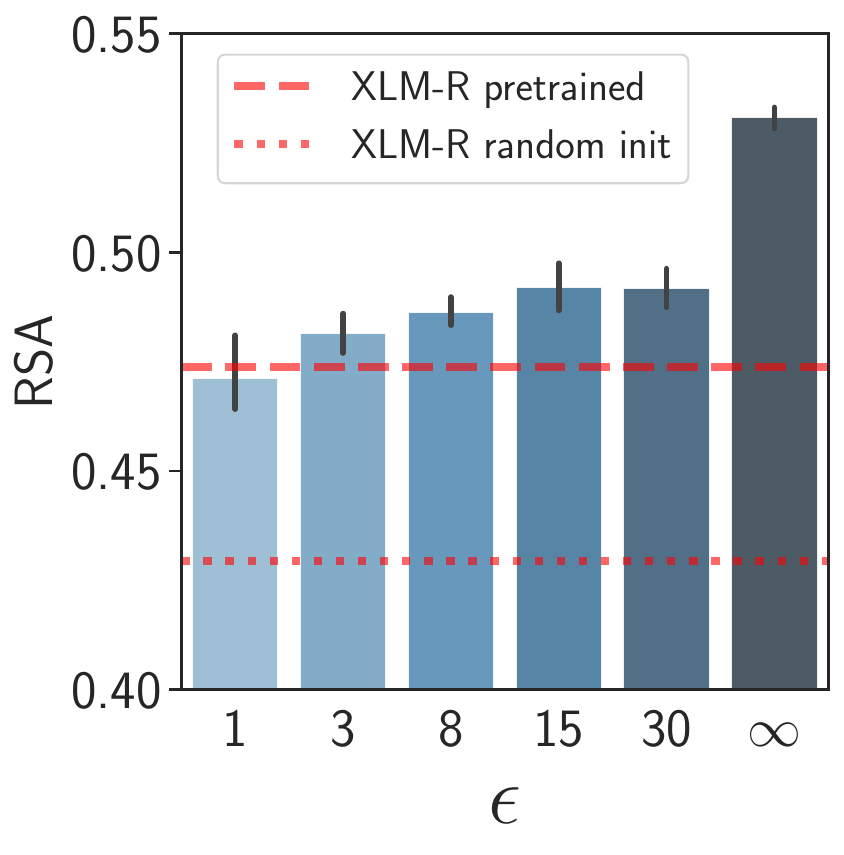}
        \caption{POS RSA}
        \label{fig:pos_rsa}
    \end{subfigure}
    \begin{subfigure}[b]{0.194\textwidth}
        \centering
        \includegraphics[width=\textwidth]{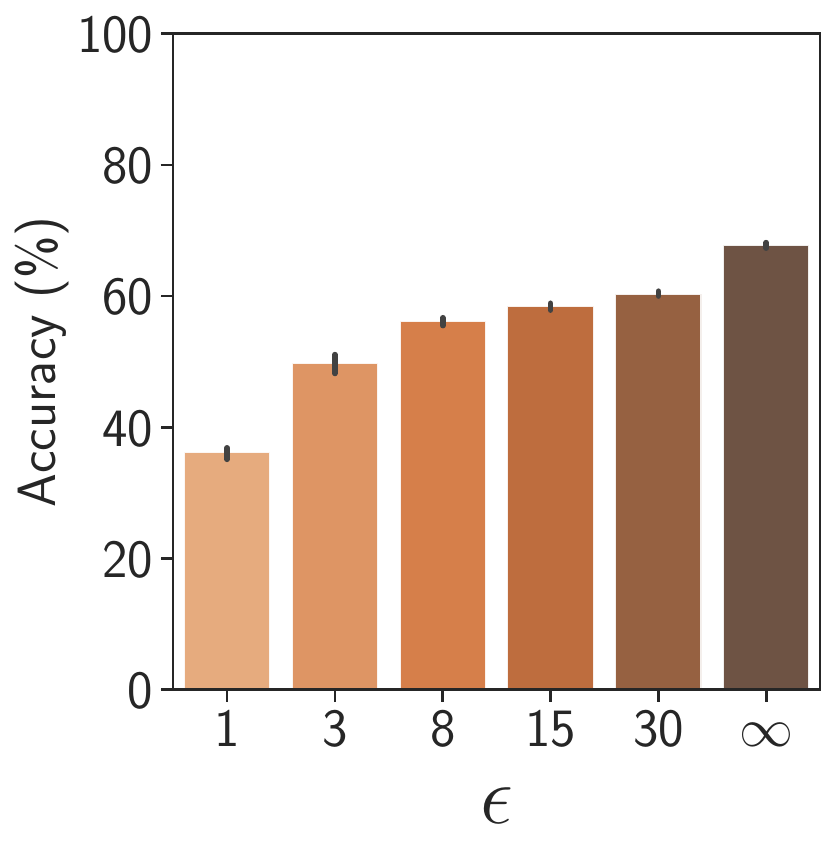}
        \caption{XNLI Performance}
        \label{fig:xnli_acc}
    \end{subfigure}
    \begin{subfigure}[b]{0.19\textwidth}
        \centering
        \includegraphics[width=\textwidth]{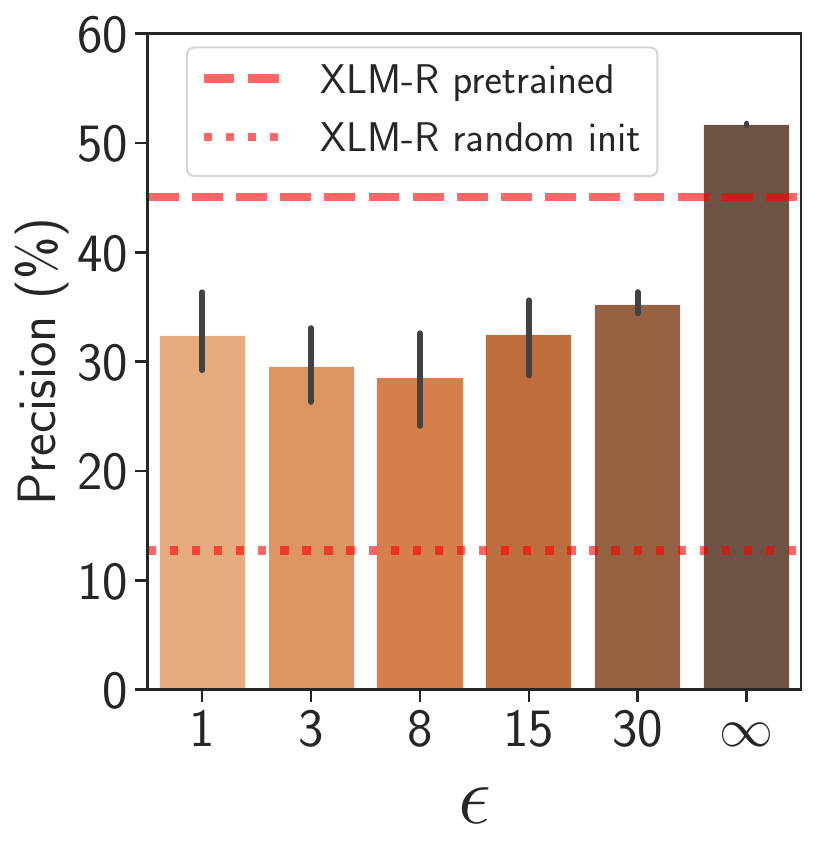}
        \caption{XNLI Retrieval}
        \label{fig:xnli_compression}
    \end{subfigure}
    \begin{subfigure}[b]{0.198\textwidth}
        \centering
        \includegraphics[width=\textwidth]{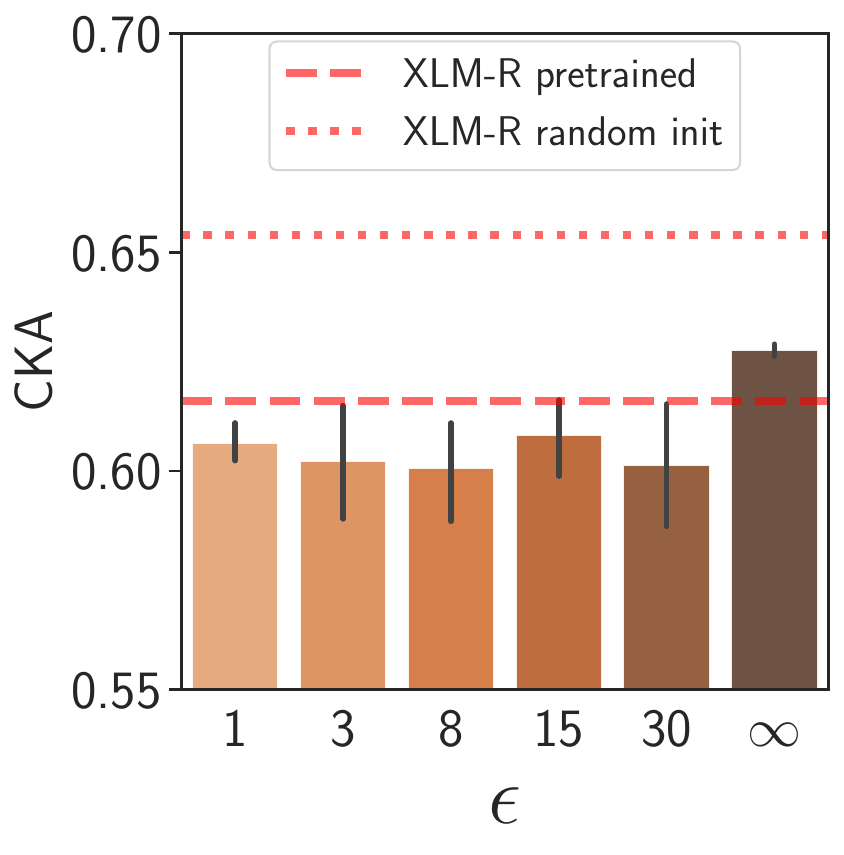}
        \caption{XNLI CKA}
        \label{fig:xnli_cka}
    \end{subfigure}
    \begin{subfigure}[b]{0.194\textwidth}
        \centering
        \includegraphics[width=\textwidth]{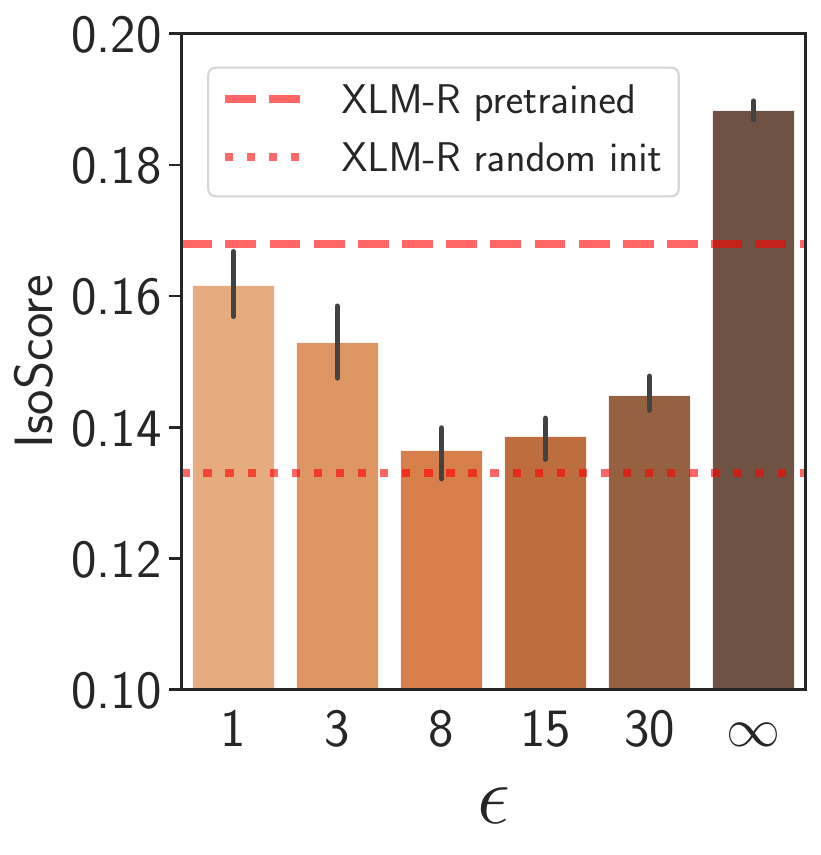}
        \caption{XNLI IsoScore}
        \label{fig:xnli_iso}
    \end{subfigure}
    \begin{subfigure}[b]{0.199\textwidth}
        \centering
        \includegraphics[width=\textwidth]{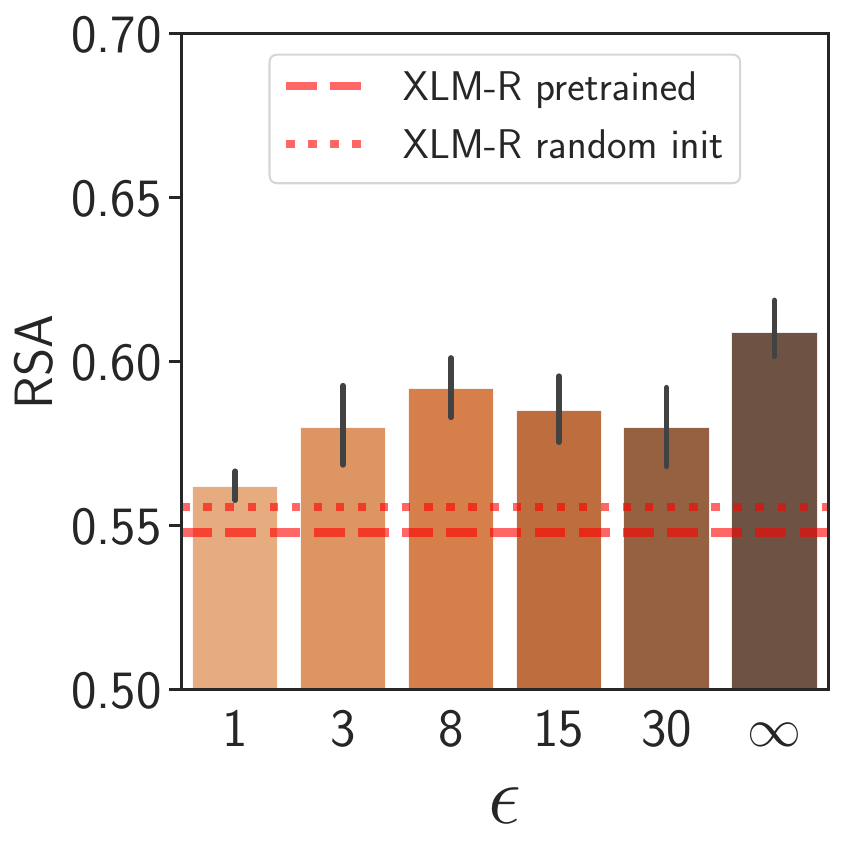}
        \caption{XNLI RSA}
        \label{fig:xnli_rsa}
    \end{subfigure}
    \caption{Task performance, sentence retrieval, CKA, IsoScore, and RSA results when fine-tuning with different privacy guarantees ($\infty$=non-private). We add the original pretrained XLM-R and XLM-R with randomly initialized weights for comparison. The results show how non-private fine-tuning balances multilingual compression and task performance. Strongly private fine-tuning ($\varepsilon=1$) is compatible with high compression (retrieval, CKA, IsoScore), but not with task performance. For medium levels of privacy (e.g., $\varepsilon=8$), we see the result of balancing privacy and task performance at the expense of multilingual compression.}
    \label{fig:section3}
\end{figure*}

\paragraph{Evaluation} We evaluate our models both during and after fine-tuning. For POS, we evaluate every 100 steps, and for XNLI, every 200 steps. We measure zero-shot cross-lingual transfer performance on the validation and test data by accuracy (token-level for POS and sequence-level for XNLI). To account for randomness, we take the mean of the best 5 seeds for each privacy budget.

The measures of multilingual compression (sentence retrieval precision, CKA, IsoScore, RSA) are computed using distinct evaluation corpora comprising parallel sentences for all languages pairs in the respective training set language sample. For models trained on XNLI, we use 3000 sentence pairs per language pair from the TED~2020 corpus \citep{reimers-gurevych-2020-making} and 3500 pairs from the WikiMatrix dataset \citep{schwenk-etal-2021-wikimatrix}. For models trained for POS, we use 3500 pairs from TED~2020, 3500 pairs from WikiMatrix, and 900 pairs from Tatoeba,\footnote{\href{https://tatoeba.org}{https://tatoeba.org}}\textsuperscript{,}\footnote{We extract sentence pairs from Tatoeba using the tatoebatools library (\href{https://github.com/LBeaudoux/tatoebatools}{https://github.com/LBeaudoux/tatoebatools}).}\textsuperscript{,}\footnote{We exclude \textsc{th} from the WikiMatrix and Tatoeba evaluation sets for POS as there are insufficiently many sentence pairs available between \textsc{th} and the remaining languages.} numbers chosen based on availability and memory usage.

Following \citet{dufter-schutze-2020-identifying}, we evaluate the models at layers 0 and 8, which complement each other well with regard to the properties they capture, e.g., multilinguality and task-specificity \citep{choenni2020does, de-vries-etal-2020-whats, muller-etal-2021-first}.
We compute the sentence retrieval precision between language pairs and take the mean.\footnote{Sentence retrieval is bidirectional (see Eq.~\ref{eq:retrieval_precision}). Given $|L|$ languages, we therefore average over the full $\mathbb{R}^{|L| \times |L|}$ language pair matrix, only excluding the main diagonal.}
The IsoScore is computed for the contextualized representations of all examples in the respective corpus at once. In contrast, CKA and RSA scores are also computed per language pair, and then averaged across those.\footnote{CKA and RSA are symmetrical. Given $|L|$ languages, we thus only use the upper triangle of the $\mathbb{R}^{|L| \times |L|}$ language pair matrix, still excluding the main diagonal.}
For RSA, we use ${\mathrm{D} = 1 - \text{Spearman's}\,\rho}$ and ${\mathrm{S} = \text{Spearman's}\,\rho}$ as the dissimilarity and similarity metrics, respectively.\footnote{This is consistent with the results of \citet{zhelezniak-etal-2019-correlation} and \citet{lepori-mccoy-2020-picking} showing that ${\text{Spearman's}\,\rho}$ is more suitable for RSA with embeddings than conventional similarity metrics such as cosine similarity.} Finally, we average results for all four metrics across TED~2020, WikiMatrix, and Tatoeba, the two layers, and the 5 best seeds for each privacy budget. For comparison, we also compute all metrics for the original pretrained and a randomly initialized XLM-R model.

\section{Results} 
\label{sec:results}

\paragraph{Privacy, Compression, Performance}
\label{sec:question1}
We now empirically investigate the relationship between differential privacy, multilingual compression, and cross-lingual transfer performance. We present aggregated results in Figure~\ref{fig:section3} and non-aggregated results in Appendix~\ref{sec:details_q1}. We observe that the zero-shot accuracy {\em decreases} as we fine-tune with stronger privacy guarantees (Figures~\ref{fig:pos_acc} and \ref{fig:xnli_acc}), which is expected due to the \textit{privacy--utility tradeoff} \citep{geng-etal-2020-analysis}. In particular, the relatively small sizes of our training datasets make private LM fine-tuning more challenging  \citep{kerrigan-etal-2020-differentially, habernal-2021-differential, senge-etal-2021-size, yu-etal-2021-differentially} because, for a fixed number of update steps, the gradient noise added per update step grows as the size of the training dataset decreases \citep{tramer-2021-differentially, mcmahan-etal-2018-learning}. Note that although the private models tend to underperform the non-private models by a large margin on the validation set ($>$30\% for XNLI, as shown in Appendix Table~\ref{tab:xnli_full_acc}), the performance gap on the test set is noticeably smaller, showing that training with differential privacy, like other noise injection schemes \citep{6796505}, is also a form of regularization. 

Figures~\ref{fig:pos_compression} and \ref{fig:xnli_compression} display sentence retrieval precision when fine-tuning with different privacy budgets. The highest compression is achieved by the non-private models. The second-highest compression is achieved for $\varepsilon = 1$, our most private models. Both suggest non-linear privacy--compression interactions, with POS showing lowest compression for $\varepsilon=30$ (or higher) and XNLI showing lowest compression for $\varepsilon=8$.
The results are very similar for IsoScore (Figures~\ref{fig:pos_iso}, \ref{fig:xnli_iso}) and also similar, albeit less pronounced for CKA (Figures~\ref{fig:pos_cka}, \ref{fig:xnli_cka}).\footnote{The randomly initialized XLM-R model shows high CKA scores. This is explained by the high dimensionality ($d=768$) of the contextualized representations, considering that CKA saturates with increasing network width \citep{kornblith-etal-2019}, and the high centroid similarity of random activations.} RSA, in contrast, exhibits very low scores for highly private models; see Appendix~\ref{sec:rsa_details}.

These results show that we can achieve \textit{strong compression and strong performance at the cost of privacy} ($\varepsilon = \infty$), \textit{strong compression and strong privacy at the cost of performance} ($\varepsilon = 1$), or \textit{trade-off performance and privacy at the cost of compression} (e.g., $\varepsilon = 8$). It may seem counter-intuitive that multilingual compression and cross-lingual transfer performance are not strictly correlated. However, in the fine-tuning setting, we can sacrifice task-specific knowledge in favor of multilingual compression, which leads to poor performance. Vice-versa, a model may exploit spurious correlations in the data to make correct predictions without actually relying on cross-lingual signal.
An example for the former case is the pretrained (but not fine-tuned) XLM-R, which scores highly in multilingual compression (as displayed in Figure~\ref{fig:section3}) but has poor cross-lingual transfer performance in the downstream tasks.

We also find that in some fine-tuning settings, e.g., $\varepsilon = \infty$, the multilingual compression surpasses that of the pretrained XLM-R. While \citet{{liu-etal-2021-preserving}} have previously shown that sentence retrieval performance typically drops (i.e., compression worsens) over the course of fine-tuning (which we confirm in Appendix Fig.~\ref{fig:retrieval_steps}), this finding clearly shows that there are exceptions. Future work may investigate this further. 

Lastly, retrieval and CKA scores are always highest between typologically similar languages and languages over-represented in pretraining (see Table~\ref{tab:languages} for a comparison across languages) \emph{across all levels of privacy}, as shown by the non-aggregated results in the Appendix Figures~\ref{fig:pos_retrieval_tedwm}--\ref{fig:xnli_cka_full}. This finding thus extends conclusions from prior work \citep{pires-etal-2019-multilingual, wu-dredze-2019-beto, k-etal-2020-ability, lauscher-etal-2020-zero} to private models.

\begin{figure*}[ht!]
    \centering
    \begin{subfigure}[b]{0.23\textwidth}
        \centering
        \includegraphics[width=\textwidth]{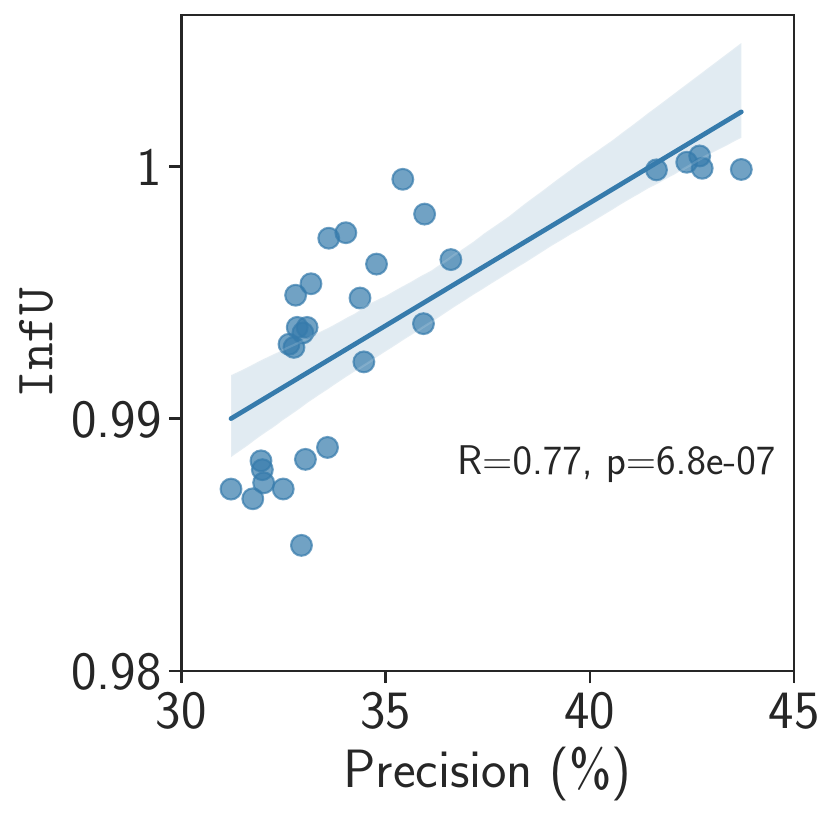}
        \caption{POS $\mathrm{InfU}$ -- Retrieval}
        \label{fig:pos_infu_retrieval}
    \end{subfigure}
    \begin{subfigure}[b]{0.24\textwidth}
        \centering
        \includegraphics[width=\textwidth]{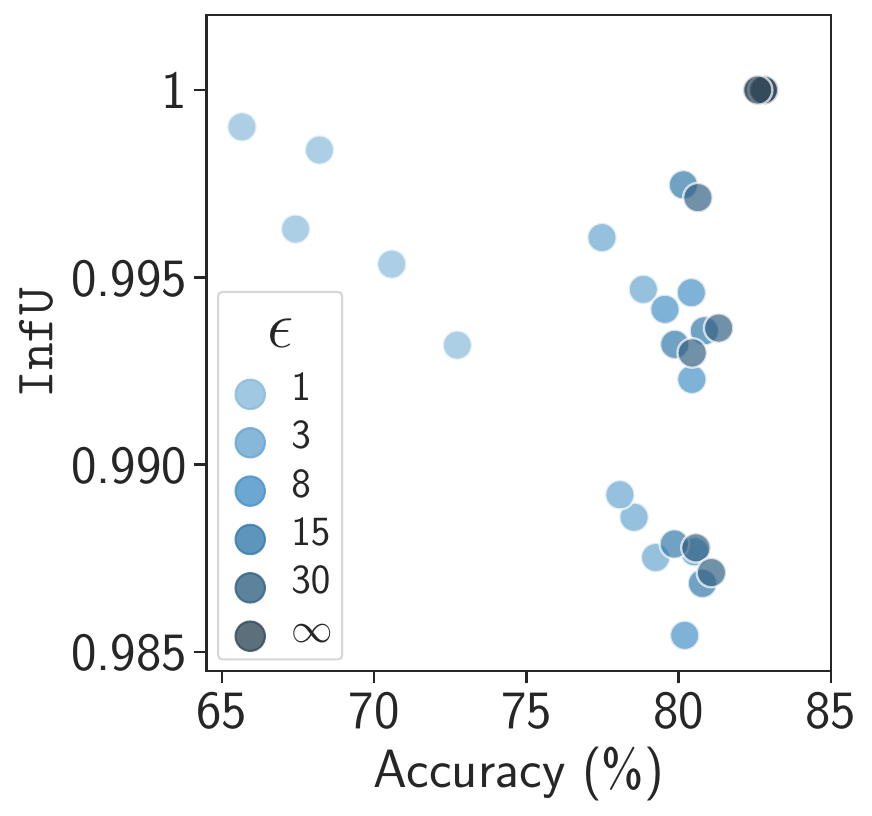}
        \caption{POS $\mathrm{InfU}$ -- Perf.}
        \label{fig:pos_infu_acc}
    \end{subfigure}
    \begin{subfigure}[b]{0.23\textwidth}
        \centering
        \includegraphics[width=\textwidth]{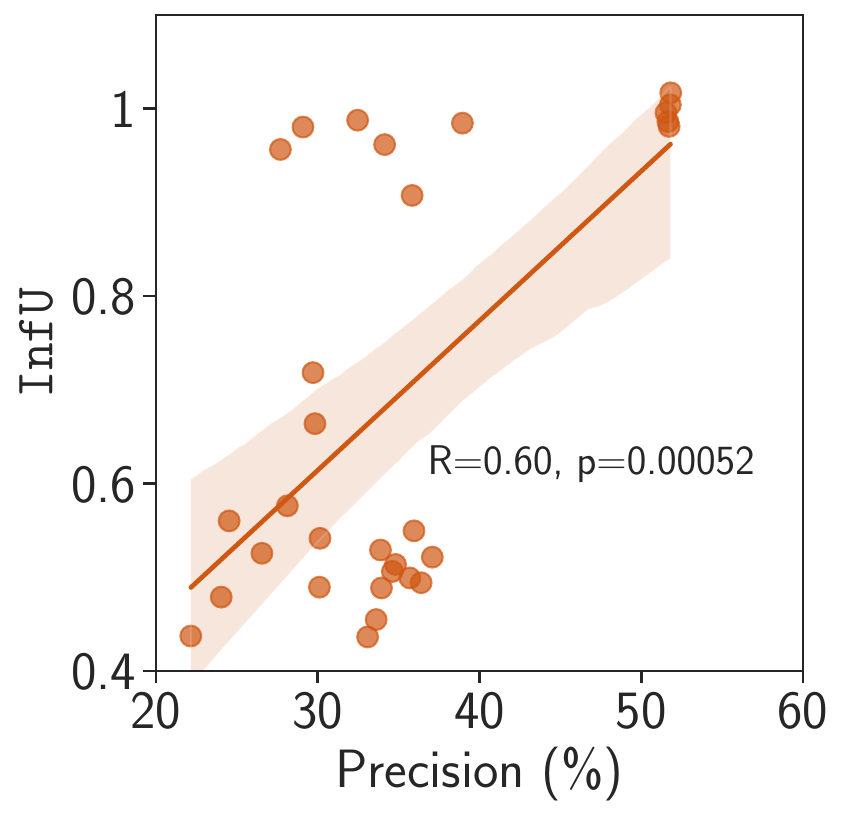}
        \caption{XNLI $\mathrm{InfU}$ -- Retrieval}
        \label{fig:xnli_infu_retrieval}
    \end{subfigure}
    \begin{subfigure}[b]{0.23\textwidth}
        \centering
        \includegraphics[width=\textwidth]{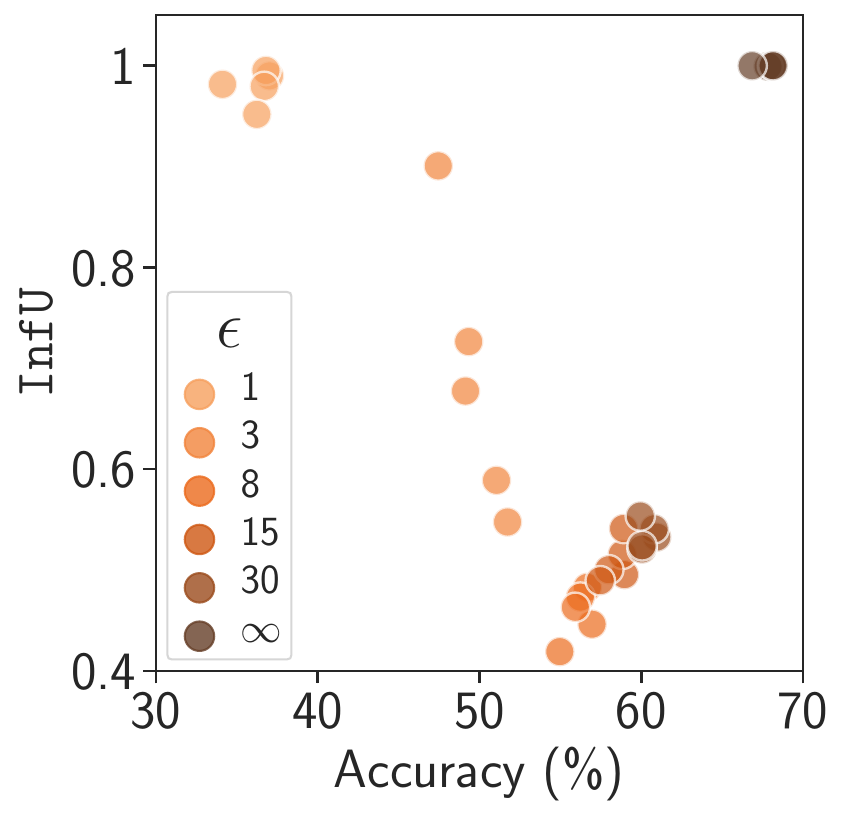}
        \caption{XNLI $\mathrm{InfU}$ -- Perf.}
        \label{fig:xnli_infu_acc}
    \end{subfigure}
    \caption{Linear fit and Pearson correlation between the influence uniformity $\mathrm{InfU}$ and sentence retrieval precision (\ref{fig:pos_infu_retrieval}, \ref{fig:xnli_infu_retrieval}) and  $\mathrm{InfU}$ versus downstream performance for different levels of privacy (\ref{fig:pos_infu_acc}, \ref{fig:xnli_infu_acc}). We see significant positive correlations between retrieval precision and $\mathrm{InfU}$, suggesting a negative correlation between multilingual compression and training data influence sparsity. For task performance, we see the trade-off between training data influence sparsity ($\mathrm{InfU}$) and privacy, which aligns with our theoretical expectations (\S\ref{sec:background}).
    }
    \label{fig:interpretability_plots}
\end{figure*}

\section{More multilingual, less interpretable?}
\label{sec:empirical_interpretability}

\paragraph{Metric} To answer this question, we introduce $\mathrm{InfU}$ (\textbf{Inf}luence \textbf{U}niformity), 
a measure of uniformity based on $\mathrm{TracInCP}$ influence scores for each training example in the multiparallel dataset $D=\{\ldots, i_1, \ldots, i_{|L|}, \ldots \}$, with $i_j$ and $i_k$ translation equivalents.
We compute $\mathrm{InfU}$ for $\mathcal{M}$ and the translation equivalents $i$ = \{$i_1, \ldots, i_{|L|}\}$ as follows:

\vspace{-4mm}
\begin{equation}
\setlength\abovedisplayskip{-5pt}
\setlength\belowdisplayskip{7pt}
\mathrm{InfU}
(i) = \frac{1}{|L|}\sum_{k = 1}^{|L|} \text{H}(\sigma(\mathrm{TracInCP}(i_k, i)))
\end{equation} 

where $H$ is the entropy with $log_{|L|}$ and $\sigma$ is a softmax used to obtain a probability distribution over influence scores. $\mathrm{InfU}$ is maximized ($\mathrm{InfU} = 1$) for uniform influence scores, fulfilling $\mathrm{TracInCP}(i_{j}, i_{k}) = \mathrm{TracInCP}(i_{q}, i_{r})$, $\forall j,k,q,r \in L$. This means a perfectly multilingual model that yields equivalent representations for translation equivalent examples obtains $\mathrm{InfU} = 1$. 
In this scenario of maximum uniformity our model is also the least instance-interpretable because training data influence is minimally sparse, so we cannot easily identify influential examples for a prediction. 
We use $\mathrm{InfU}$ to study to what extent influence sparsity aligns with the metrics privacy and cross-lingual performance.

\paragraph{Setup} We use 1000 training examples and compute $\mathrm{TracInCP}$ scores from the last 3 model checkpoints, taken every 100 steps, with their corresponding learning rates.\footnote{Since the learning rate changes every training step, we use the learning rate from the end of each checkpointing interval.}

\paragraph{Results and Analysis} We plot the mean $\mathrm{InfU}$ against the mean sentence retrieval precision for our fine-tuned models and compute Pearson's $\mathrm{R}$ in Figures~\ref{fig:pos_infu_retrieval} and \ref{fig:xnli_infu_retrieval}. For both tasks, there is a significant ($p < 0.05$) strong positive correlation between the $\mathrm{InfU}$ score and multilingual compression as determined through sentence retrieval. This supports the idea that \emph{multilingual compression is at odds with training data influence}. See also how highly private and low-performing models score highly in $\mathrm{InfU}$ (Figures~\ref{fig:pos_infu_acc}, \ref{fig:xnli_infu_acc}); and non-private and high-performing models do the same. For medium levels of privacy we, however, see a trade-off characterized by lower $\mathrm{InfU}$, i.e., better instance-interpretability, and medium performance.
Strong \emph{privacy} guarantees, sparse training data influence estimates, and performance are incompatible, because the high-performing models are strictly low in privacy and training data influence sparsity, and the models high in privacy are strictly low in performance and training data influence sparsity.

\section{Related Work}
While privacy, fairness, and interpretability \emph{individually} have enjoyed ample attention from the research community in recent years \citep{liu-etal-2021-when, mehrabi-etal-2021-survey, soegaard-2021-survey}, the interactions between these objectives have not been explored much \citep{ruder-etal-2022-square}. Some prior work has focused on the interactions between group fairness and differential privacy, suggesting that the two objectives are at odds, although this relationship also depends on the selected notion of fairness \citep{bagdasaryan-etal-2019-differential, cummings-etal-2019, chang-shokri-2021, hansen-etal-2022-impact}. Somewhat in contrast to this work, we show that linguistic fairness (group fairness over linguistic communities) and differential privacy may align for multilingual language models. Furthermore, \citet{naidu2021differential} and \citet{shokri-etal-2021-risks} have studied the interaction between privacy and feature attribution methods for model explainability. While the former show that privacy and feature attribution methods can align, the latter find that model explanations are at risk of membership inference attacks. 
Closest to our work is contemporaneous work by \citet{strobel-shokri-2022} who discuss the interactions of data privacy with fairness, explainability, and robustness. Our work differs from theirs in that we are particularly concerned with multilingual language models and we consider instance-based interpretability methods while they consider feature attribution methods. \citet{strobel-shokri-2022} also call for more research at the intersection of different objectives rather than working on one at a time.

\section{Conclusion}
We presented a preliminary investigation
of how multilingual compression, differential privacy, training data influence, and linguistic fairness interact in multilingual models.
We found that privacy and influence are incompatible, while privacy and linguistic fairness, often said to be at odds, are theoretically compatible through multilingual compression. We also explored these interactions empirically. Our results support the idea that high multilingual compression can be achieved either while optimizing for performance or while optimizing for privacy, but that by trading off privacy and performance, we compromise compression. Finding practical trade-offs between {\em all} these dimensions remains an open challenge.
Finally, we introduced a new diagnostic metric, influence uniformity, which we used to validate that privacy and training data influence sparsity are incompatible, and that the interactions between privacy, training data influence sparsity, and multilingual compression are, therefore, also non-linear.

\section*{Ethical Aspects and Broader Impact}
It is crucial that NLP goes beyond performance and studies the interaction of objectives such as privacy, interpretability, and fairness, also in multilingual NLP \citep{ruder-etal-2022-square}. Our work aims to provide a starting point for further research in this area. Our empirical investigation, including the models we train, fully relies on publicly available models and data. Moreover, we do not create any new datasets. Therefore, we foresee no misuse of the results of our work.

\section*{Acknowledgements}
We thank the anonymous reviewers and members of the CoAStaL group for their helpful feedback and suggestions. Phillip Rust is funded by the Novo Nordisk Foundation (grant NNF 20SA0066568).

\bibliography{anthology,custom}
\bibliographystyle{icml2023}

\appendix
\onecolumn

\section{Reproducibility}
We make our code available at \url{https://github.com/xplip/multilingual-lm-objectives}.

\paragraph{Implementation}
Our implementation is written in PyTorch v1.10.0 \citep{paszke-etal-2019-pytorch} for Python 3.9.5 and builds on code from the following repositories:
\begin{itemize}
    \itemsep-0.2em 
    \item \href{https://github.com/huggingface/transformers}{https://github.com/huggingface/transformers} v4.9.2 \citep{wolf-etal-2020-transformers} for model training and evaluation
    \item \href{https://github.com/lxuechen/private-transformers}{https://github.com/lxuechen/private-transformers} v0.1.0 \citep{li-etal-2021-dp-lm} for DP-training 
    \item \href{https://github.com/pdufter/minimult}{https://github.com/pdufter/minimult} \citep{dufter-schutze-2020-identifying} for computing sentence retrieval precision
    \item \href{https://github.com/jayroxis/CKA-similarity}{https://github.com/jayroxis/CKA-similarity} for computing CKA scores
    \item \href{https://github.com/mlepori1/Picking\_BERTs\_Brain}{https://github.com/mlepori1/Picking\_BERTs\_Brain} \citep{lepori-mccoy-2020-picking} for computing RSA scores
    \item \href{https://github.com/bcbi-edu/p\_eickhoff\_isoscore}{https://github.com/bcbi-edu/p\_eickhoff\_isoscore} \citep{rudman-etal-2021-iso} for computing IsoScores
    \item \href{https://github.com/FengNiMa/VAE-TracIn-pytorch}{https://github.com/FengNiMa/VAE-TracIn-pytorch} \citep{kong-chaudhuri-2021-understanding} for computing $\mathrm{TracInCP}$ scores.
\end{itemize}

\paragraph{Models}
We primarily use the pretrained XLM-RoBERTa \citep[XLM-R; ][]{conneau-etal-2020-unsupervised} base model and tokenizer from \href{https://huggingface.co/xlm-roberta-base}{https://huggingface.co/xlm-roberta-base}. XLM-R (base) is a 12-layer encoder-only transformer with a vocabulary size of 250k and {\raise.17ex\hbox{$\scriptstyle\sim$}}277M total parameters pretrained via masked language modeling on the 100-language CC-100 dataset.

In Appendix~\ref{mbert-results}, we further conduct experiments with multilingual BERT \citep[mBERT; ][]{devlin-etal-2019-bert}, using the base model and tokenizer from \href{https://huggingface.co/bert-base-multilingual-cased}{https://huggingface.co/bert-base-multilingual-cased}. mBERT is a 12-layer encoder-only transformer with a vocabulary size of 120k and {\raise.17ex\hbox{$\scriptstyle\sim$}}177M total parameters pretrained via masked language modeling on Wikipedia data in 104 languages.

\paragraph{Data}
We provide download links and references for the various datasets we used in Table~\ref{tab:dataset_links}.

\paragraph{Hardware}
We train on single Nvidia Titan RTX, A100 (both with CUDA version 11.0), and RTX 3090 (with CUDA version 11.5) GPUs. All machines have at least 64GB of RAM, which is required to compute the IsoScore for our larger evaluation sets (e.g., TED 2020 for POS).

\paragraph{Runtime}
Fine-tuning with evaluation during training on the Titan RTX, which is the slowest of the GPUs used, takes 2--3 hours for POS and 5--6 hours for XNLI. Computing $\mathrm{TracInCP}$ influence scores for one fine-tuned model takes about 30--45 minutes.

\paragraph{Carbon Footprint}
Our fine-tuning runs accumulated {\raise.17ex\hbox{$\scriptstyle\sim$}}36 compute days on the hardware mentioned above (most experiments were conducted on the less powerful Titan RTX GPUs) according to Weights \& Biases\footnote{\href{https://wandb.ai/}{https://wandb.ai/}}, where we logged our experiments. Although we do not have precise numbers, a highly conservative estimate of the total compute spent including prototyping, hyper-parameter search, and all our evaluations is {\raise.17ex\hbox{$\scriptstyle\sim$}}75 compute days.

\section{(\texorpdfstring{$\pmb{\varepsilon}$, $\pmb{\delta}$}{ε, δ})-Differential Privacy}
\label{sec:eps_delta_dp}

In \S\ref{sec:background}, we provide the definition of $\varepsilon$-differential privacy (DP), also called pure DP, as the basis for our theoretical exploration. In our experiments, we rely on ($\varepsilon$, $\delta$)-DP \citep{dwork-roth-2014-foundations}, also called approximate-DP, which is typically used in practice and relaxes the privacy guarantees by a (small) $\delta$ as follows:

\noindent
A randomized algorithm ${\mathcal{M}: \mathcal{D} \to \mathcal{Y}}$ is {\em ($\varepsilon$, $\delta$)-differentially private} \citep{dwork-etal-2006-dp} iff for all adjacent datasets ${D, D' \in \mathcal{D}}$ and all ${Y \subset \mathcal{Y}}$, ${\mathbb{P}(\mathcal{M}(D) \in Y)} \leq {\exp(\varepsilon_p)\cdot \mathbb{P}(\mathcal{M}(D') \in Y)} + \delta$.

\newpage 

\section{Best Fine-Tuning Settings}
\label{sec:best_settings}
As mentioned in \S\ref{sec:exp_setup}, we pre-selected a set of suitable learning rates (LRs) for each task and ran 3 random initializations each. Based on the validation performance, we then selected the following 5 best settings for each privacy budget and task:

\begin{table}[H]
\centering
\caption{Best 5 settings for each task and privacy budget. Includes LR and the corresponding number of random initializations (\# seeds).}
\label{tab:best_settings_pos}
\resizebox{0.48\textwidth}{!}{%
\begin{tabular}{@{}cll@{}}
\toprule
\textbf{$\pmb{\varepsilon}$} & \textbf{POS LR (\# Seeds)} & \textbf{XNLI LR (\# Seeds)}\\\midrule
1        & \num{5e-4} (2); \num{7e-4} (3) & \num{3e-4} (1); \num{4e-4} (2); \num{5e-4} (2)  \\
3        & \num{5e-4} (2); \num{7e-4} (3) & \num{3e-4} (1); \num{4e-4} (2); \num{5e-4} (2)                 \\
8        & \num{5e-4} (3); \num{7e-4} (2) & \num{4e-4} (2); \num{5e-4} (3)                 \\
15       & \num{3e-4} (1); \num{5e-4} (2); \num{7e-4} (2) & \num{3e-4} (1); \num{4e-4} (2); \num{5e-4} (2)                \\
30       & \num{3e-4} (1); \num{5e-4} (2); \num{7e-4} (2) & \num{3e-4} (1); \num{4e-4} (2); \num{5e-4} (2)                 \\
$\infty$ & \num{5e-5} (2); \num{7e-5} (2); \num{1e-4} (1) & \num{9e-5} (2); \num{1e-4} (3)                \\              \bottomrule
\end{tabular}%
}
\end{table}

\vfill

\section{IsoScore Algorithm}
\label{a:definition_isoscore}
Algorithm~\ref{alg:isoscore} describes the IsoScore algorithm \citep{rudman-etal-2021-iso}.

\begin{algorithm*}[hbtp]
\caption{IsoScore \citep{rudman-etal-2021-iso}}\label{alg:isoscore}
\begin{algorithmic}[1]
    \STATE \textbf{begin} Let $X \subset \mathbb{R}^n$ be a finite collection of points.
    \STATE \hspace{3mm} Let $X^{PCA}$ denote the points in $X$ transformed by the first $n$ principal components.
    \STATE \hspace{3mm} Define $\Sigma_{D} \in \mathbb{R}^n$ as the diagonal of the covariance matrix of $X^{PCA}$.
    \STATE \hspace{3mm} Normalize diagonal to $\hat{\Sigma}_D:=\sqrt{n} \cdot \Sigma_{D}/\norm{\Sigma_{D}}$, where $\norm{\cdot}$ is the standard Euclidean norm.
    \STATE \hspace{3mm} The isotropy defect is $\delta(X):={\norm{\hat{\Sigma}_{D} - \mathbf{1}}}/\sqrt{2(n - \sqrt{n})}$, where $\mathbf{1} = (1, \ldots, 1)^{\text{T}} \in \mathbb{R}^n$
    \STATE \hspace{3mm} $X$ uniformly occupies $\phi(X):=(n - \delta(X)^{2}(n -  \sqrt{n}))^{2}/n^{2}$ percent of ambient dimensions.
    \STATE \hspace{3mm} Transform $\phi(X)$ so it can take values in $[0, 1]$, via $\iota(X):=(n \cdot \phi(X) - 1)/(n - 1)$.
    \STATE \hspace{3mm} \textbf{return:} $\iota(X)$
    \STATE \textbf{end}
\end{algorithmic}
\end{algorithm*}

\vfill

\section{Further Analysis of RSA Results}
\label{sec:rsa_details}
As we see in \S\ref{sec:question1}, RSA aligns with sentence retrieval precision, CKA, and IsoScore in producing higher scores for non-private models. However, there is a mismatch between RSA and the other metrics in highly private regimes, where our most private models ($\varepsilon = 1$) do not exhibit high RSA scores. Instead, the aggregated RSA scores peak at medium levels of privacy ($\varepsilon \in \{8, 15\}$) and for the non-private ($\varepsilon = \infty$) models. Unlike for the other metrics, there is also no clear trend among our two tasks in terms of whether the pretrained or a randomly initialized XLM-R model scores higher in RSA.

A closer look at the non-aggregated results (Appendix Figures~\ref{fig:pos_rsa_tedwm}, \ref{fig:pos_rsa_tatoeba}, and \ref{fig:xnli_rsa_full}) shows how the similarity patterns obtained from RSA are often unexpected. For instance, the similarities between the typologically distant languages \textsc{fr} and \textsc{zh} are consistently high for the TED~2020 corpus whereas scores for typologically closer languages are lower (Fig.~\ref{fig:pos_rsa_tedwm}). Based on prior work by, for example, \citet{pires-etal-2019-multilingual}, \citet{wu-dredze-2019-beto}, and \citet{lauscher-etal-2020-zero},  we would expect the model to first compress similar languages before achieving compression for distant ones. Sometimes, we also observe extreme jumps in similarity between layers 0 and 8, for instance, between \textsc{it} and \textsc{tr} in the Tatoeba corpus (Fig.~\ref{fig:pos_rsa_tatoeba}). We do not find these jumps in CKA and sentence retrieval.

One reason why RSA scores may be more sensitive to stricter privacy guarantees (e.g., $\varepsilon=1$) is that the correlation between sentence vector distances is very sensitive to outliers. Differential privacy reduces the number of such outliers, effectively regularizing the correlation coefficients.

\newpage

\section{Multilingual BERT Results}
\label{mbert-results}

In Figures~\ref{fig:mbert_results_1} and \ref{fig:interpretability_plots_mbert}, we present results from re-running the experiments from \S\ref{sec:question1} and \S\ref{sec:empirical_interpretability} with mBERT. We make two changes to the experimental setup outlined above: We use representations extracted at layer 8, which showed to be more meaningful than layer 0 in the XLM-R experiments, to compute the multilinguality metrics. We also include two additional privacy settings, $\varepsilon=0.5$ and $\varepsilon=0.7$, as we found mBERT to be easier to finetune with strong privacy guarantees than XLM-R.

We see the same trends as for XLM-R: performance strictly increases with decreasing privacy while the multilinguality metrics tend to follow a U-shape,\footnote{We again refer to Appendix~\ref{sec:rsa_details} for a discussion of the RSA results.} i.e., they are high for strong privacy settings (small $\varepsilon$) and low privacy settings (large $\varepsilon$) and decrease towards medium privacy.
Likewise, we find a positive correlation between InfU and cross-lingual sentence retrieval precision. The correlation is strong for part-of-speech tagging (POS) but it is mild for XNLI. We believe this may be due to mBERT being less sensitive to the privacy parameter (Figure~\ref{fig:xnli_compression_mbert} is not symmetrical; considering even stronger privacy settings would likely even out the U-shape). Overall, these results further support our finding that there is a negative correlation between multilingual compression and training data influence sparsity.

\vfill 

\begin{figure*}[ht!]
    \centering
    \begin{subfigure}[b]{0.19\textwidth}
        \centering
        \includegraphics[width=\textwidth]{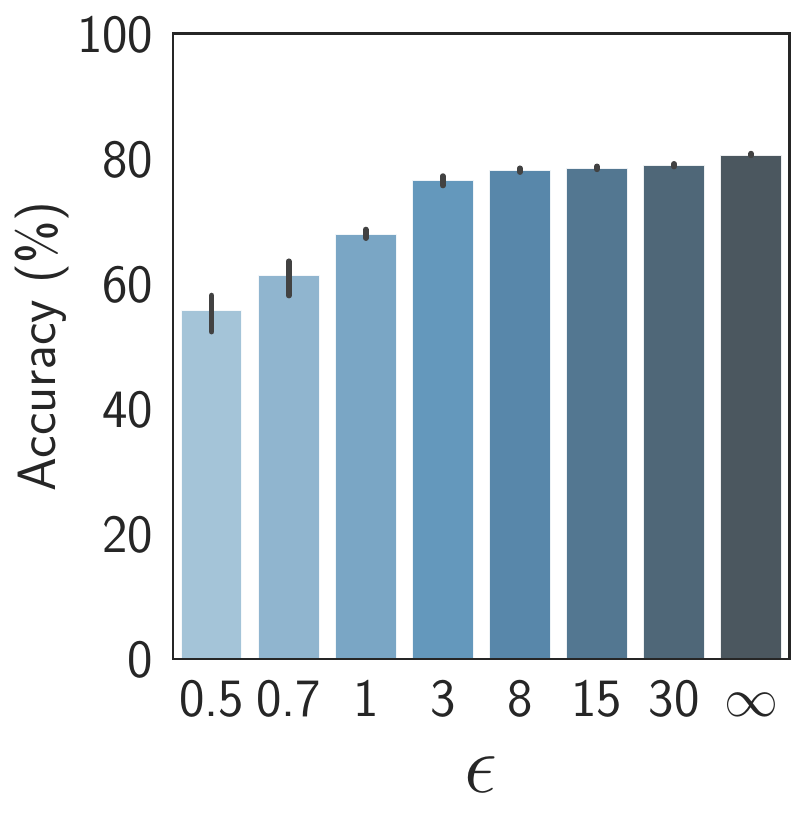}
        \caption{POS Performance}
        \label{fig:pos_acc_mbert}
    \end{subfigure}
    \begin{subfigure}[b]{0.189\textwidth}
        \centering
        \includegraphics[width=\textwidth]{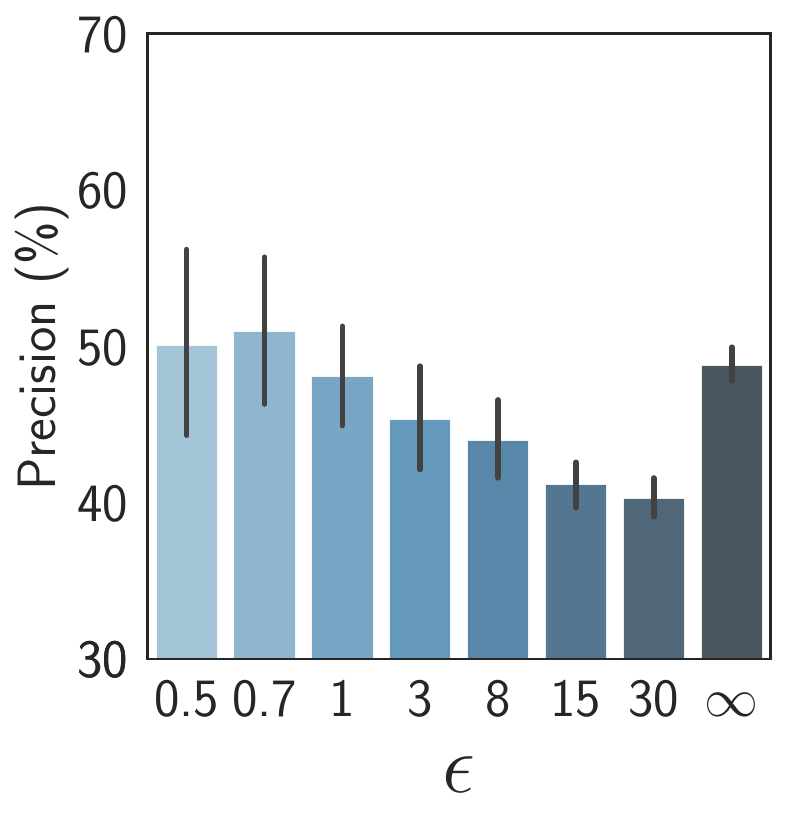}
        \caption{POS Retrieval}
        \label{fig:pos_compression_mbert}
    \end{subfigure}
    \begin{subfigure}[b]{0.1984\textwidth}
        \centering
        \includegraphics[width=\textwidth]{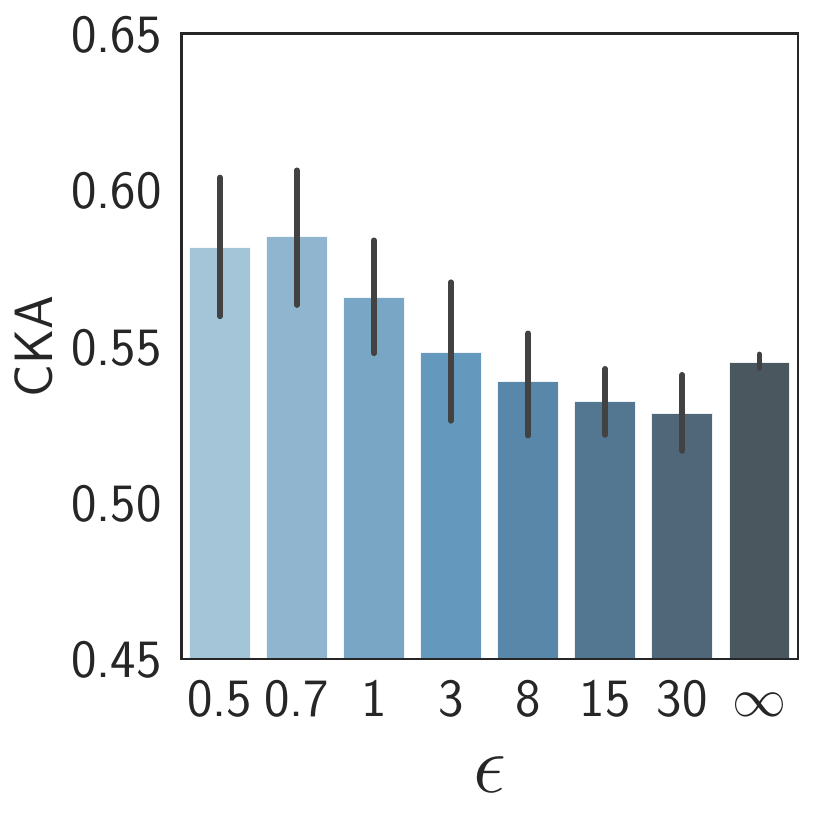}
        \caption{POS CKA}
        \label{fig:pos_cka_mbert}
    \end{subfigure}
    \begin{subfigure}[b]{0.196\textwidth}
        \centering
        \includegraphics[width=\textwidth]{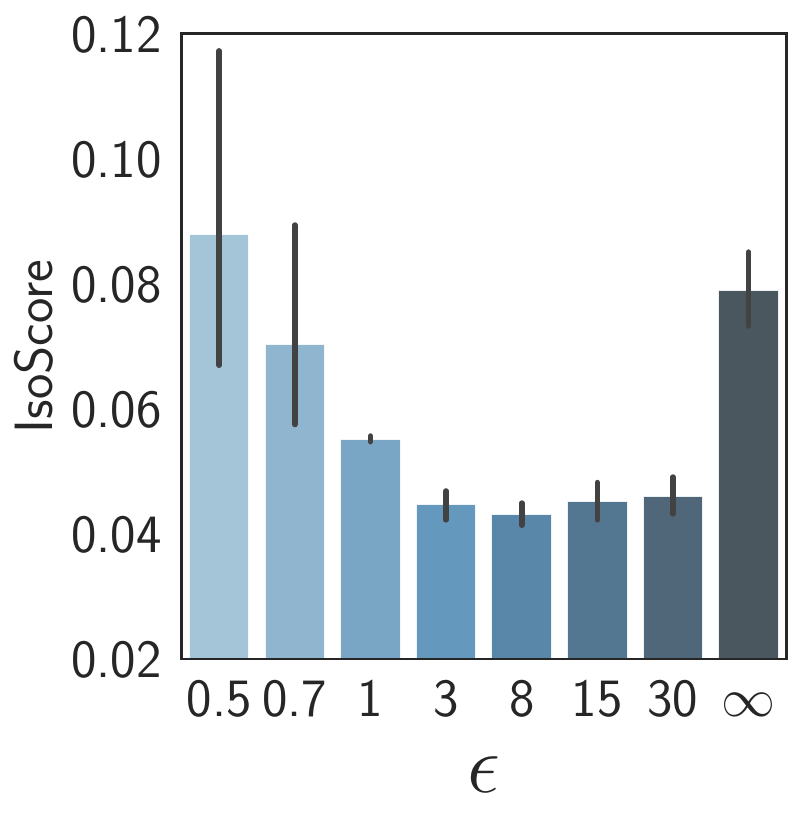}
        \caption{POS IsoScore}
        \label{fig:pos_iso_mbert}
    \end{subfigure}
    \begin{subfigure}[b]{0.1992\textwidth}
        \centering
        \includegraphics[width=\textwidth]{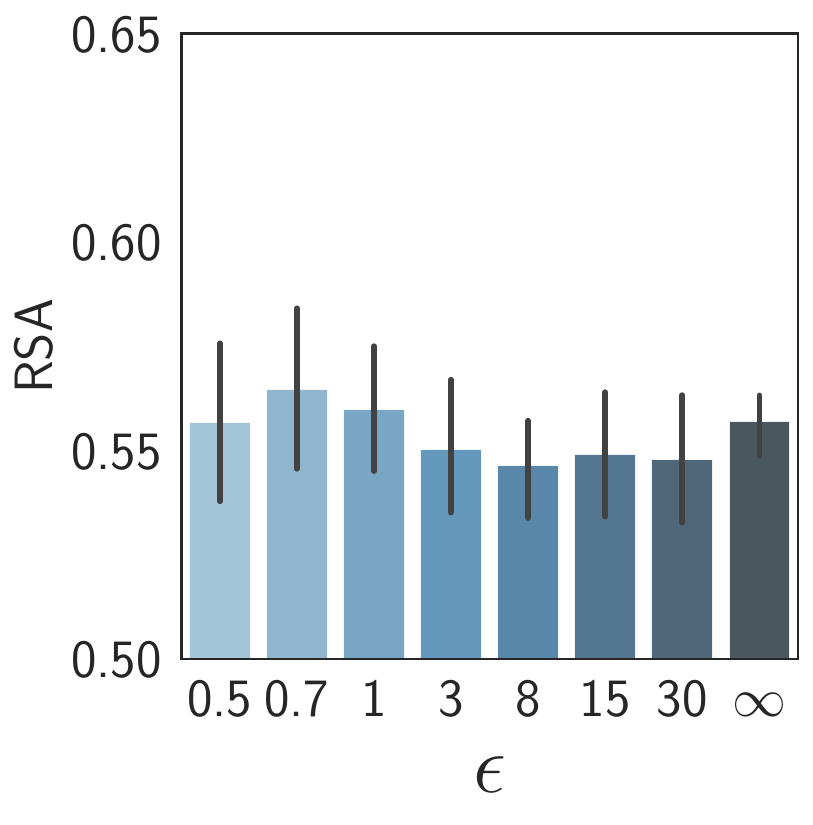}
        \caption{POS RSA}
        \label{fig:pos_rsa_mbert}
    \end{subfigure}
    \begin{subfigure}[b]{0.194\textwidth}
        \centering
        \includegraphics[width=\textwidth]{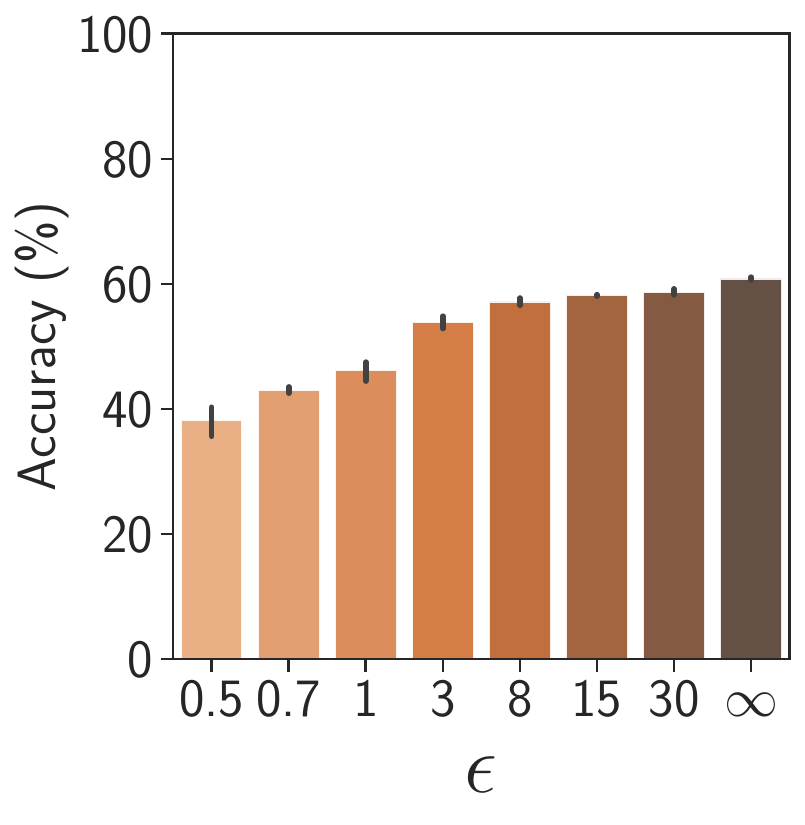}
        \caption{XNLI Performance}
        \label{fig:xnli_acc_mbert}
    \end{subfigure}
    \begin{subfigure}[b]{0.19\textwidth}
        \centering
        \includegraphics[width=\textwidth]{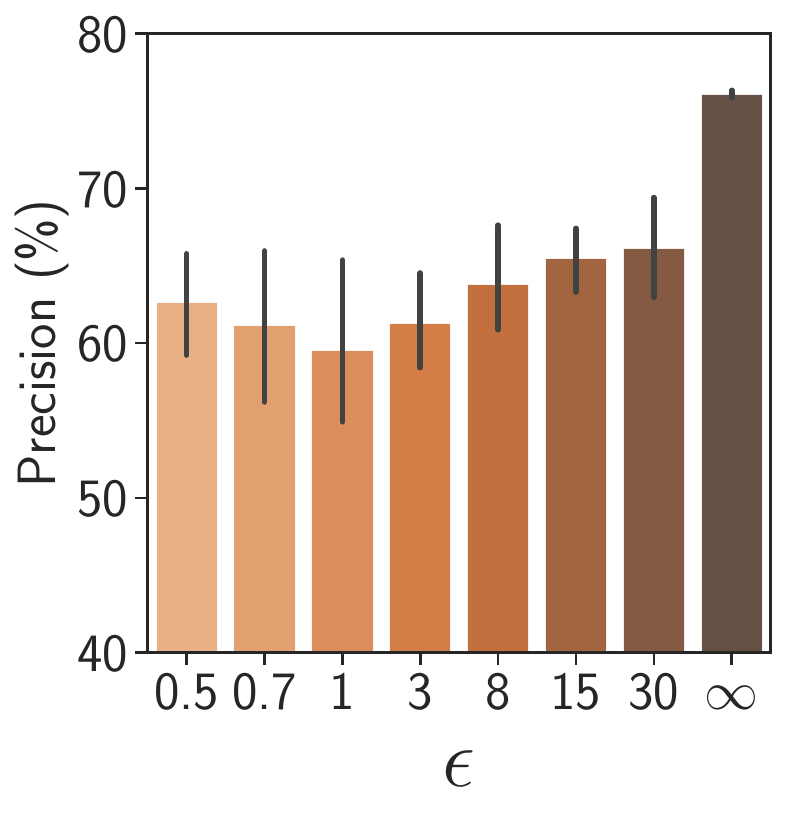}
        \caption{XNLI Retrieval}
        \label{fig:xnli_compression_mbert}
    \end{subfigure}
    \begin{subfigure}[b]{0.198\textwidth}
        \centering
        \includegraphics[width=\textwidth]{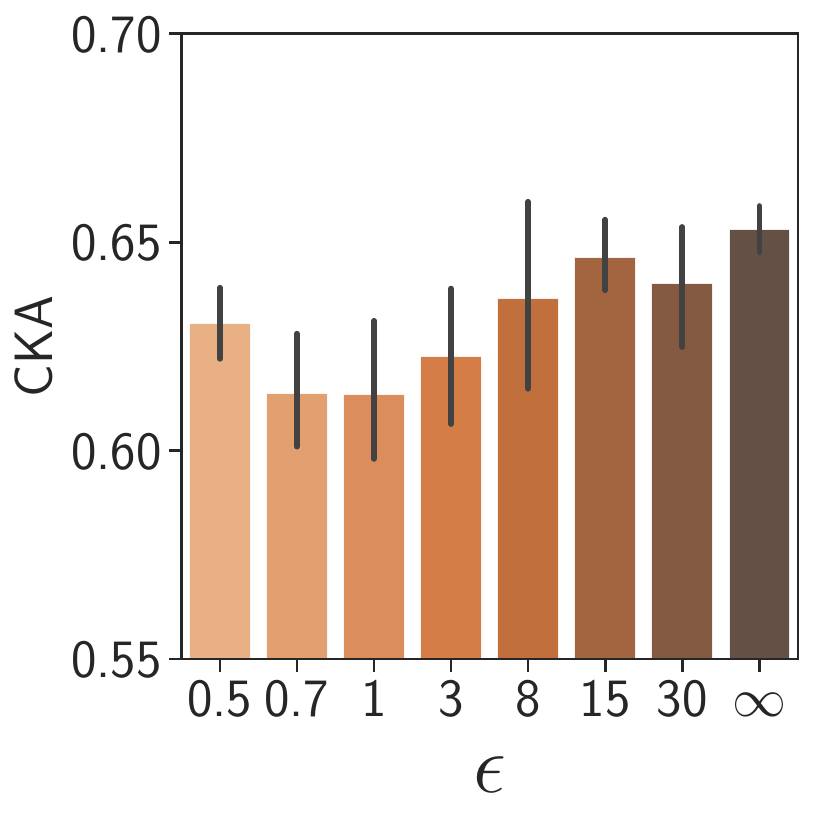}
        \caption{XNLI CKA}
        \label{fig:xnli_cka_mbert}
    \end{subfigure}
    \begin{subfigure}[b]{0.194\textwidth}
        \centering
        \includegraphics[width=\textwidth]{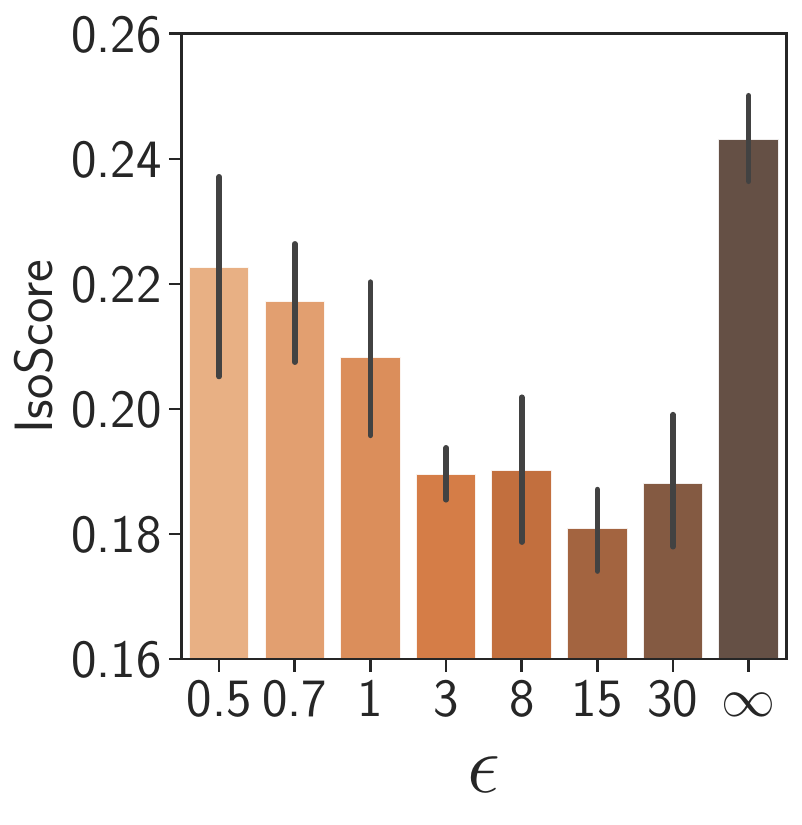}
        \caption{XNLI IsoScore}
        \label{fig:xnli_iso_mbert}
    \end{subfigure}
    \begin{subfigure}[b]{0.199\textwidth}
        \centering
        \includegraphics[width=\textwidth]{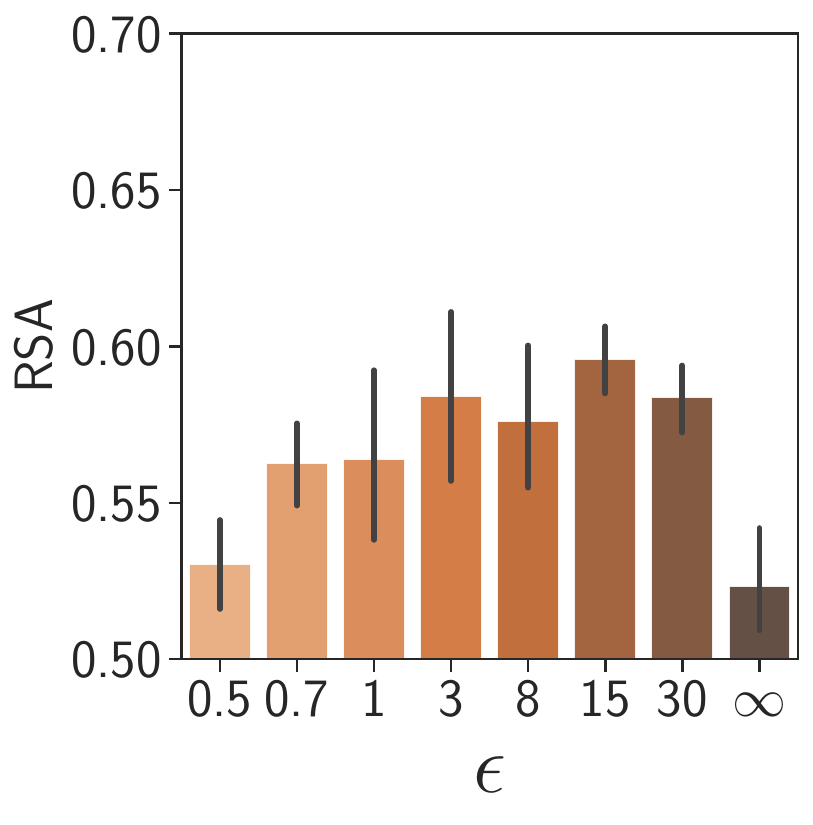}
        \caption{XNLI RSA}
        \label{fig:xnli_rsa_mbert}
    \end{subfigure}
    \caption{Aggregated \textbf{mBERT} results, analogous to Figure~\ref{fig:section3}.}
    \label{fig:mbert_results_1}
\end{figure*}

\vfill

\begin{figure*}[ht!]
    \centering
    \begin{subfigure}[b]{0.23\textwidth}
        \centering
        \includegraphics[width=\textwidth]{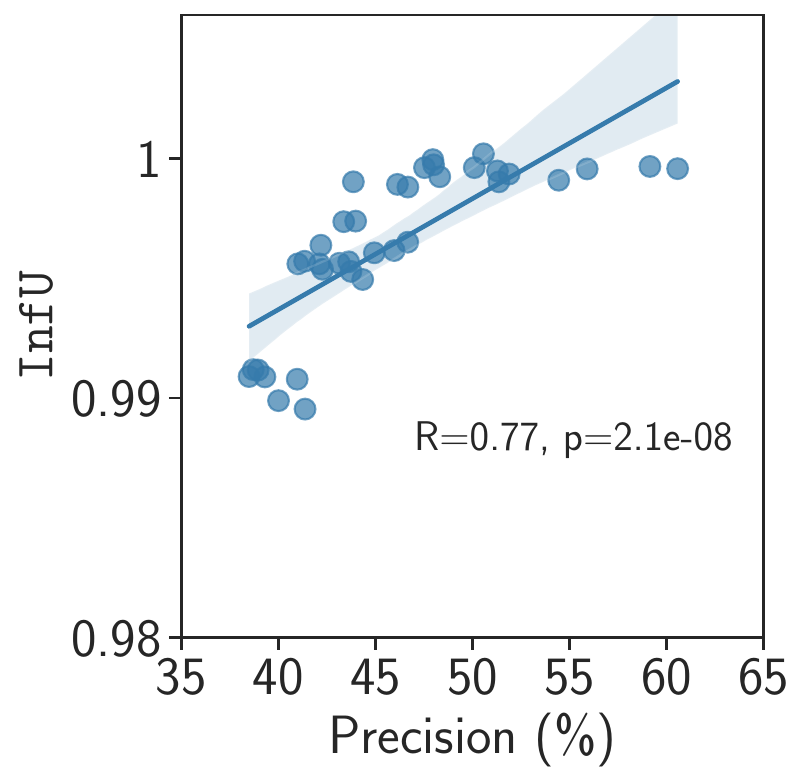}
        \caption{POS $\mathrm{InfU}$ -- Retrieval}
        \label{fig:pos_infu_retrieval_mbert}
    \end{subfigure}
    \begin{subfigure}[b]{0.2397\textwidth}
        \centering
        \includegraphics[width=\textwidth]{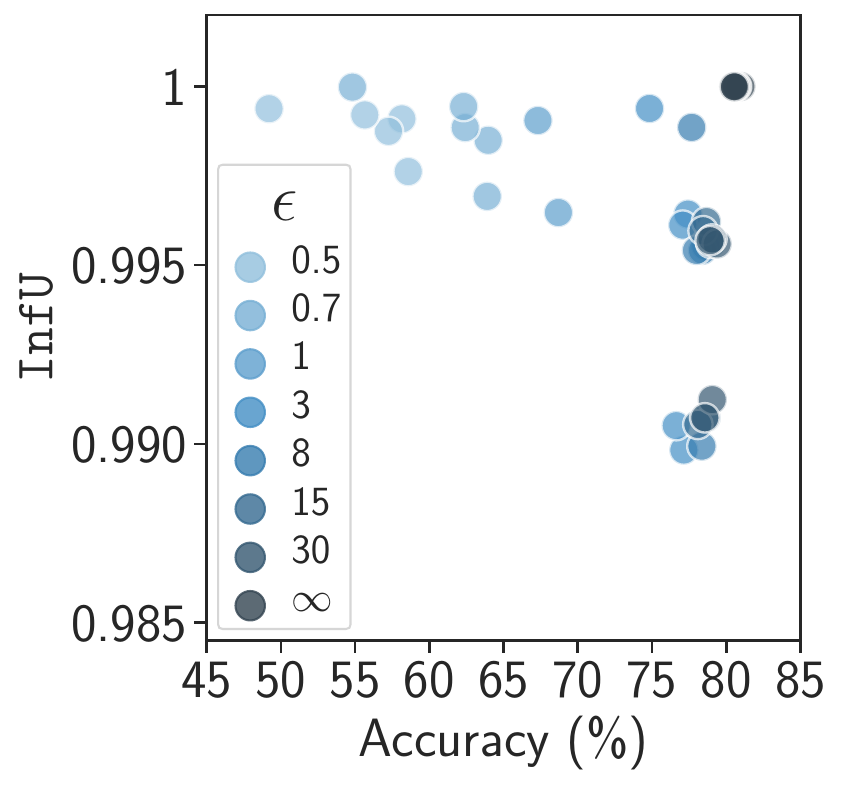}
        \caption{POS $\mathrm{InfU}$ -- Perf.}
        \label{fig:pos_infu_acc_mbert}
    \end{subfigure}
    \begin{subfigure}[b]{0.232\textwidth}
        \centering
        \includegraphics[width=\textwidth]{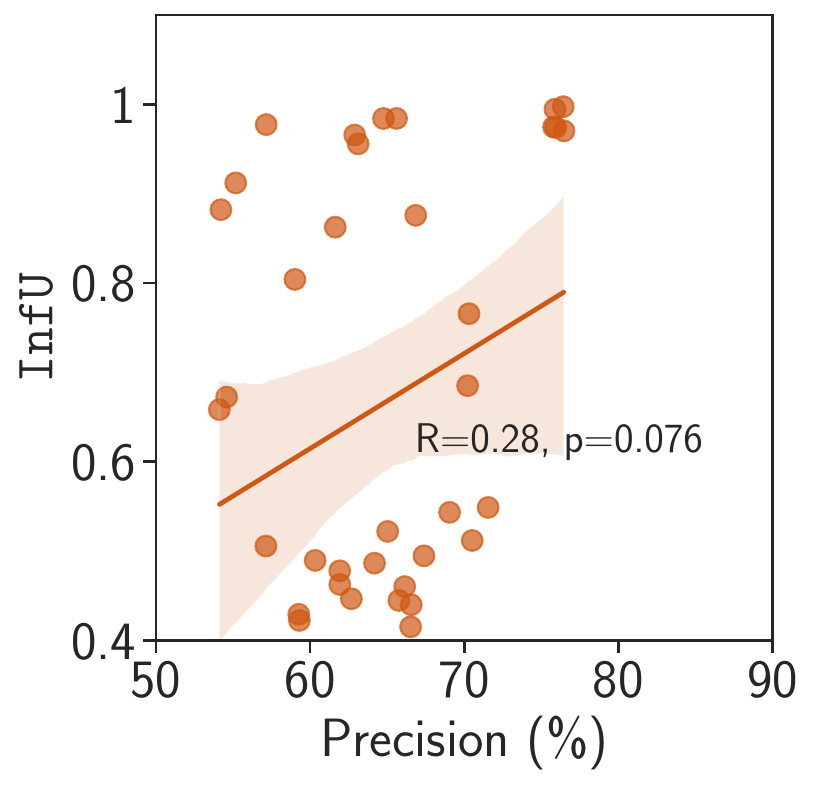}
        \caption{XNLI $\mathrm{InfU}$ -- Retrieval}
        \label{fig:xnli_infu_retrieval_mbert}
    \end{subfigure}
    \begin{subfigure}[b]{0.225\textwidth}
        \centering
        \includegraphics[width=\textwidth]{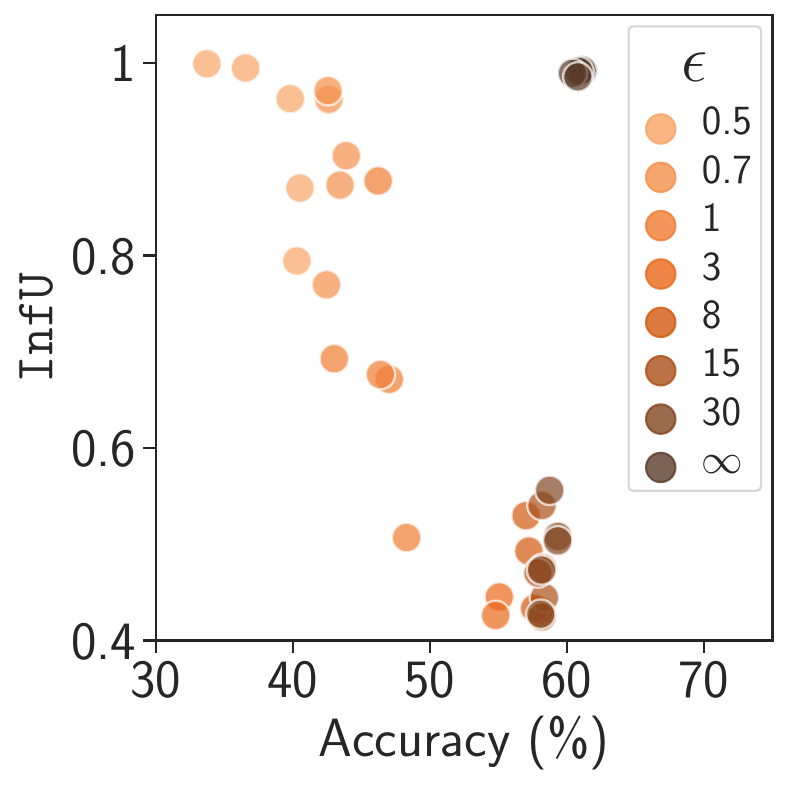}
        \caption{XNLI $\mathrm{InfU}$ -- Perf.}
        \label{fig:xnli_infu_acc_mbert}
    \end{subfigure}
    \caption{Aggregated \textbf{mBERT} results, analogous to Figure~\ref{fig:interpretability_plots}.}
    \label{fig:interpretability_plots_mbert}
\end{figure*}

\newpage

\section{Detailed Results for Experiments in \S\ref{sec:question1}}
\label{sec:details_q1}

Figure~\ref{fig:retrieval_steps} shows the development of the mean sentence retrieval precision at layer 8 for POS and XNLI over the course of fine-tuning with different privacy budgets.

\noindent
We further present non-aggregated results for
\begin{itemize}
    \itemsep-0.3em 
    \item POS performance in Table~\ref{tab:pos_full_acc}
    \item XNLI performance in Table~\ref{tab:xnli_full_acc}
    \item Sentence retrieval for POS in Figures~\ref{fig:pos_retrieval_tedwm} and \ref{fig:pos_retrieval_tatoeba}
    \item Sentence retrieval for XNLI in Figure~\ref{fig:xnli_retrieval_full}
    \item CKA for POS in Figures~\ref{fig:pos_cka_tedwm} and \ref{fig:pos_cka_tatoeba}
    \item CKA for XNLI in Figure~\ref{fig:xnli_cka_full}
    \item IsoScore for POS in Table~\ref{tab:pos_all_isoscores}
    \item IsoScore for XNLI in Table~\ref{tab:xnli_all_isoscores}
    \item RSA for POS in Figures~\ref{fig:pos_rsa_tedwm} and \ref{fig:pos_rsa_tatoeba}
    \item RSA for XNLI in Figure~\ref{fig:xnli_rsa_full}.
\end{itemize}

\begin{table*}[ht!]
\centering
\caption{Overview of languages used in our experiments. Tokens (in millions) and size (in Gibibytes) refer to the respective monolingual corpora in XLM-R's pretraining corpus. Numbers taken from \citet{conneau-etal-2020-unsupervised}. *: includes romanized variants also used in pretraining.}
\label{tab:languages}
\resizebox{0.6\textwidth}{!}{%
\begin{tabular}{@{}lcllll@{}}
\toprule
\textbf{Language} & \textbf{ISO} & \textbf{Family} & \textbf{Script} & \textbf{Tokens} (M) & \textbf{Size} (GiB) \\ \midrule
Arabic     & \textsc{ar} & Afro-Asiatic   & Arabic     & 2869  & 28.0  \\
Bulgarian  & \textsc{bg} & Indo-European  & Cyrillic   & 5487  & 57.5  \\
Chinese    & \textsc{zh} & Sino-Tibetan   & Chinese    & 435   & 63.5  \\
French     & \textsc{fr} & Indo-European  & Latin      & 9780  & 56.8  \\
German     & \textsc{de} & Indo-European  & Latin      & 10297 & 66.6  \\
Greek      & \textsc{el} & Indo-European  & Greek      & 4285  & 46.9  \\
Hindi      & \textsc{hi} & Indo-European  & Devanagari & 1803* & 20.7* \\
Indonesian & \textsc{id} & Austronesian   & Latin      & 22704 & 148.3 \\
Italian    & \textsc{it} & Indo-European  & Latin      & 4983  & 30.2  \\
Japanese   & \textsc{ja} & Japonic        & Japanese   & 530   & 69.3  \\
Kiswahili  & \textsc{sw} & Niger-Congo    & Latin      & 275   & 1.6   \\
Korean     & \textsc{ko} & Koreanic       & Korean     & 5644  & 54.2  \\
Portuguese & \textsc{pt} & Indo-European  & Latin      & 8405  & 49.1  \\
Russian    & \textsc{ru} & Indo-European  & Cyrillic   & 23408 & 278.0 \\
Thai       & \textsc{th} & Kra-Dai        & Thai       & 1834  & 71.7  \\
Turkish    & \textsc{tr} & Turkic         & Latin      & 2736  & 20.9  \\
Urdu       & \textsc{ur} & Indo-European  & Arabic     & 815*  & 6.2*  \\
Vietnamese & \textsc{vi} & Austro-Asiatic & Latin      & 24757 & 137.3 \\ \bottomrule
\end{tabular}%
}
\end{table*}

\begin{table*}[!hb]
\centering
\caption{Links and references to the datasets we used in our experiments. License information are also available via these links. We ensure that we comply with respective license conditions and only use the data within their intended use policy where applicable.}
\label{tab:dataset_links}
\resizebox{\textwidth}{!}{%
\begin{tabular}{@{}lll@{}}
\toprule
\textbf{Dataset} & \textbf{Download Link}                                                                      & \textbf{Reference}                                        \\ \midrule
UD v2.8 (POS) &
  \href{https://lindat.mff.cuni.cz/repository/xmlui/handle/11234/1-3683}{https://lindat.mff.cuni.cz/repository/xmlui/handle/11234/1-3683} &
  \citep{nivre-etal-2020-universal, zeman-etal-2021-ud} \\
XNLI             & \href{https://huggingface.co/datasets/xnli}{https://huggingface.co/datasets/xnli}           & \citep{conneau-etal-2018-xnli, lhoest-etal-2021-datasets} \\
TED 2020 &
  \href{https://github.com/UKPLab/sentence-transformers/blob/master/docs/datasets/TED2020.md}{https://github.com/UKPLab/sentence-transformers/blob/master/docs/datasets/TED2020.md} &
  \citep{reimers-gurevych-2020-making} \\
WikiMatrix &
  \href{https://github.com/facebookresearch/LASER/tree/main/tasks/WikiMatrix}{https://github.com/facebookresearch/LASER/tree/main/tasks/WikiMatrix} &
  \citep{schwenk-etal-2021-wikimatrix} \\
Tatoeba          & \href{https://github.com/LBeaudoux/tatoebatools}{https://github.com/LBeaudoux/tatoebatools} &                                                          \\ \bottomrule
\end{tabular}%
}
\end{table*}

\begin{table*}[!ht]
\centering
\caption{Overview of the UD v2.8 \citep{nivre-etal-2020-universal, zeman-etal-2021-ud} treebanks (test splits only) that we use as test sets in our POS tagging experiments (\S\ref{sec:exp_setup},\ref{sec:question1}) including their respective sizes (number of sentences).}
\label{tab:treebank_sizes}
\resizebox{0.4\textwidth}{!}{%
\begin{tabular}{@{}clc@{}}
\toprule
\textbf{Language} & \textbf{Treebank} & \textbf{\# Sentences} \\\midrule
\textsc{ar}       & Arabic-PADT       & 680                  \\
\textsc{de}       & German-GSD        & 977                  \\
\textsc{es}       & Spanish-GSD       & 426                  \\
\textsc{hi}       & Hindi-HDTB        & 1684                 \\
\textsc{id}       & Indonesian-GSD    & 557                  \\
\textsc{ko}       & Korean-Kaist      & 2287                 \\
\textsc{ru}       & Russian-SynTagRus & 6491                 \\\bottomrule
\end{tabular}%
}
\end{table*}

\vfill 

\begin{figure*}[ht!]
    \centering
    \begin{subfigure}[b]{0.48\textwidth}
        \centering
        \includegraphics[width=\textwidth]{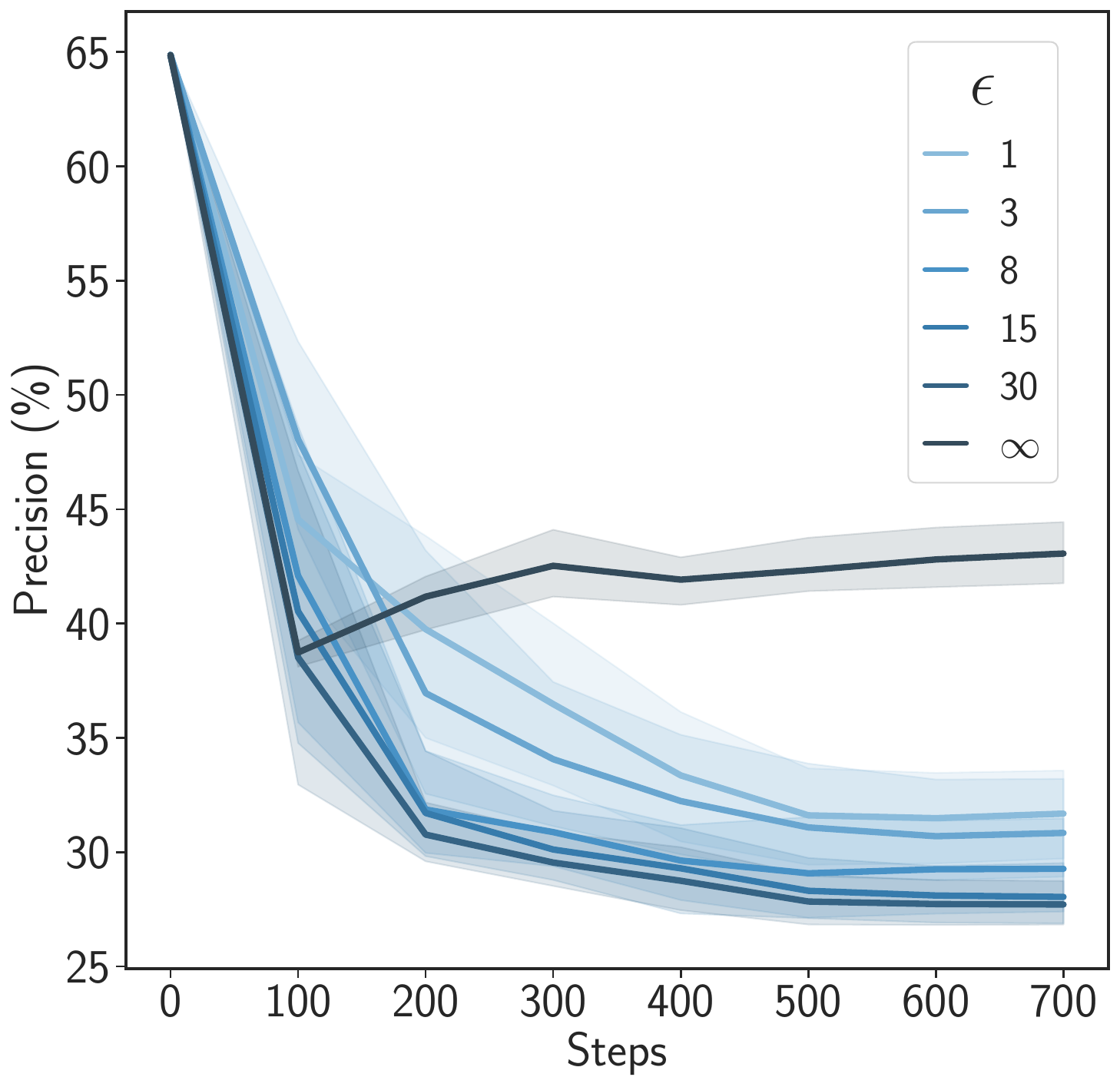}
        \caption{POS}
        \label{fig:pos_retrieval_steps}
    \end{subfigure}
    \begin{subfigure}[b]{0.48\textwidth}
        \centering
        \includegraphics[width=\textwidth]{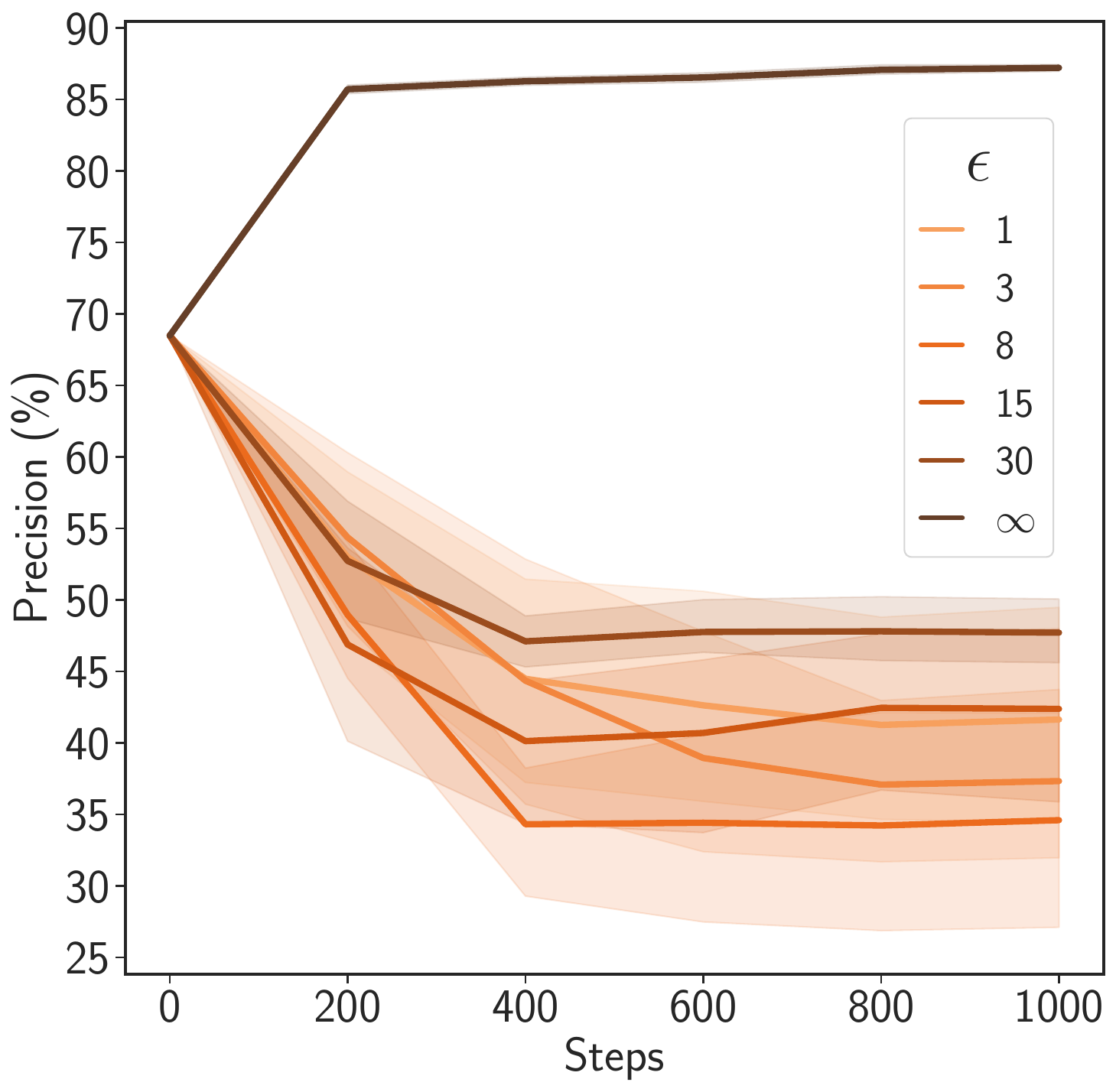}
        \caption{XNLI}
        \label{fig:xnli_retrieval_steps}
    \end{subfigure}
    \caption{Mean sentence retrieval precision for our TED 2020  splits (different languages/data for POS and XNLI) at layer 8 over the course of fine-tuning with different privacy budgets ($\varepsilon$). $\varepsilon = \infty$ denotes non-private models. Error bands show variation around the mean over 5 random seeds. At $\mathrm{Steps}=0$, all models are equivalent to the pretrained XLM-R Base. We see that the non-private models can retain (and for XNLI even improve) their multilingual compression much better than the private models and have less variation.}
    \label{fig:retrieval_steps}
\end{figure*}

\begin{table*}[htp]
\centering
\caption{\textbf{POS} Performance (validation / test accuracy) when fine-tuning XLM-R Base with different privacy budgets ($\varepsilon$). We show results averaged over 5 random seeds each. $\varepsilon = \infty$ denotes non-private models. \textsc{avg} is the average over the 7 languages. See \S\ref{sec:exp_setup} for our experimental setup. We see that performance increases with decreased privacy across all languages.}
\label{tab:pos_full_acc}
\resizebox{\textwidth}{!}{%
\begin{tabular}{@{}ccccccccc@{}}
\toprule
$\varepsilon$ & \textsc{ar} & \textsc{de} & \textsc{es} & \textsc{hi} & \textsc{id} & \textsc{ko} & \textsc{ru} & \textsc{avg} \\ \midrule
1             & 68.3 / 64.6 & 75.5 / 75.1 & 79.8 / 79.0 & 65.0 / 63.3 & 73.8 / 71.9 & 66.1 / 54.2 & 74.8 / 74.0 & 71.9 / 68.9  \\
3             & 79.1 / 76.6 & 86.6 / 86.8 & 90.3 / 89.3 & 74.4 / 70.9 & 82.6 / 79.4 & 71.1 / 59.4 & 86.1 / 86.3 & 81.4 / 78.4  \\
8             & 81.0 / 77.6 & 88.4 / 88.3 & 91.6 / 90.2 & 78.2 / 75.6 & 84.2 / 81.2 & 70.8 / 60.9 & 87.1 / 87.4 & 83.0 / 80.2  \\
15            & 81.3 / 78.4 & 88.8 / 89.0 & 92.4 / 90.9 & 77.0 / 73.2 & 83.9 / 80.7 & 71.9 / 61.8 & 87.7 / 87.8 & 83.3 / 80.3  \\
30            & 81.8 / 78.7 & 89.4 / 89.6 & 92.9 / 91.5 & 77.6 / 74.0 & 84.3 / 81.1 & 72.3 / 62.2 & 88.2 / 88.4 & 83.8 / 80.8  \\
$\infty$ &
  \textbf{83.8} / \textbf{79.7} &
  \textbf{91.5} / \textbf{91.2} &
  \textbf{95.0} / \textbf{93.2} &
  \textbf{82.8} / \textbf{80.2} &
  \textbf{86.2} / \textbf{81.3} &
  \textbf{74.2} / \textbf{62.9} &
  \textbf{89.9} / \textbf{90.2} &
  \textbf{86.2} / \textbf{82.7} \\ \bottomrule
\end{tabular}%
}

\vspace*{\floatsep}
\vspace*{\floatsep}
\vspace*{\floatsep}
\vspace*{\floatsep}
\vspace*{\floatsep}
\vspace*{\floatsep}
    
\centering
\caption{\textbf{XNLI} Performance (validation / test accuracy) when fine-tuning XLM-R Base with different privacy budgets ($\varepsilon$). We show results averaged over 5 random seeds each. $\varepsilon = \infty$ denotes non-private models. \textsc{avg} is the average over the 7 languages. See \S\ref{sec:exp_setup} for our experimental setup. We see that performance increases with decreased privacy across all languages. Here, we also particularly observe that the gap between validation and test performance is substantially lower for private models, which shows the strong regularization effect of training with differential privacy.}
\label{tab:xnli_full_acc}
\resizebox{\textwidth}{!}{%
\begin{tabular}{@{}cllllllll@{}}
\toprule
$\varepsilon$ &
  \multicolumn{1}{c}{\textsc{ar}} &
  \multicolumn{1}{c}{\textsc{de}} &
  \multicolumn{1}{c}{\textsc{el}} &
  \multicolumn{1}{c}{\textsc{ru}} &
  \multicolumn{1}{c}{\textsc{sw}} &
  \multicolumn{1}{c}{\textsc{th}} &
  \multicolumn{1}{c}{\textsc{ur}} &
  \multicolumn{1}{c}{\textsc{avg}} \\ \midrule
1             & 37.3 / 37.4 & 36.8 / 37.0 & 36.6 / 36.5 & 36.3 / 36.2 & 34.3 / 34.5 & 35.6 / 35.7 & 35.6 / 35.6 & 36.1 / 36.1  \\
3             & 49.6 / 50.3 & 49.3 / 51.0 & 50.8 / 51.5 & 49.7 / 50.2 & 45.9 / 47.2 & 48.8 / 49.5 & 47.6 / 48.2 & 48.8 / 49.7  \\
8             & 55.9 / 56.4 & 56.8 / 58.5 & 58.2 / 58.1 & 56.3 / 57.1 & 52.0 / 53.2 & 55.6 / 55.7 & 53.3 / 53.7 & 55.5 / 56.1  \\
15            & 59.1 / 58.3 & 60.4 / 60.8 & 61.5 / 60.9 & 59.7 / 59.5 & 54.4 / 54.8 & 58.9 / 58.2 & 56.4 / 56.1 & 58.6 / 58.4  \\
30            & 61.6 / 60.8 & 63.6 / 63.1 & 64.8 / 62.0 & 62.0 / 61.1 & 56.5 / 57.3 & 61.2 / 60.2 & 58.6 / 57.8 & 61.2 / 60.3  \\
$\infty$ &
  \textbf{90.9} / \textbf{67.8} &
  \textbf{96.2} / \textbf{70.5} &
  \textbf{95.5} / \textbf{70.1} &
  \textbf{93.4} / \textbf{69.7} &
  \textbf{79.0} / \textbf{62.5} &
  \textbf{91.6} / \textbf{68.5} &
  \textbf{86.8} / \textbf{65.4} &
  \textbf{90.5} / \textbf{67.8} \\ \bottomrule
\end{tabular}%
}
\end{table*}

\begin{table*}[btp!]

\end{table*}

\label{sec:appendix}

\begin{figure*}[ht!]
    \centering
    \begin{subfigure}[b]{0.21\textwidth}
        \centering
        \includegraphics[width=\textwidth]{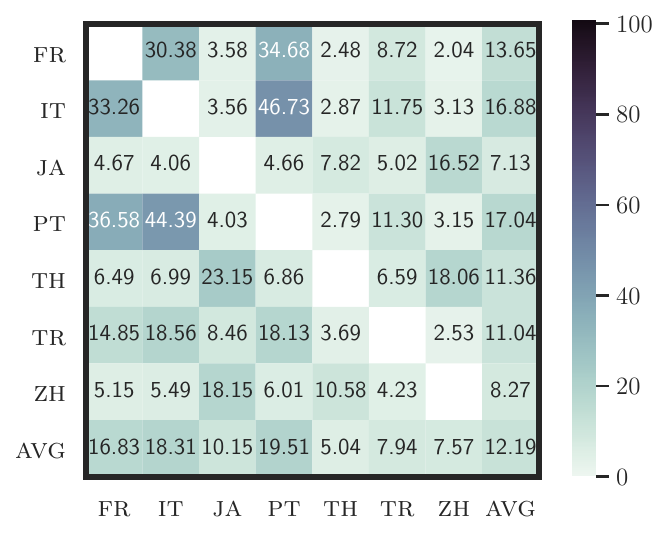}
        \caption{TED, $\varepsilon = 1$, $l = 0$}
        \label{fig:pos_retrieval_ted2020_1_lay0}
    \end{subfigure}
    \begin{subfigure}[b]{0.21\textwidth}
        \centering
        \includegraphics[width=\textwidth]{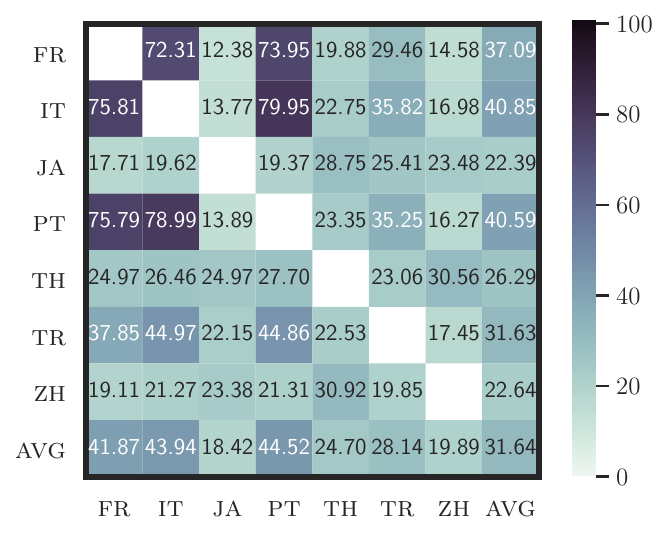}
        \caption{TED, $\varepsilon = 1$, $l = 8$}
        \label{fig:pos_retrieval_ted2020_1_lay8}
    \end{subfigure}
        \begin{subfigure}[b]{0.21\textwidth}
        \centering
        \includegraphics[width=\textwidth]{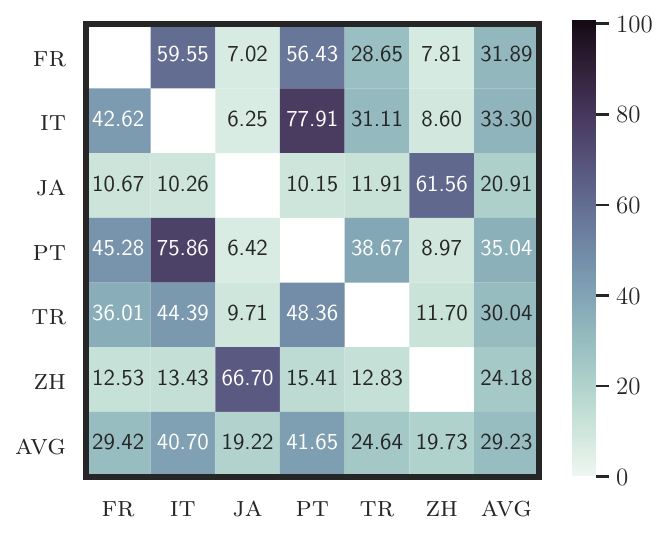}
        \caption{WM, $\varepsilon = 1$, $l = 0$}
        \label{fig:pos_retrieval_wikimatrix_1_lay0}
    \end{subfigure}
    \begin{subfigure}[b]{0.21\textwidth}
        \centering
        \includegraphics[width=\textwidth]{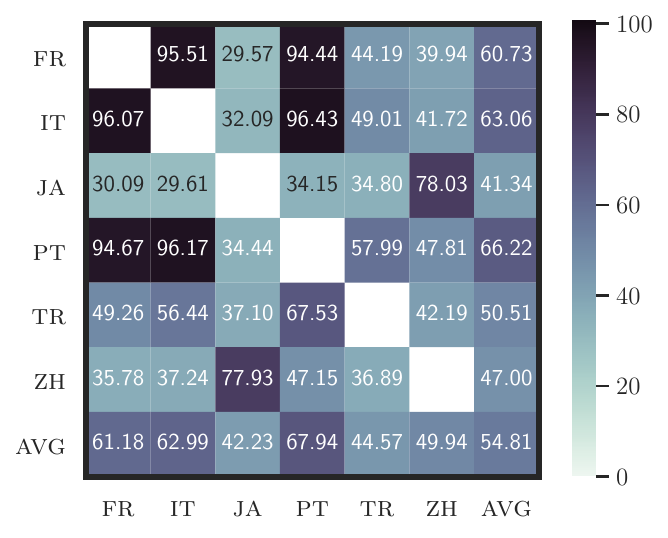}
        \caption{WM, $\varepsilon = 1$, $l = 8$}
        \label{fig:pos_retrieval_wikimatrix_1_lay8}
    \end{subfigure}
    \begin{subfigure}[b]{0.21\textwidth}
        \centering
        \includegraphics[width=\textwidth]{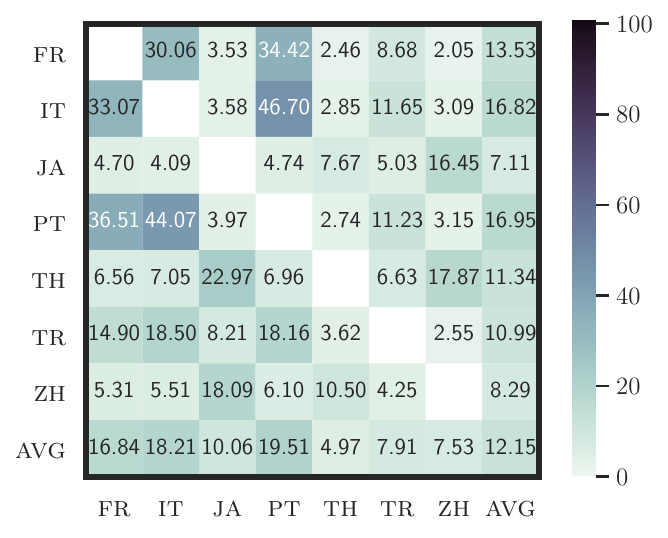}
        \caption{TED, $\varepsilon = 3$, $l = 0$}
        \label{fig:pos_retrieval_ted2020_3_lay0}
    \end{subfigure}
    \begin{subfigure}[b]{0.21\textwidth}
        \centering
        \includegraphics[width=\textwidth]{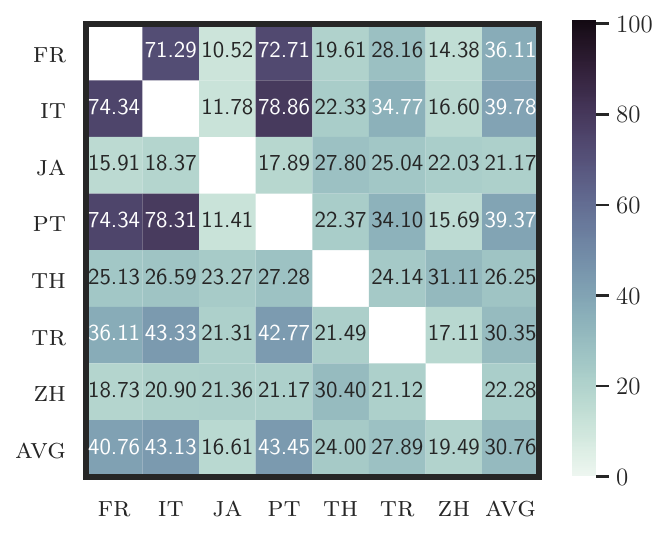}
        \caption{TED, $\varepsilon = 3$, $l = 8$}
        \label{fig:pos_retrieval_ted2020_3_lay8}
    \end{subfigure}
        \begin{subfigure}[b]{0.21\textwidth}
        \centering
        \includegraphics[width=\textwidth]{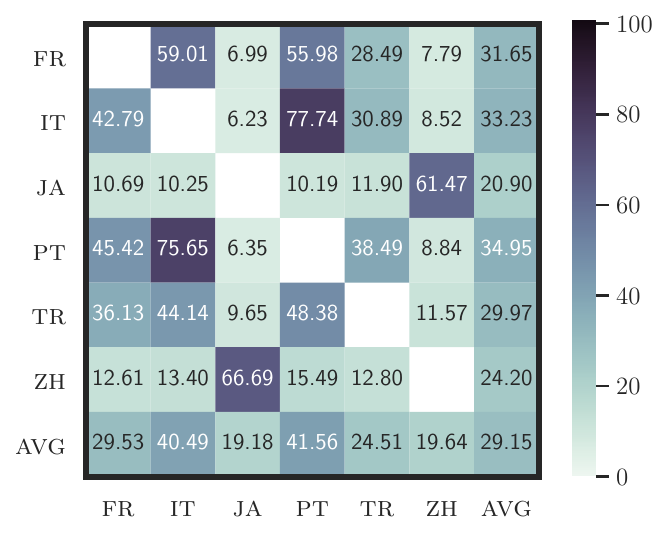}
        \caption{WM, $\varepsilon = 3$, $l = 0$}
        \label{fig:pos_retrieval_wikimatrix_3_lay0}
    \end{subfigure}
    \begin{subfigure}[b]{0.21\textwidth}
        \centering
        \includegraphics[width=\textwidth]{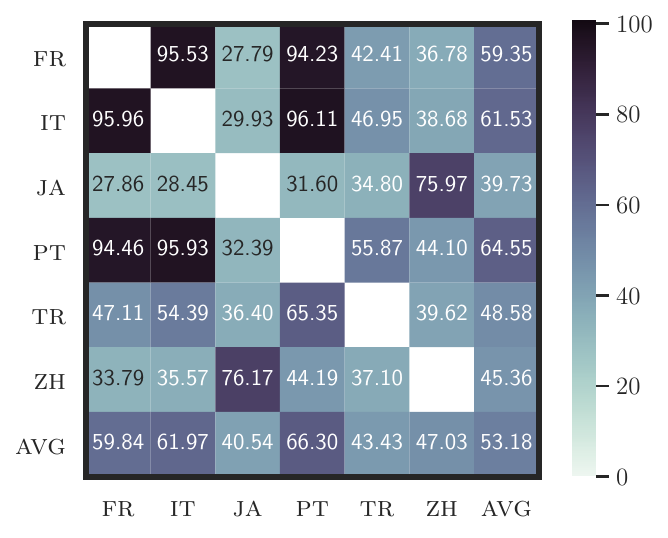}
        \caption{WM, $\varepsilon = 3$, $l = 8$}
        \label{fig:pos_retrieval_wikimatrix_3_lay8}
    \end{subfigure}
    \begin{subfigure}[b]{0.21\textwidth}
        \centering
        \includegraphics[width=\textwidth]{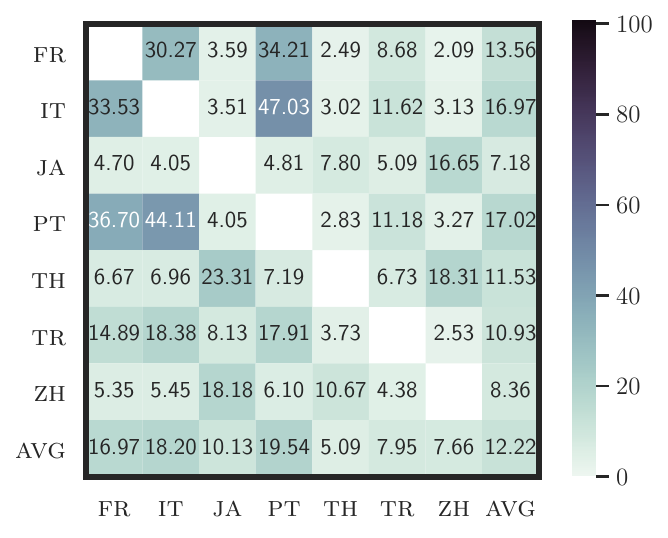}
        \caption{TED, $\varepsilon = 8$, $l = 0$}
        \label{fig:pos_retrieval_ted2020_8_lay0}
    \end{subfigure}
    \begin{subfigure}[b]{0.21\textwidth}
        \centering
        \includegraphics[width=\textwidth]{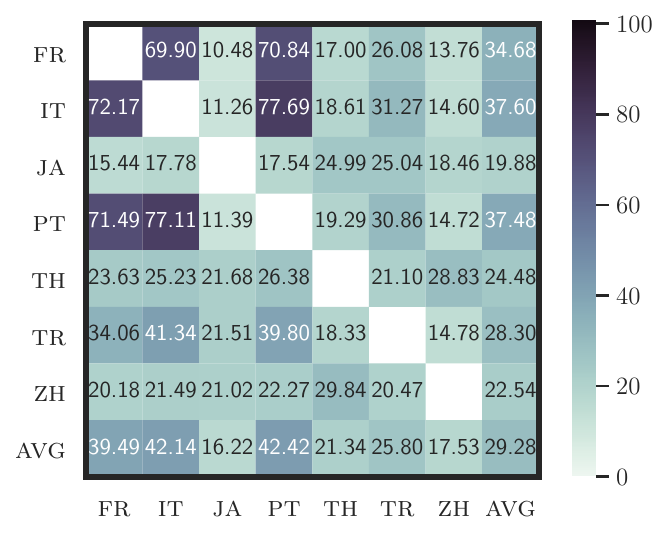}
        \caption{TED, $\varepsilon = 8$, $l = 8$}
        \label{fig:pos_retrieval_ted2020_8_lay8}
    \end{subfigure}
        \begin{subfigure}[b]{0.21\textwidth}
        \centering
        \includegraphics[width=\textwidth]{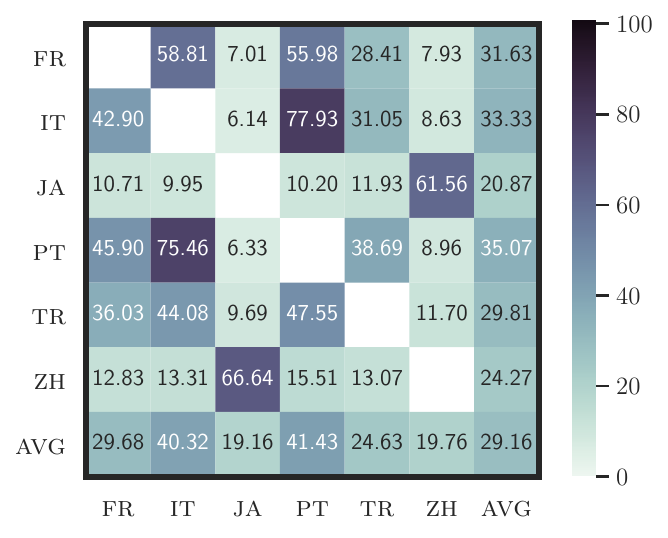}
        \caption{WM, $\varepsilon = 8$, $l = 0$}
        \label{fig:pos_retrieval_wikimatrix_8_lay0}
    \end{subfigure}
    \begin{subfigure}[b]{0.21\textwidth}
        \centering
        \includegraphics[width=\textwidth]{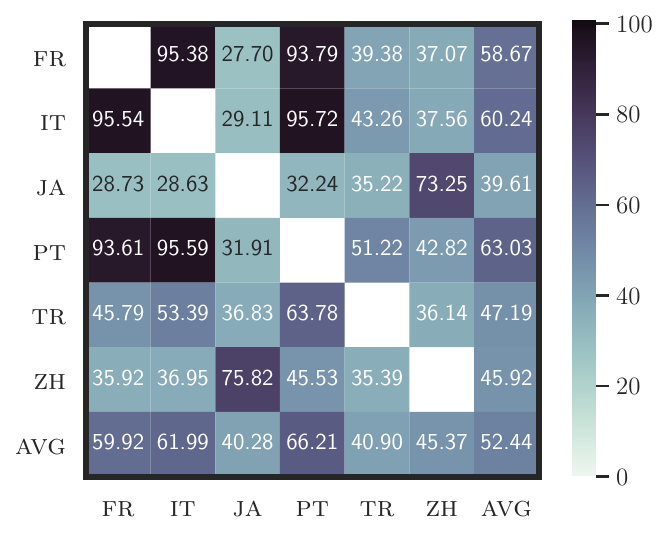}
        \caption{WM, $\varepsilon = 8$, $l = 8$}
        \label{fig:pos_retrieval_wikimatrix_8_lay8}
    \end{subfigure}
    \begin{subfigure}[b]{0.21\textwidth}
        \centering
        \includegraphics[width=\textwidth]{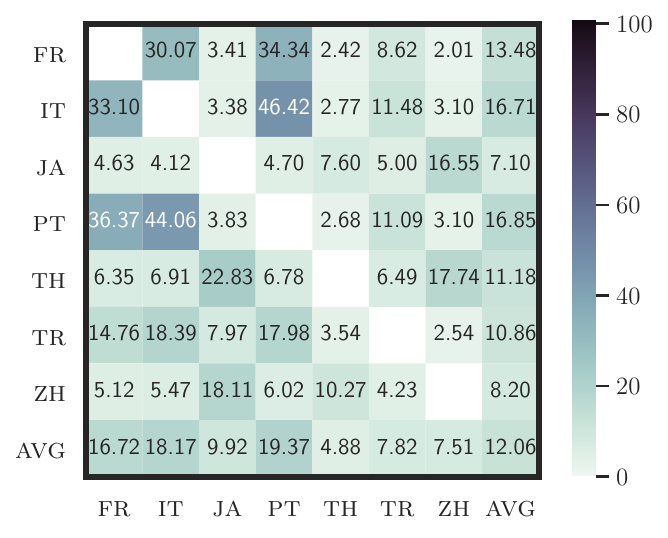}
        \caption{TED, $\varepsilon = 15$, $l = 0$}
        \label{fig:pos_retrieval_ted2020_15_lay0}
    \end{subfigure}
    \begin{subfigure}[b]{0.21\textwidth}
        \centering
        \includegraphics[width=\textwidth]{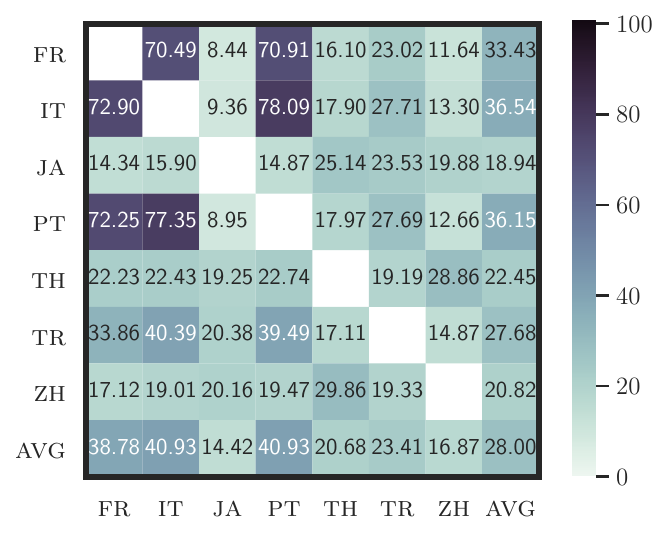}
        \caption{TED, $\varepsilon = 15$, $l = 8$}
        \label{fig:pos_retrieval_ted2020_15_lay8}
    \end{subfigure}
        \begin{subfigure}[b]{0.21\textwidth}
        \centering
        \includegraphics[width=\textwidth]{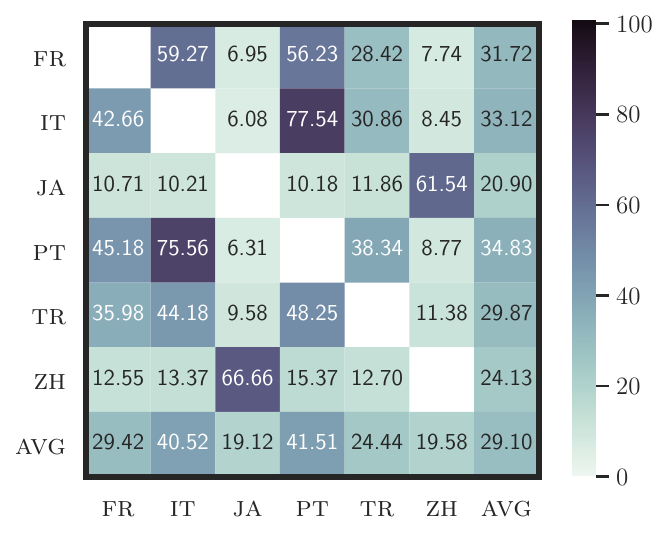}
        \caption{WM, $\varepsilon = 15$, $l = 0$}
        \label{fig:pos_retrieval_wikimatrix_15_lay0}
    \end{subfigure}
    \begin{subfigure}[b]{0.21\textwidth}
        \centering
        \includegraphics[width=\textwidth]{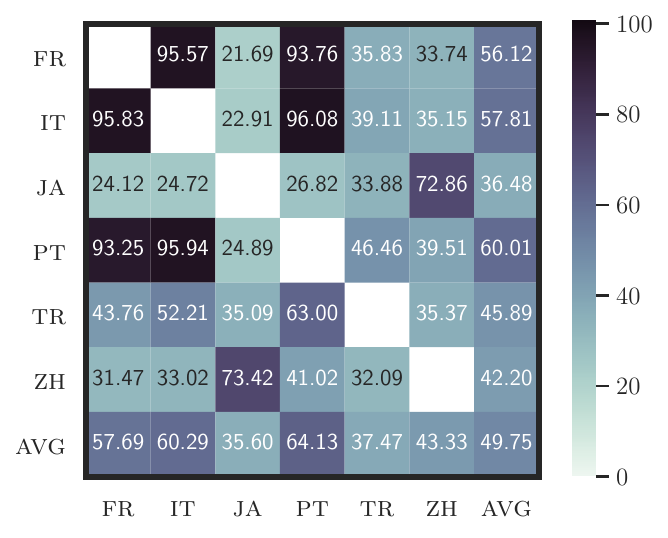}
        \caption{WM, $\varepsilon = 15$, $l = 8$}
        \label{fig:pos_retrieval_wikimatrix_15_lay8}
    \end{subfigure}
    \begin{subfigure}[b]{0.21\textwidth}
        \centering
        \includegraphics[width=\textwidth]{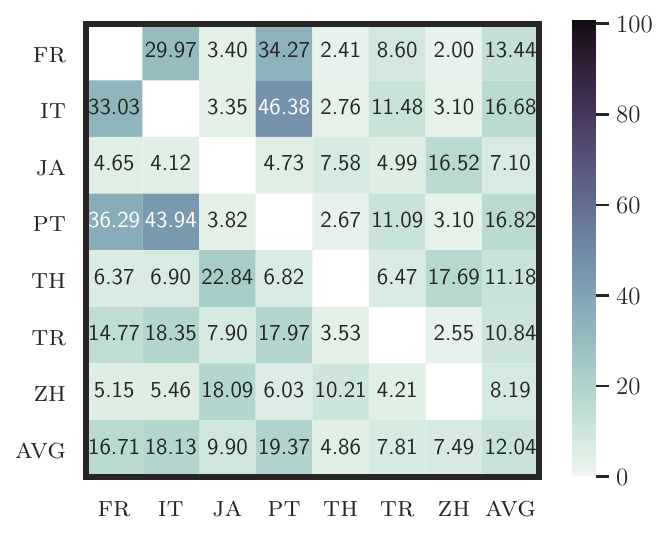}
        \caption{TED, $\varepsilon = 30$, $l = 0$}
        \label{fig:pos_retrieval_ted2020_30_lay0}
    \end{subfigure}
    \begin{subfigure}[b]{0.21\textwidth}
        \centering
        \includegraphics[width=\textwidth]{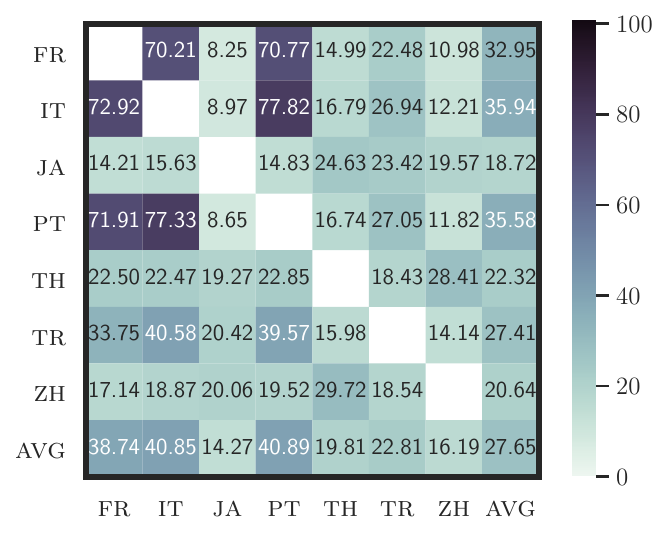}
        \caption{TED, $\varepsilon = 30$, $l = 8$}
        \label{fig:pos_retrieval_ted2020_30_lay8}
    \end{subfigure}
        \begin{subfigure}[b]{0.21\textwidth}
        \centering
        \includegraphics[width=\textwidth]{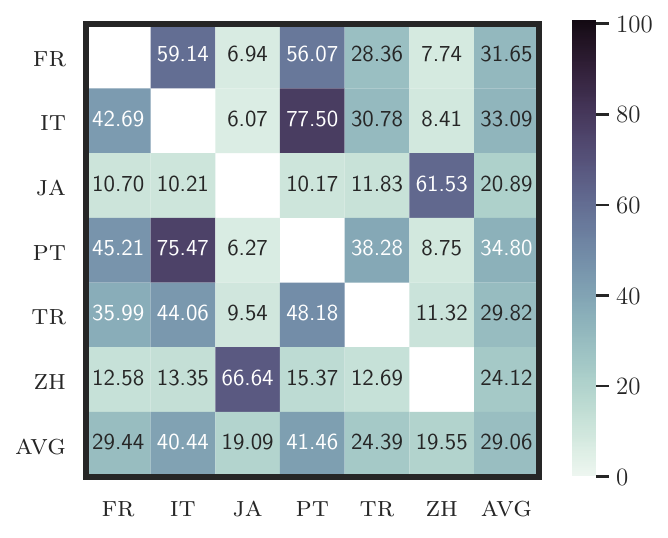}
        \caption{WM, $\varepsilon = 30$, $l = 0$}
        \label{fig:pos_retrieval_wikimatrix_30_lay0}
    \end{subfigure}
    \begin{subfigure}[b]{0.21\textwidth}
        \centering
        \includegraphics[width=\textwidth]{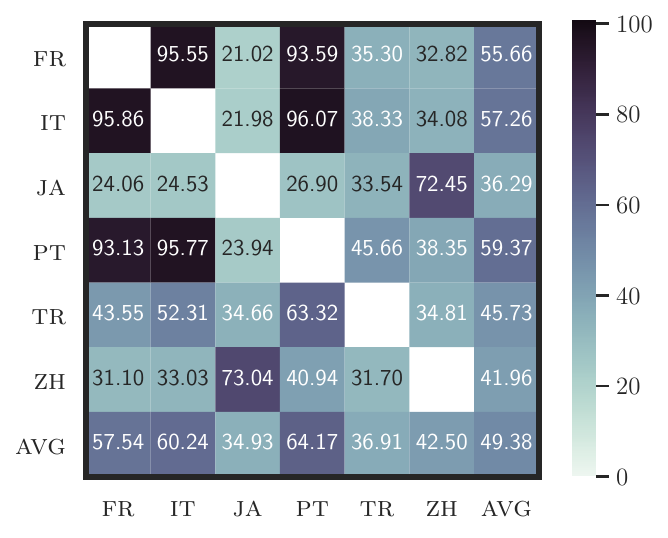}
        \caption{WM, $\varepsilon = 30$, $l = 8$}
        \label{fig:pos_retrieval_wikimatrix_30_lay8}
    \end{subfigure}
    \begin{subfigure}[b]{0.21\textwidth}
        \centering
        \includegraphics[width=\textwidth]{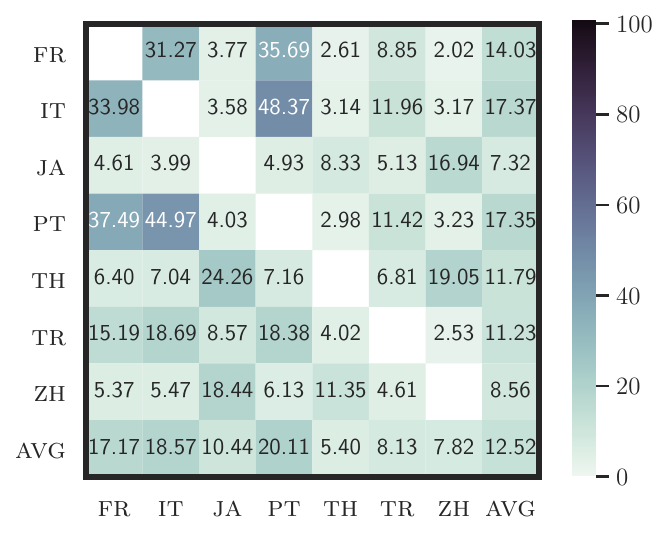}
        \caption{TED, $\varepsilon = \infty$, $l = 0$}
        \label{fig:pos_retrieval_ted2020_inf_lay0}
    \end{subfigure}
    \begin{subfigure}[b]{0.21\textwidth}
        \centering
        \includegraphics[width=\textwidth]{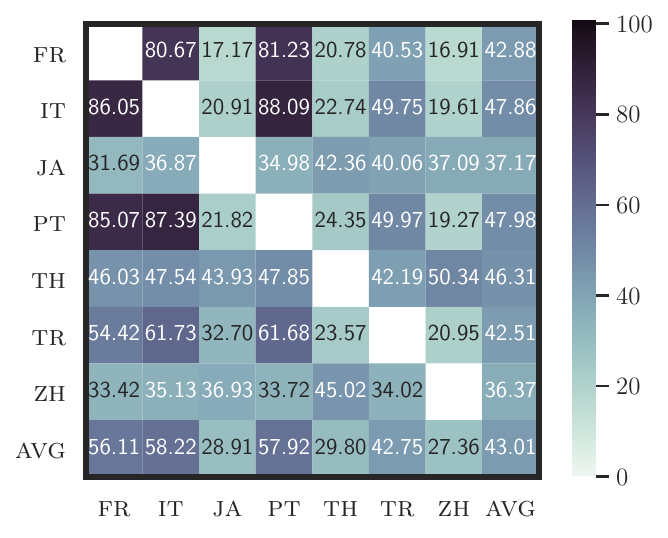}
        \caption{TED, $\varepsilon = \infty$, $l = 8$}
        \label{fig:pos_retrieval_ted2020_inf_lay8}
    \end{subfigure}
        \begin{subfigure}[b]{0.21\textwidth}
        \centering
        \includegraphics[width=\textwidth]{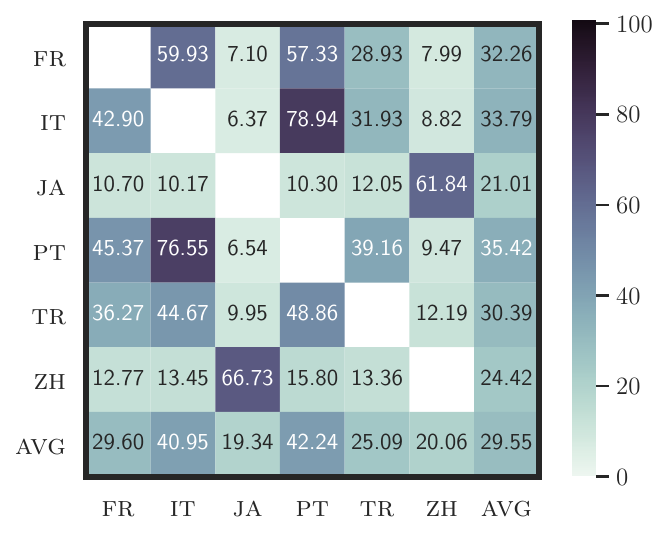}
        \caption{WM, $\varepsilon = \infty$, $l = 0$}
        \label{fig:pos_retrieval_wikimatrix_inf_lay0}
    \end{subfigure}
    \begin{subfigure}[b]{0.21\textwidth}
        \centering
        \includegraphics[width=\textwidth]{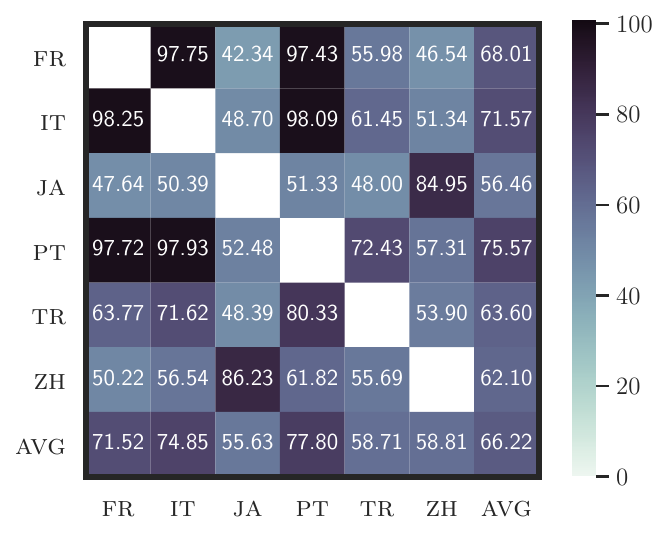}
        \caption{WM, $\varepsilon = \infty$, $l = 8$}
        \label{fig:pos_retrieval_wikimatrix_inf_lay8}
    \end{subfigure}
    \caption{\textbf{POS} Sentence retrieval results for the TED 2020 (TED) and WikiMatrix (WM) datasets and different combinations of privacy budgets ($\varepsilon$) and layers ($l$). Each heatmap cell corresponds to the average over 5 random seeds. We observe that the overall patterns are highly similar across all levels of privacy, particularly at layer 0.}
    \label{fig:pos_retrieval_tedwm}
\end{figure*}

\begin{figure*}[ht!]
    \centering
    \begin{subfigure}[b]{0.21\textwidth}
        \centering
        \includegraphics[width=\textwidth]{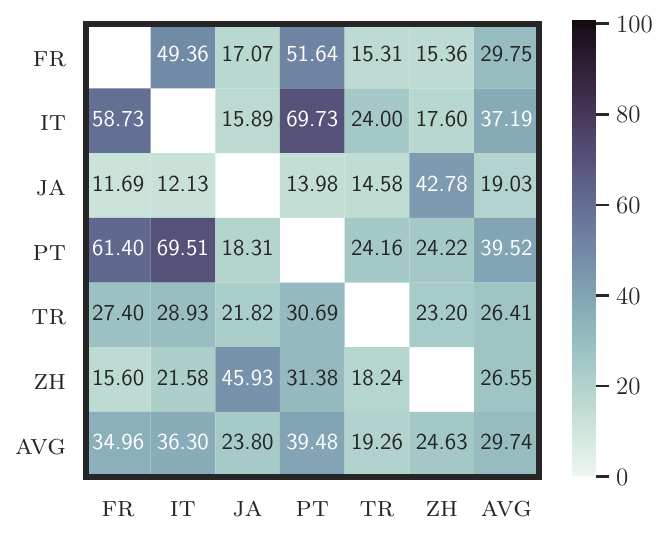}
        \caption{$\varepsilon = 1$, $l = 0$}
        \label{fig:pos_retrieval_tatoeba_1_lay0}
    \end{subfigure}
    \begin{subfigure}[b]{0.21\textwidth}
        \centering
        \includegraphics[width=\textwidth]{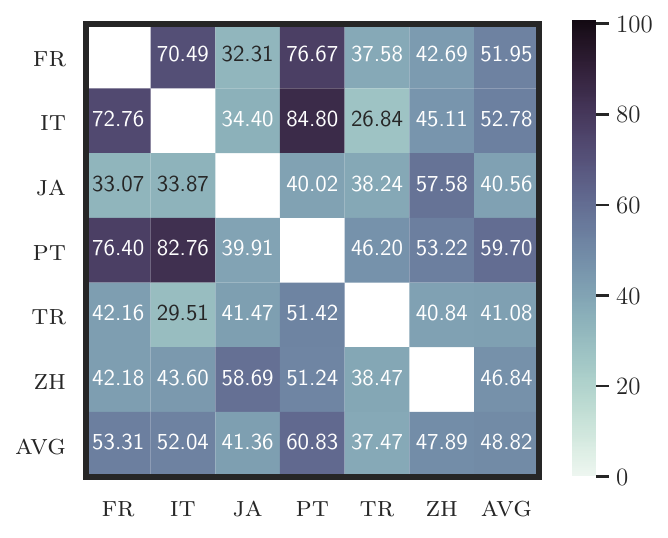}
        \caption{$\varepsilon = 1$, $l = 8$}
        \label{fig:pos_retrieval_tatoeba_1_lay8}
    \end{subfigure}
    \hspace{0.21\textwidth}
    \hspace{0.21\textwidth}
    \begin{subfigure}[b]{0.21\textwidth}
        \centering
        \includegraphics[width=\textwidth]{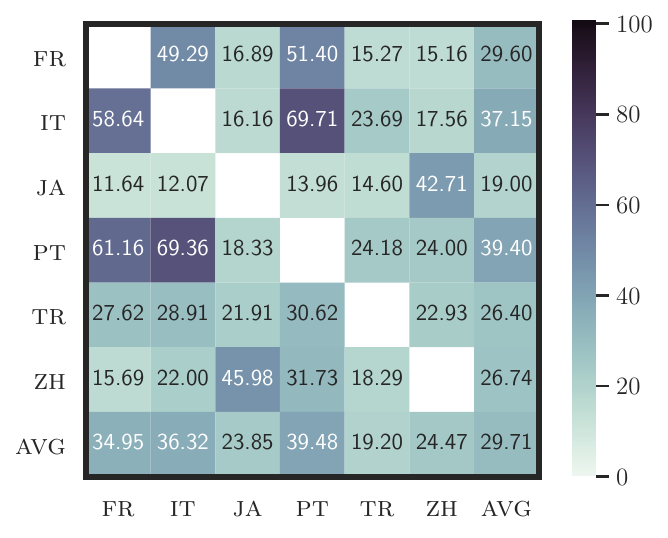}
        \caption{$\varepsilon = 3$, $l = 0$}
        \label{fig:pos_retrieval_tatoeba_3_lay0}
    \end{subfigure}
    \begin{subfigure}[b]{0.21\textwidth}
        \centering
        \includegraphics[width=\textwidth]{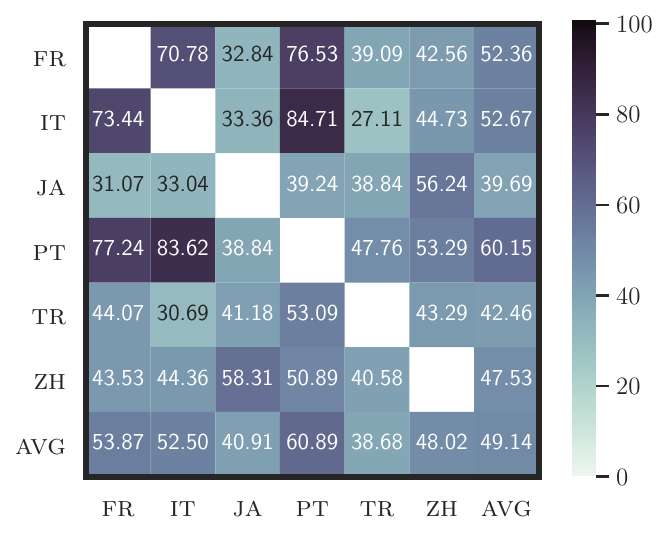}
        \caption{$\varepsilon = 3$, $l = 8$}
        \label{fig:pos_retrieval_tatoeba_3_lay8}
    \end{subfigure}
    \hspace{0.21\textwidth}
    \hspace{0.21\textwidth}
    \begin{subfigure}[b]{0.21\textwidth}
        \centering
        \includegraphics[width=\textwidth]{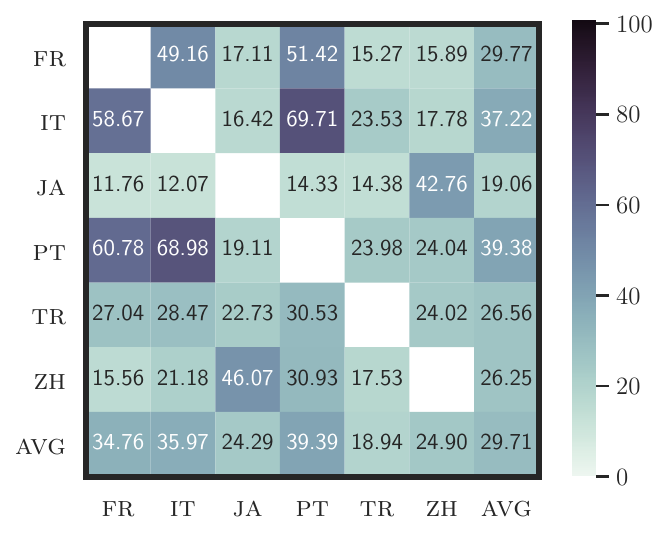}
        \caption{$\varepsilon = 8$, $l = 0$}
        \label{fig:pos_retrieval_tatoeba_8_lay0}
    \end{subfigure}
    \begin{subfigure}[b]{0.21\textwidth}
        \centering
        \includegraphics[width=\textwidth]{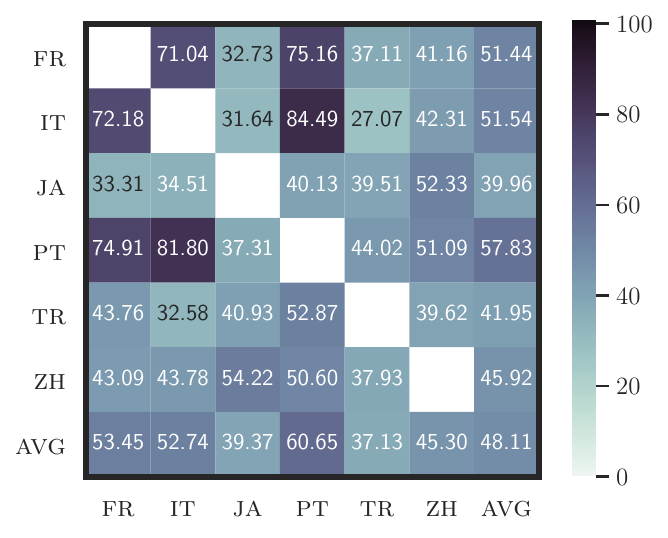}
        \caption{$\varepsilon = 8$, $l = 8$}
        \label{fig:pos_retrieval_tatoeba_8_lay8}
    \end{subfigure}
    \hspace{0.21\textwidth}
    \hspace{0.21\textwidth}
    \begin{subfigure}[b]{0.21\textwidth}
        \centering
        \includegraphics[width=\textwidth]{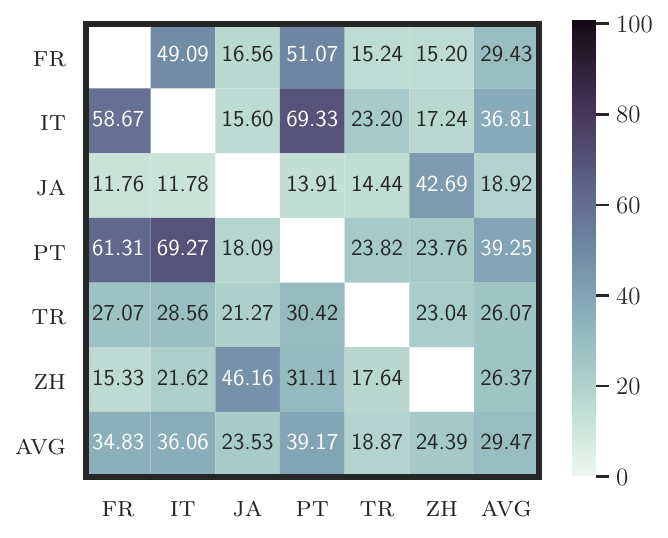}
        \caption{$\varepsilon = 15$, $l = 0$}
        \label{fig:pos_retrieval_tatoeba_15_lay0}
    \end{subfigure}
    \begin{subfigure}[b]{0.21\textwidth}
        \centering
        \includegraphics[width=\textwidth]{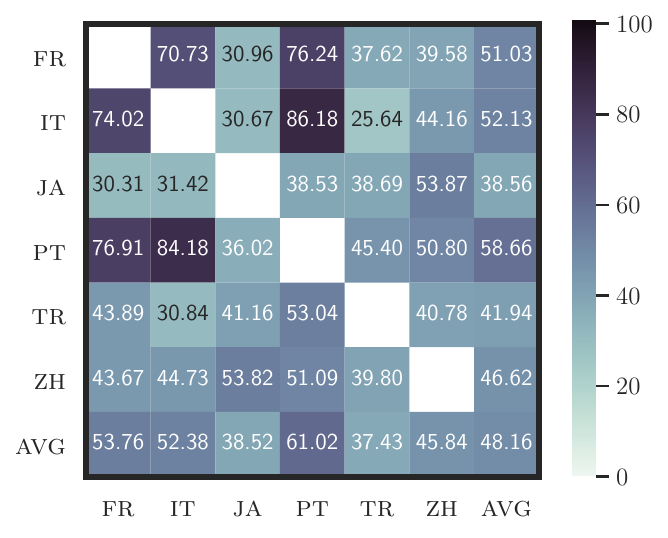}
        \caption{$\varepsilon = 15$, $l = 8$}
        \label{fig:pos_retrieval_tatoeba_15_lay8}
    \end{subfigure}
    \hspace{0.21\textwidth}
    \hspace{0.21\textwidth}
    \begin{subfigure}[b]{0.21\textwidth}
        \centering
        \includegraphics[width=\textwidth]{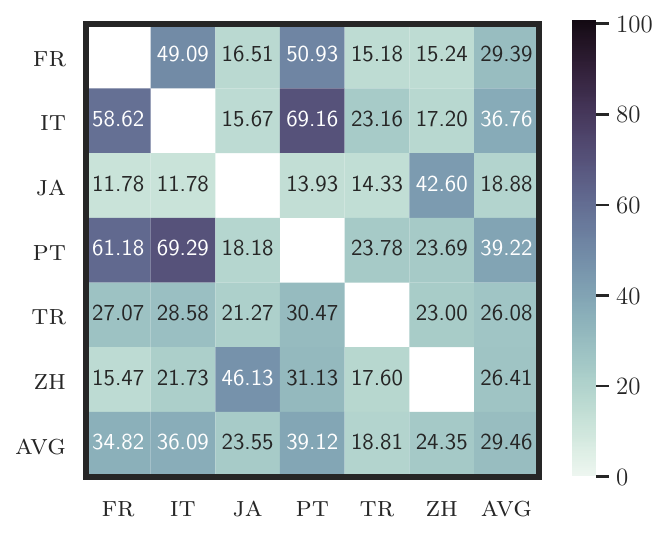}
        \caption{$\varepsilon = 30$, $l = 0$}
        \label{fig:pos_retrieval_tatoeba_30_lay0}
    \end{subfigure}
    \begin{subfigure}[b]{0.21\textwidth}
        \centering
        \includegraphics[width=\textwidth]{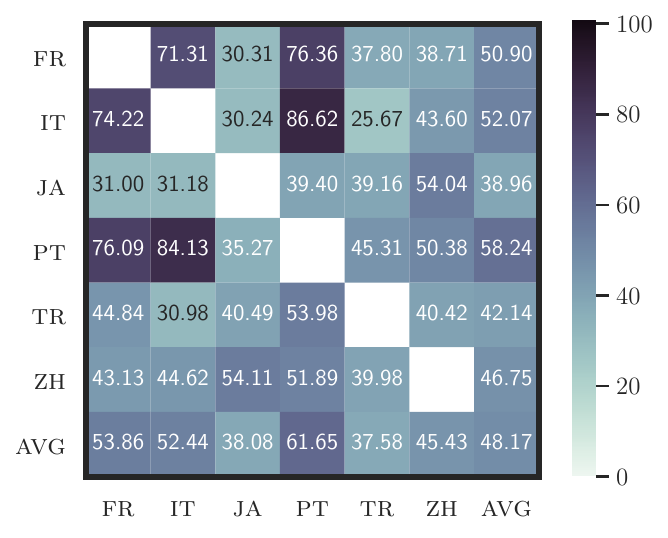}
        \caption{$\varepsilon = 30$, $l = 8$}
        \label{fig:pos_retrieval_tatoeba_30_lay8}
    \end{subfigure}
    \hspace{0.21\textwidth}
    \hspace{0.21\textwidth}
    \begin{subfigure}[b]{0.21\textwidth}
        \centering
        \includegraphics[width=\textwidth]{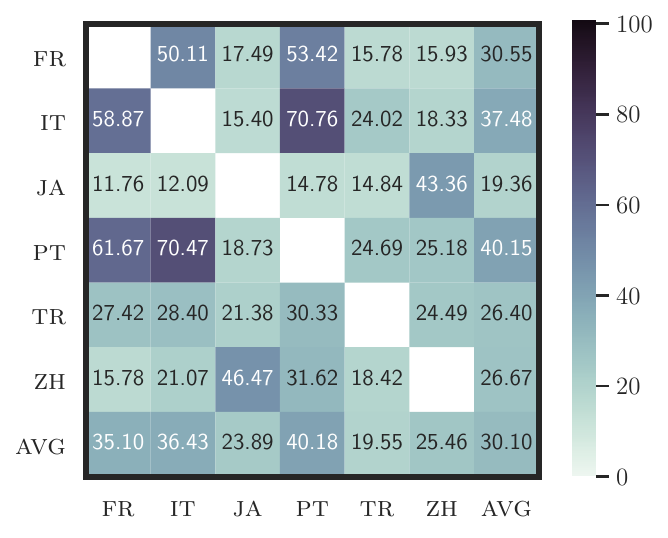}
        \caption{$\varepsilon = \infty$, $l = 0$}
        \label{fig:pos_retrieval_tatoeba_inf_lay0}
    \end{subfigure}
    \begin{subfigure}[b]{0.21\textwidth}
        \centering
        \includegraphics[width=\textwidth]{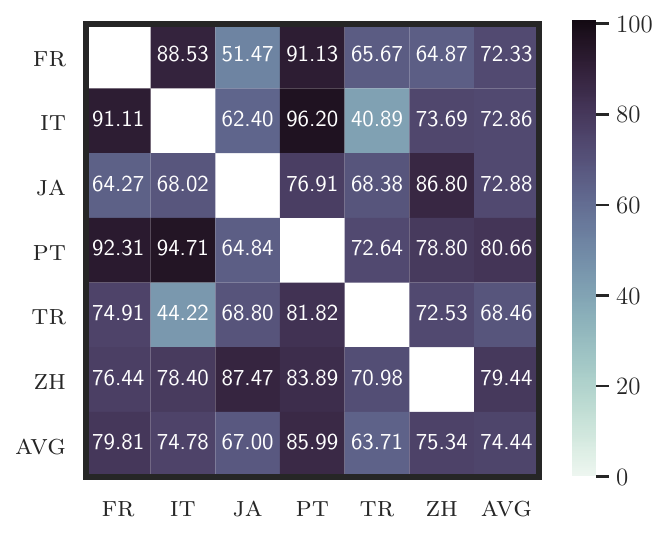}
        \caption{$\varepsilon = \infty$, $l = 8$}
        \label{fig:pos_retrieval_tatoeba_inf_lay8}
    \end{subfigure}
    \caption{\textbf{POS} sentence retrieval results for the Tatoeba dataset and different combinations of privacy budgets ($\varepsilon$) and layers ($l$). Each heatmap cell corresponds to the average over 5 random seeds. We observe that the overall patterns are highly similar across all levels of privacy, particularly at layer 0.}
    \label{fig:pos_retrieval_tatoeba}
\end{figure*}

\begin{figure*}[ht!]
    \centering
    \begin{subfigure}[b]{0.21\textwidth}
        \centering
        \includegraphics[width=\textwidth]{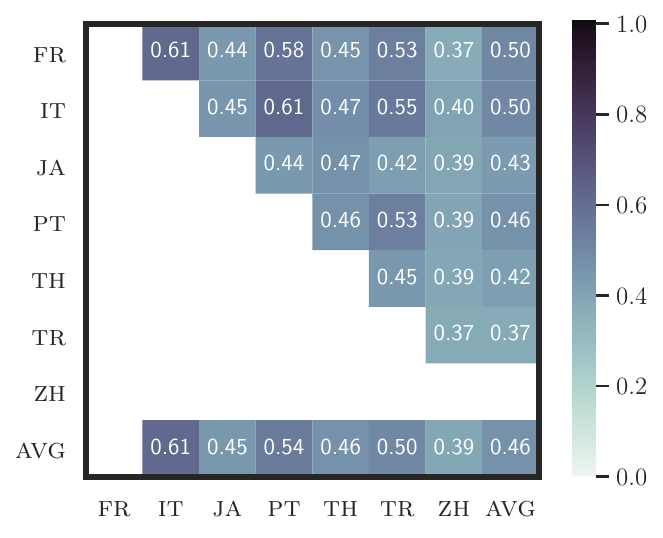}
        \caption{TED, $\varepsilon = 1$, $l = 0$}
        \label{fig:pos_cka_ted2020_1_lay0}
    \end{subfigure}
    \begin{subfigure}[b]{0.21\textwidth}
        \centering
        \includegraphics[width=\textwidth]{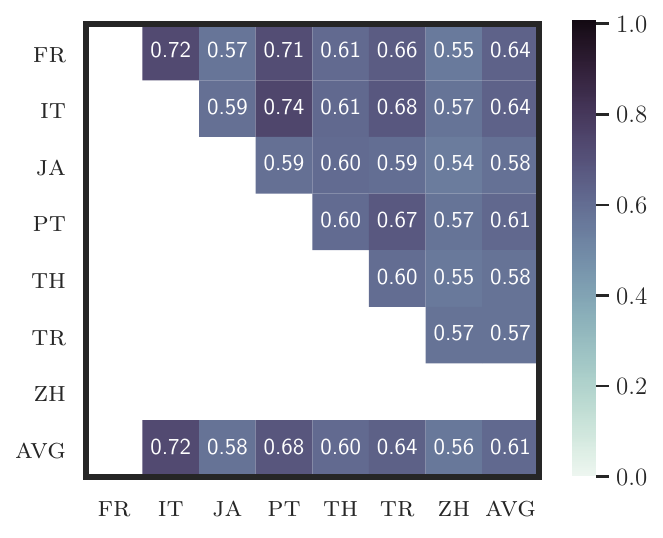}
        \caption{TED, $\varepsilon = 1$, $l = 8$}
        \label{fig:pos_cka_ted2020_1_lay8}
    \end{subfigure}
        \begin{subfigure}[b]{0.21\textwidth}
        \centering
        \includegraphics[width=\textwidth]{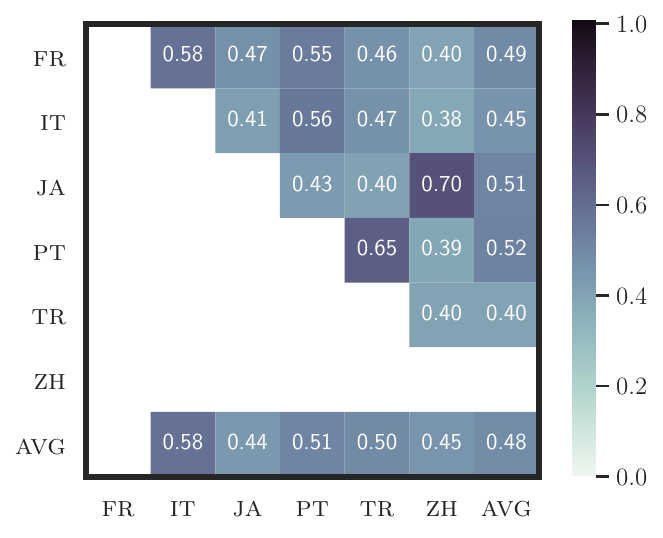}
        \caption{WM, $\varepsilon = 1$, $l = 0$}
        \label{fig:pos_cka_wikimatrix_1_lay0}
    \end{subfigure}
    \begin{subfigure}[b]{0.21\textwidth}
        \centering
        \includegraphics[width=\textwidth]{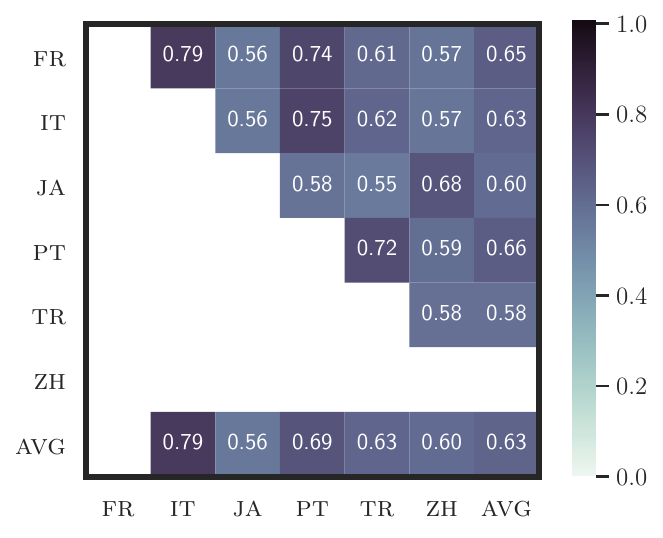}
        \caption{WM, $\varepsilon = 1$, $l = 8$}
        \label{fig:pos_cka_wikimatrix_1_lay8}
    \end{subfigure}
    \begin{subfigure}[b]{0.21\textwidth}
        \centering
        \includegraphics[width=\textwidth]{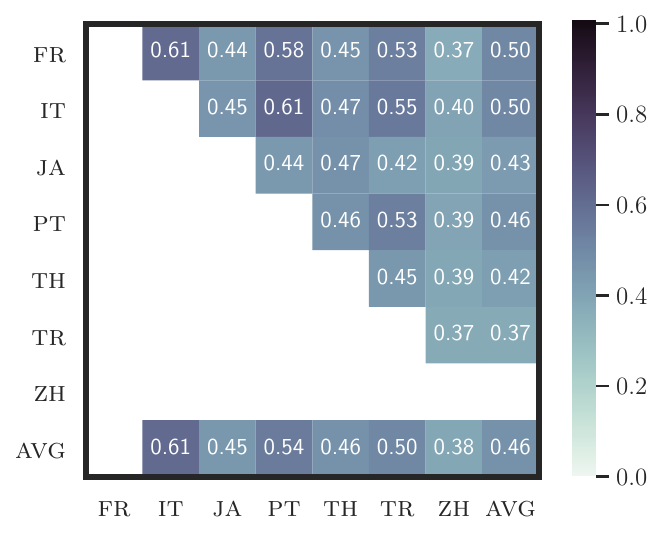}
        \caption{TED, $\varepsilon = 3$, $l = 0$}
        \label{fig:pos_cka_ted2020_3_lay0}
    \end{subfigure}
    \begin{subfigure}[b]{0.21\textwidth}
        \centering
        \includegraphics[width=\textwidth]{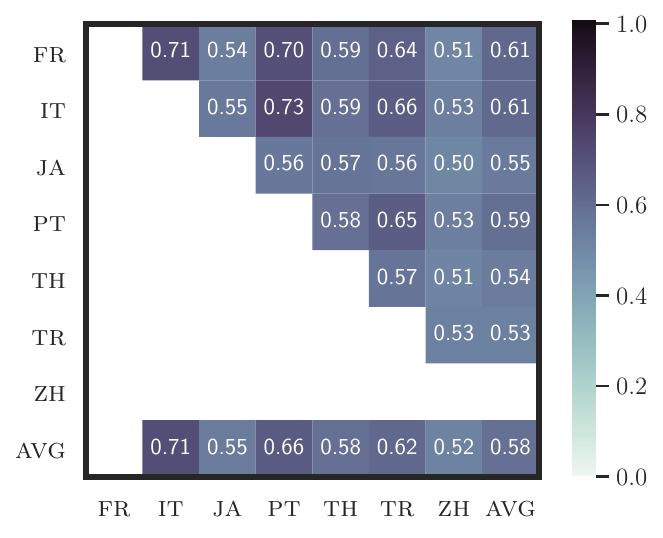}
        \caption{TED, $\varepsilon = 3$, $l = 8$}
        \label{fig:pos_cka_ted2020_3_lay8}
    \end{subfigure}
        \begin{subfigure}[b]{0.21\textwidth}
        \centering
        \includegraphics[width=\textwidth]{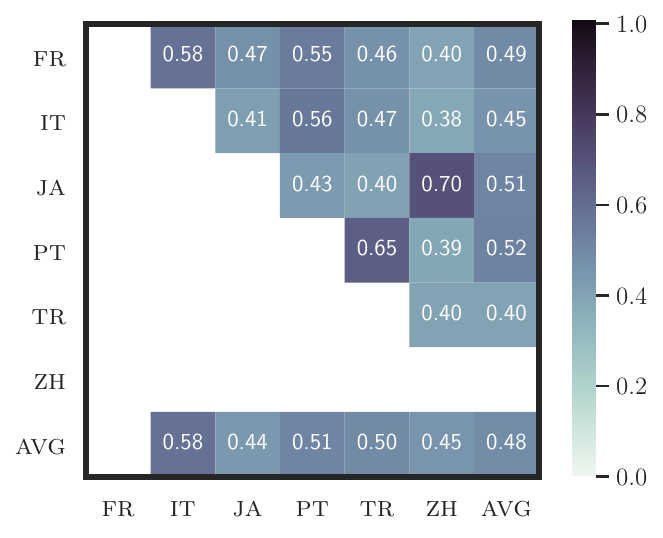}
        \caption{WM, $\varepsilon = 3$, $l = 0$}
        \label{fig:pos_cka_wikimatrix_3_lay0}
    \end{subfigure}
    \begin{subfigure}[b]{0.21\textwidth}
        \centering
        \includegraphics[width=\textwidth]{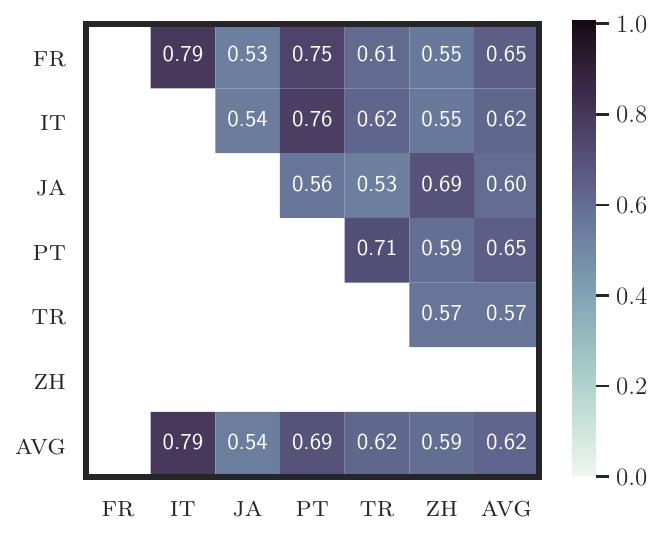}
        \caption{WM, $\varepsilon = 3$, $l = 8$}
        \label{fig:pos_cka_wikimatrix_3_lay8}
    \end{subfigure}
    \begin{subfigure}[b]{0.21\textwidth}
        \centering
        \includegraphics[width=\textwidth]{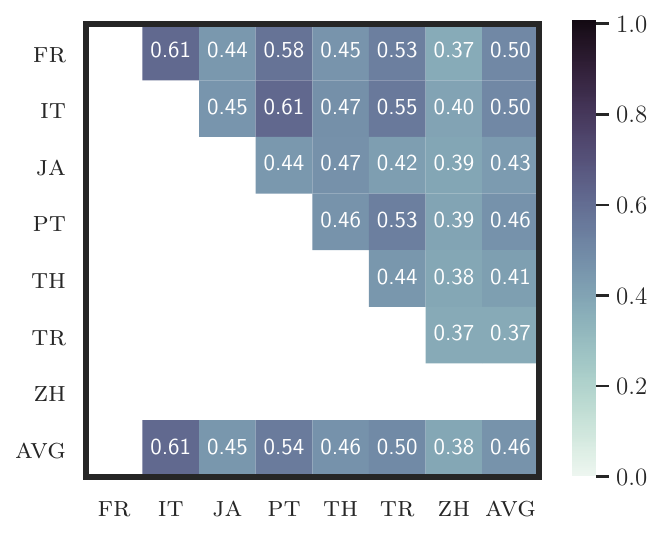}
        \caption{TED, $\varepsilon = 8$, $l = 0$}
        \label{fig:pos_cka_ted2020_8_lay0}
    \end{subfigure}
    \begin{subfigure}[b]{0.21\textwidth}
        \centering
        \includegraphics[width=\textwidth]{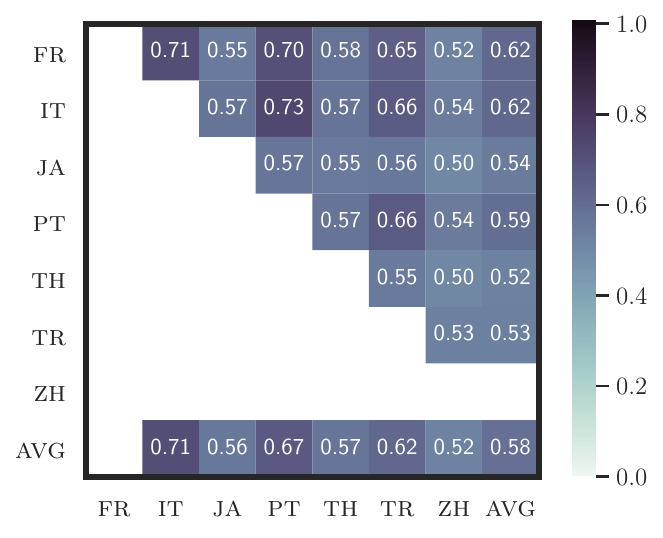}
        \caption{TED, $\varepsilon = 8$, $l = 8$}
        \label{fig:pos_cka_ted2020_8_lay8}
    \end{subfigure}
        \begin{subfigure}[b]{0.21\textwidth}
        \centering
        \includegraphics[width=\textwidth]{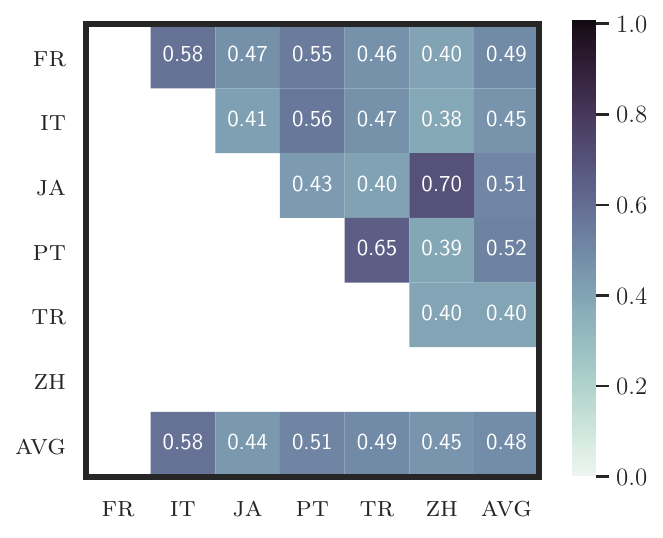}
        \caption{WM, $\varepsilon = 8$, $l = 0$}
        \label{fig:pos_cka_wikimatrix_8_lay0}
    \end{subfigure}
    \begin{subfigure}[b]{0.21\textwidth}
        \centering
        \includegraphics[width=\textwidth]{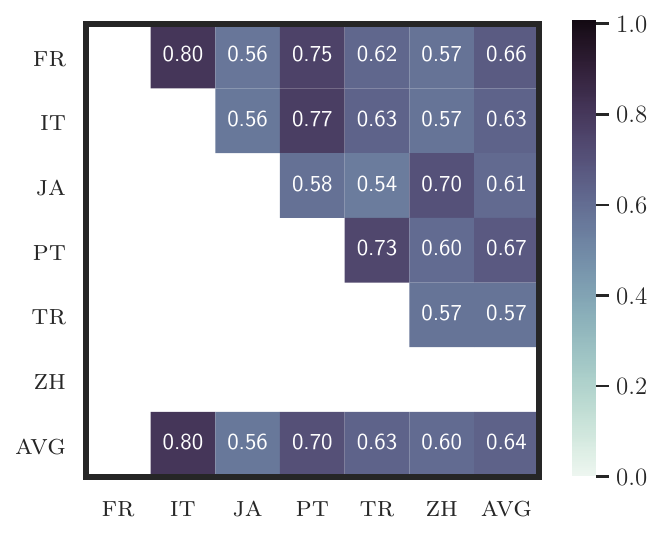}
        \caption{WM, $\varepsilon = 8$, $l = 8$}
        \label{fig:pos_cka_wikimatrix_8_lay8}
    \end{subfigure}
    \begin{subfigure}[b]{0.21\textwidth}
        \centering
        \includegraphics[width=\textwidth]{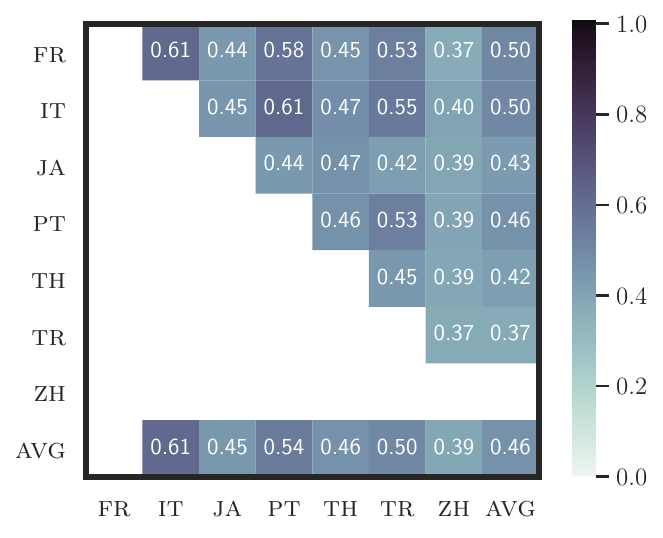}
        \caption{TED, $\varepsilon = 15$, $l = 0$}
        \label{fig:pos_cka_ted2020_15_lay0}
    \end{subfigure}
    \begin{subfigure}[b]{0.21\textwidth}
        \centering
        \includegraphics[width=\textwidth]{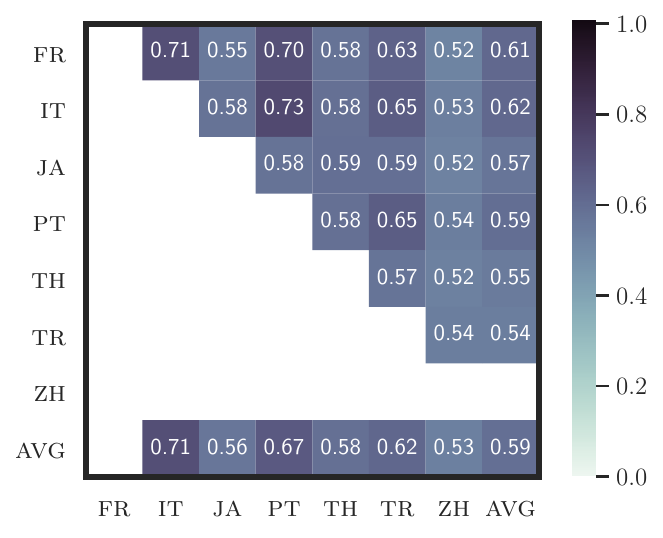}
        \caption{TED, $\varepsilon = 15$, $l = 8$}
        \label{fig:pos_cka_ted2020_15_lay8}
    \end{subfigure}
        \begin{subfigure}[b]{0.21\textwidth}
        \centering
        \includegraphics[width=\textwidth]{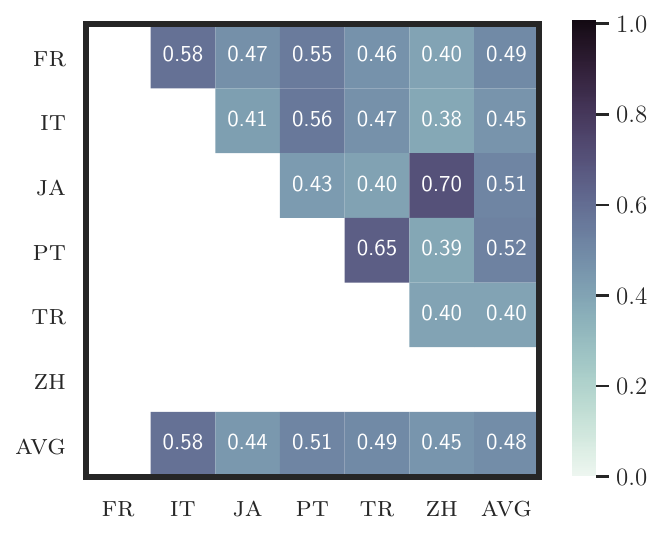}
        \caption{WM, $\varepsilon = 15$, $l = 0$}
        \label{fig:pos_cka_wikimatrix_15_lay0}
    \end{subfigure}
    \begin{subfigure}[b]{0.21\textwidth}
        \centering
        \includegraphics[width=\textwidth]{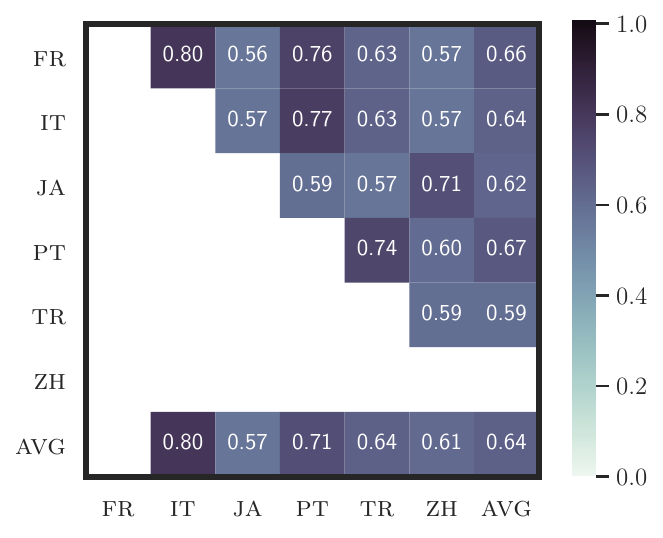}
        \caption{WM, $\varepsilon = 15$, $l = 8$}
        \label{fig:pos_cka_wikimatrix_15_lay8}
    \end{subfigure}
    \begin{subfigure}[b]{0.21\textwidth}
        \centering
        \includegraphics[width=\textwidth]{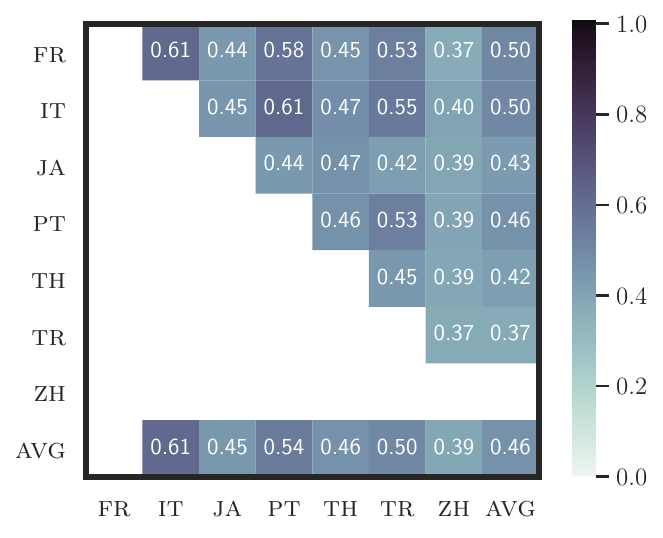}
        \caption{TED, $\varepsilon = 30$, $l = 0$}
        \label{fig:pos_cka_ted2020_30_lay0}
    \end{subfigure}
    \begin{subfigure}[b]{0.21\textwidth}
        \centering
        \includegraphics[width=\textwidth]{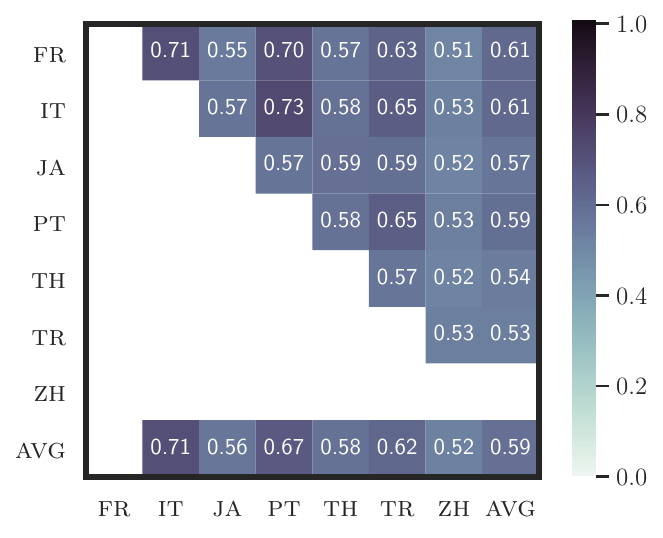}
        \caption{TED, $\varepsilon = 30$, $l = 8$}
        \label{fig:pos_cka_ted2020_30_lay8}
    \end{subfigure}
        \begin{subfigure}[b]{0.21\textwidth}
        \centering
        \includegraphics[width=\textwidth]{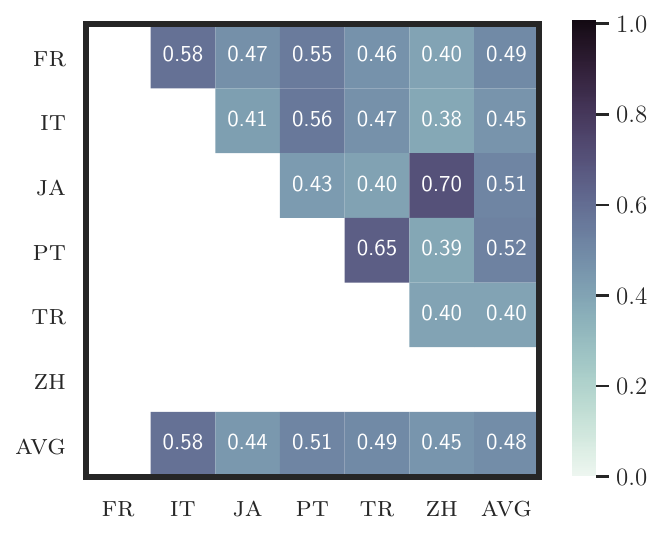}
        \caption{WM, $\varepsilon = 30$, $l = 0$}
        \label{fig:pos_cka_wikimatrix_30_lay0}
    \end{subfigure}
    \begin{subfigure}[b]{0.21\textwidth}
        \centering
        \includegraphics[width=\textwidth]{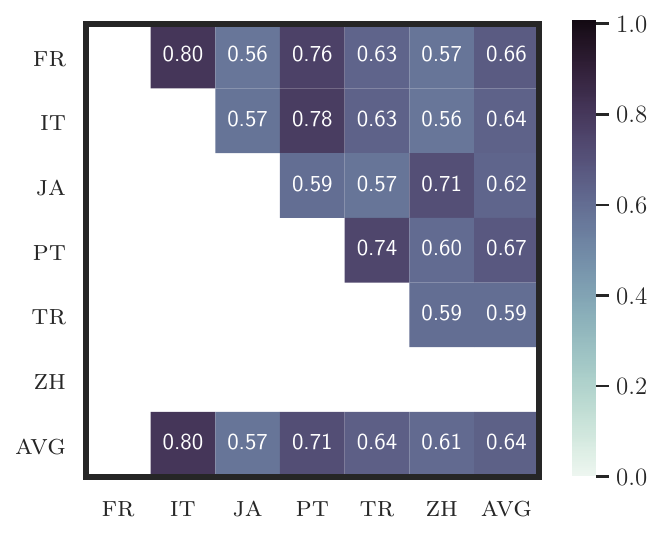}
        \caption{WM, $\varepsilon = 30$, $l = 8$}
        \label{fig:pos_cka_wikimatrix_30_lay8}
    \end{subfigure}
    \begin{subfigure}[b]{0.21\textwidth}
        \centering
        \includegraphics[width=\textwidth]{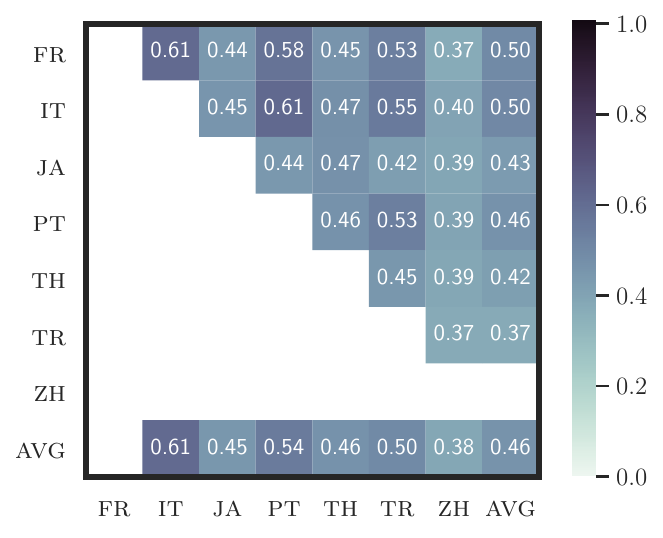}
        \caption{TED, $\varepsilon = \infty$, $l = 0$}
        \label{fig:pos_cka_ted2020_inf_lay0}
    \end{subfigure}
    \begin{subfigure}[b]{0.21\textwidth}
        \centering
        \includegraphics[width=\textwidth]{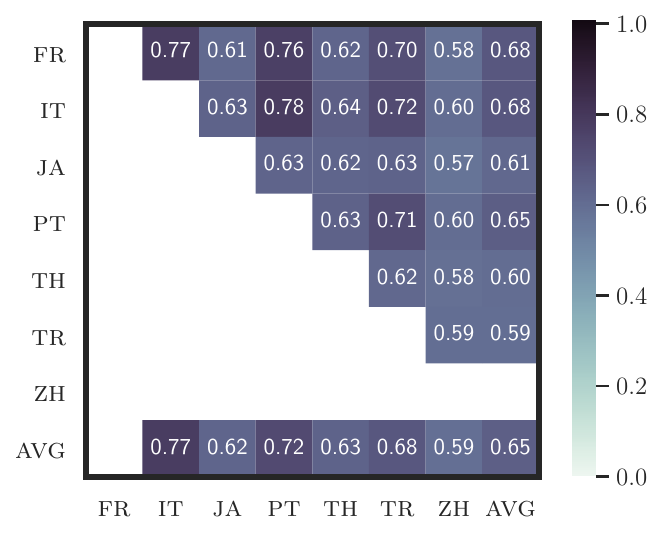}
        \caption{TED, $\varepsilon = \infty$, $l = 8$}
        \label{fig:pos_cka_ted2020_inf_lay8}
    \end{subfigure}
        \begin{subfigure}[b]{0.21\textwidth}
        \centering
        \includegraphics[width=\textwidth]{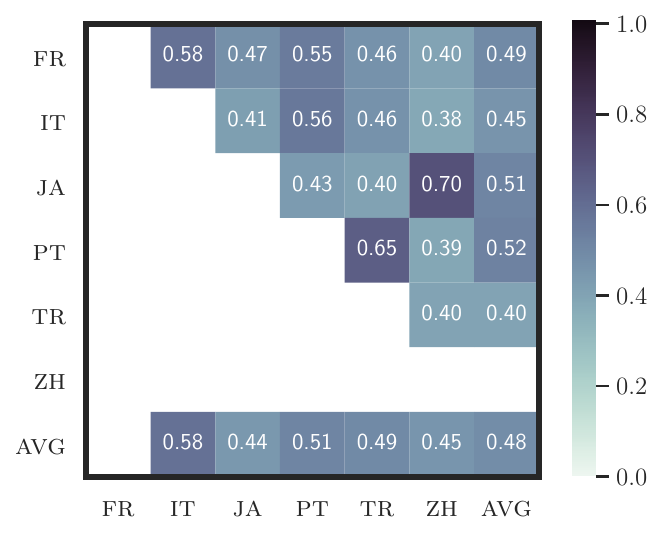}
        \caption{WM, $\varepsilon = \infty$, $l = 0$}
        \label{fig:pos_cka_wikimatrix_inf_lay0}
    \end{subfigure}
    \begin{subfigure}[b]{0.21\textwidth}
        \centering
        \includegraphics[width=\textwidth]{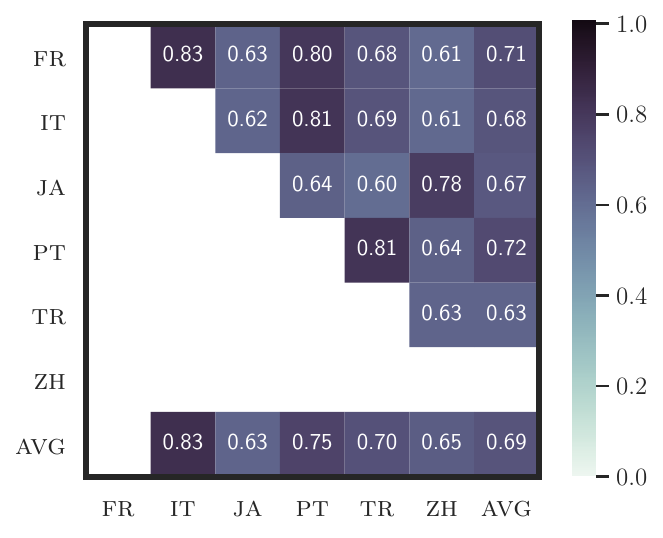}
        \caption{WM, $\varepsilon = \infty$, $l = 8$}
        \label{fig:pos_cka_wikimatrix_inf_lay8}
    \end{subfigure}
    \caption{\textbf{POS} CKA results for the TED 2020 (TED) and WikiMatrix (WM) datasets and different combinations of privacy budgets ($\varepsilon$) and layers ($l$). Each heatmap cell corresponds to the average over 5 random seeds. We observe that the overall patterns are highly similar across all levels of privacy, particularly at layer 0.}
    \label{fig:pos_cka_tedwm}
\end{figure*}

\begin{figure*}[ht!]
    \centering
    \begin{subfigure}[b]{0.21\textwidth}
        \centering
        \includegraphics[width=\textwidth]{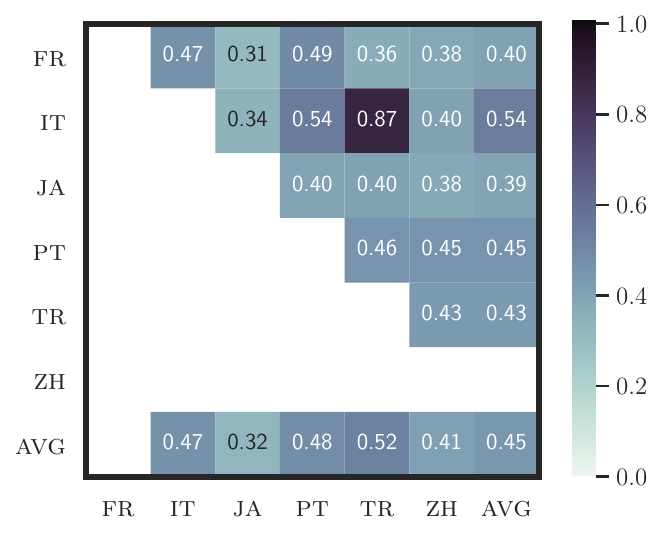}
        \caption{$\varepsilon = 1$, $l = 0$}
        \label{fig:pos_cka_tatoeba_1_lay0}
    \end{subfigure}
    \begin{subfigure}[b]{0.21\textwidth}
        \centering
        \includegraphics[width=\textwidth]{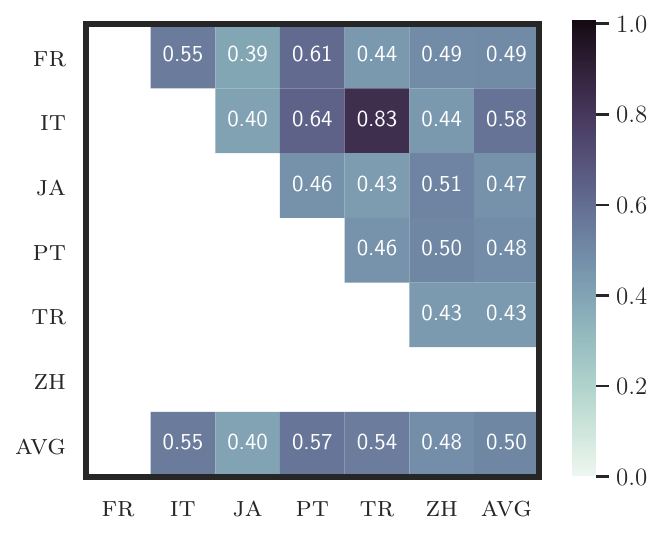}
        \caption{$\varepsilon = 1$, $l = 8$}
        \label{fig:pos_cka_tatoeba_1_lay8}
    \end{subfigure}
    \hspace{0.21\textwidth}
    \hspace{0.21\textwidth}
    \begin{subfigure}[b]{0.21\textwidth}
        \centering
        \includegraphics[width=\textwidth]{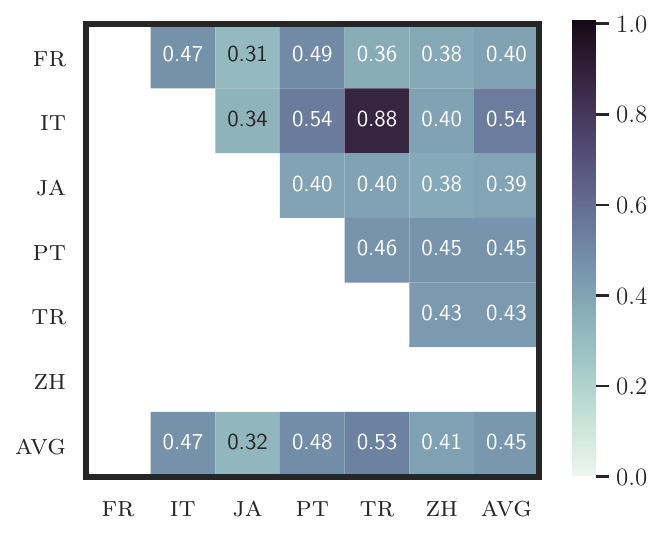}
        \caption{$\varepsilon = 3$, $l = 0$}
        \label{fig:pos_cka_tatoeba_3_lay0}
    \end{subfigure}
    \begin{subfigure}[b]{0.21\textwidth}
        \centering
        \includegraphics[width=\textwidth]{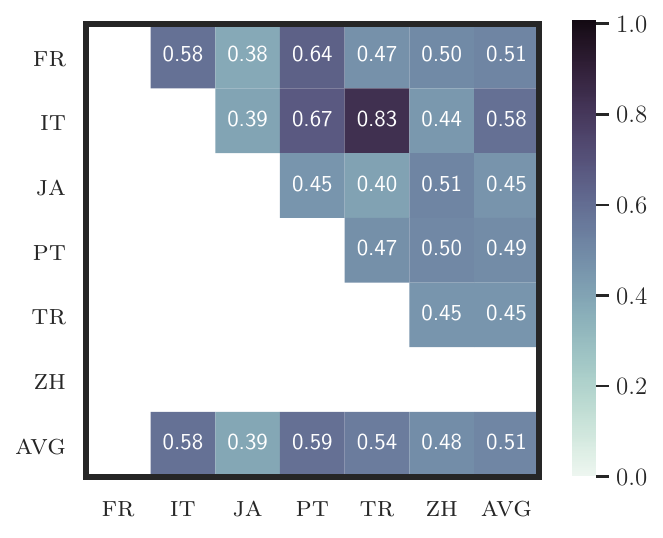}
        \caption{$\varepsilon = 3$, $l = 8$}
        \label{fig:pos_cka_tatoeba_3_lay8}
    \end{subfigure}
    \hspace{0.21\textwidth}
    \hspace{0.21\textwidth}
    \begin{subfigure}[b]{0.21\textwidth}
        \centering
        \includegraphics[width=\textwidth]{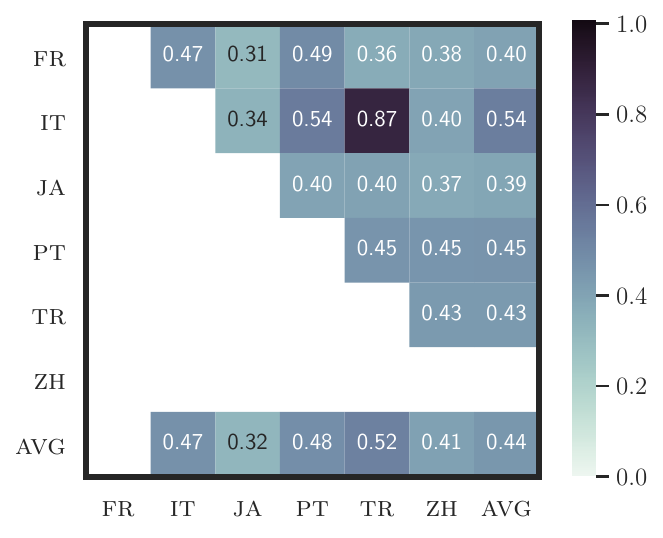}
        \caption{$\varepsilon = 8$, $l = 0$}
        \label{fig:pos_cka_tatoeba_8_lay0}
    \end{subfigure}
    \begin{subfigure}[b]{0.21\textwidth}
        \centering
        \includegraphics[width=\textwidth]{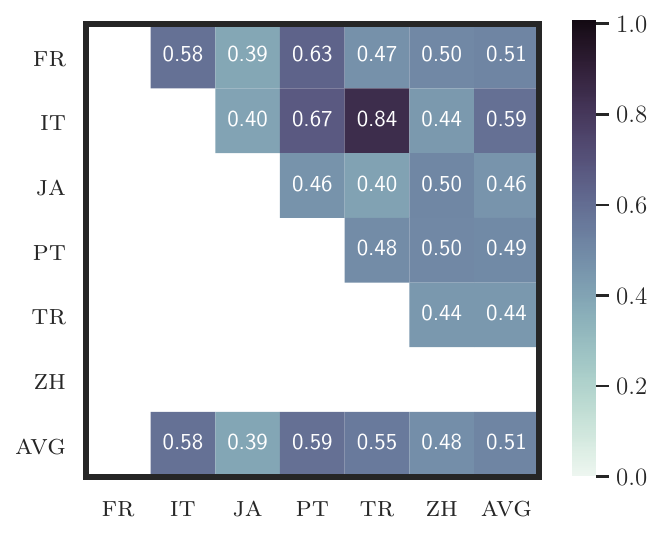}
        \caption{$\varepsilon = 8$, $l = 8$}
        \label{fig:pos_cka_tatoeba_8_lay8}
    \end{subfigure}
    \hspace{0.21\textwidth}
    \hspace{0.21\textwidth}
    \begin{subfigure}[b]{0.21\textwidth}
        \centering
        \includegraphics[width=\textwidth]{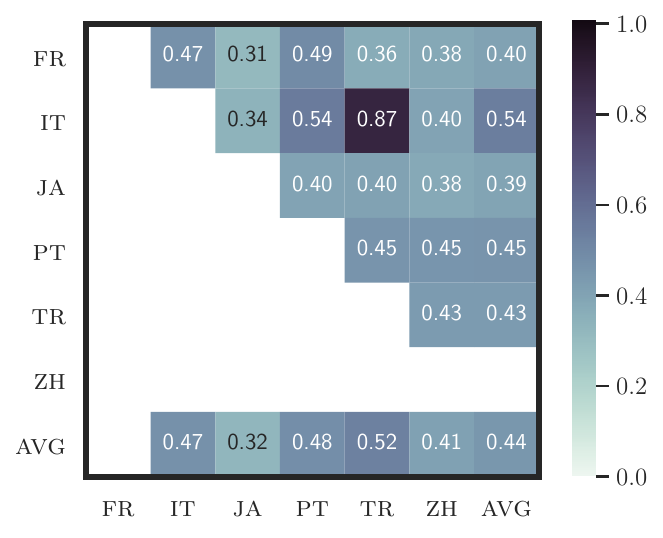}
        \caption{$\varepsilon = 15$, $l = 0$}
        \label{fig:pos_cka_tatoeba_15_lay0}
    \end{subfigure}
    \begin{subfigure}[b]{0.21\textwidth}
        \centering
        \includegraphics[width=\textwidth]{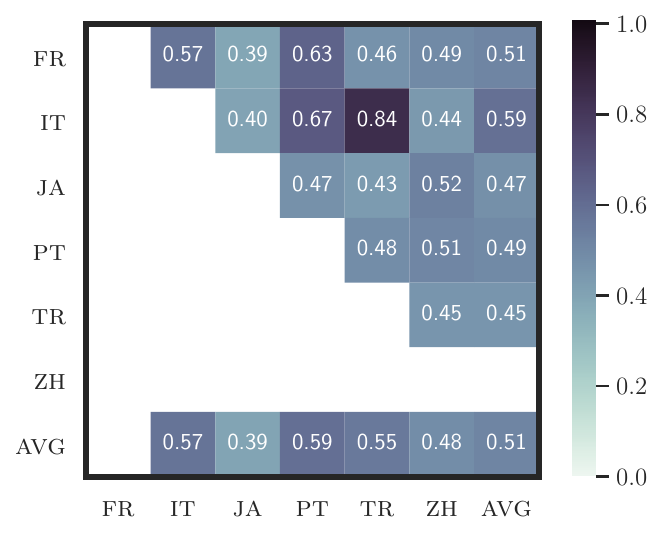}
        \caption{$\varepsilon = 15$, $l = 8$}
        \label{fig:pos_cka_tatoeba_15_lay8}
    \end{subfigure}
    \hspace{0.21\textwidth}
    \hspace{0.21\textwidth}
    \begin{subfigure}[b]{0.21\textwidth}
        \centering
        \includegraphics[width=\textwidth]{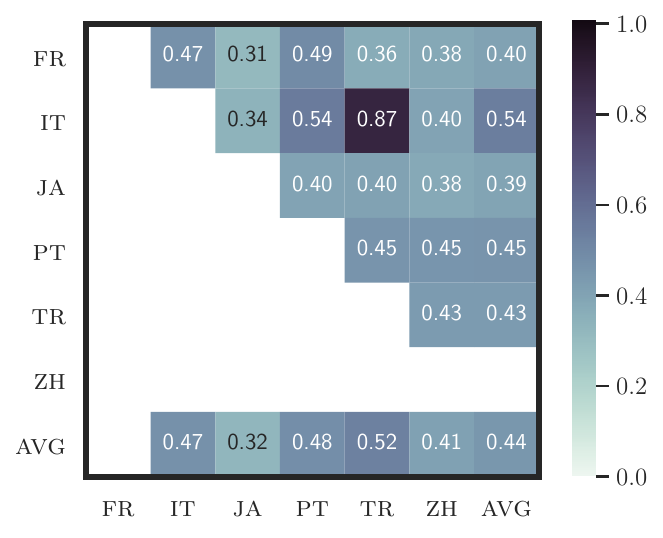}
        \caption{$\varepsilon = 30$, $l = 0$}
        \label{fig:pos_cka_tatoeba_30_lay0}
    \end{subfigure}
    \begin{subfigure}[b]{0.21\textwidth}
        \centering
        \includegraphics[width=\textwidth]{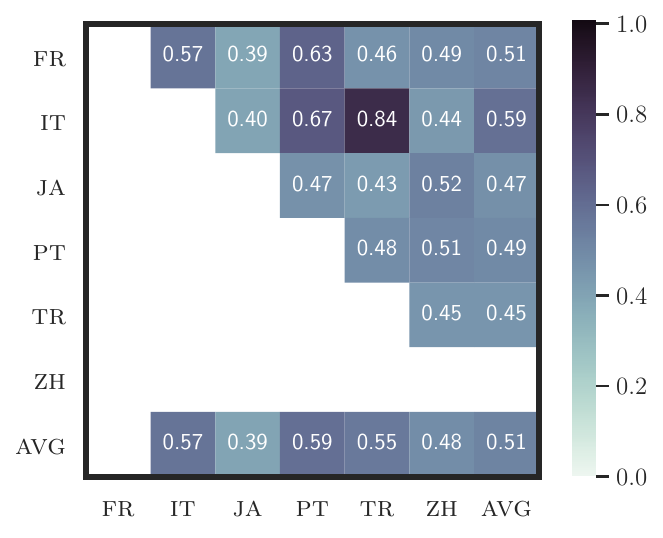}
        \caption{$\varepsilon = 30$, $l = 8$}
        \label{fig:pos_cka_tatoeba_30_lay8}
    \end{subfigure}
    \hspace{0.21\textwidth}
    \hspace{0.21\textwidth}
    \begin{subfigure}[b]{0.21\textwidth}
        \centering
        \includegraphics[width=\textwidth]{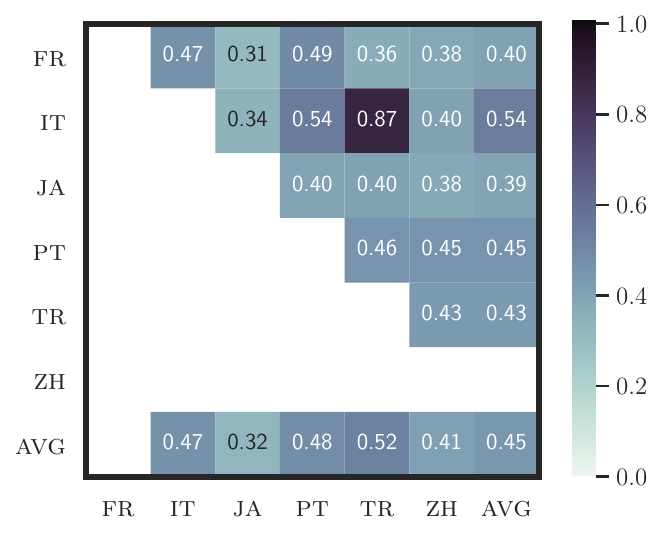}
        \caption{$\varepsilon = \infty$, $l = 0$}
        \label{fig:pos_cka_tatoeba_inf_lay0}
    \end{subfigure}
    \begin{subfigure}[b]{0.21\textwidth}
        \centering
        \includegraphics[width=\textwidth]{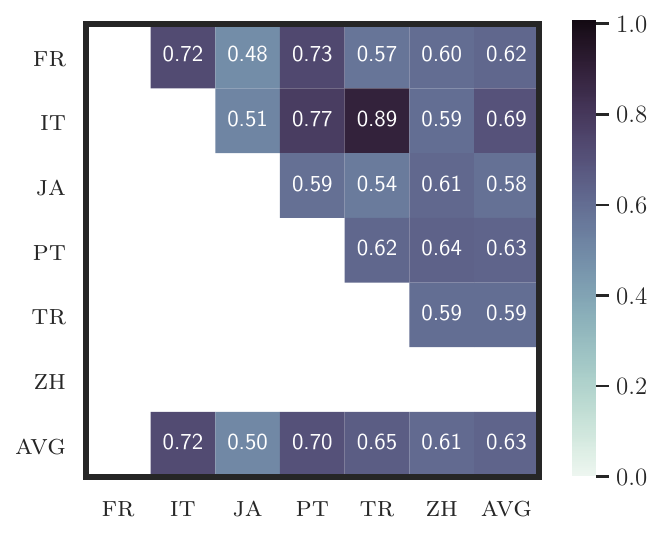}
        \caption{$\varepsilon = \infty$, $l = 8$}
        \label{fig:pos_cka_tatoeba_inf_lay8}
    \end{subfigure}
    \caption{\textbf{POS} CKA results for the Tatoeba dataset and different combinations of privacy budgets ($\varepsilon$) and layers ($l$). Each heatmap cell corresponds to the average over 5 random seeds. We observe that the overall patterns are highly similar across all levels of privacy, particularly at layer 0.}
    \label{fig:pos_cka_tatoeba}
\end{figure*}

\begin{figure*}[ht!]
    \centering
    \begin{subfigure}[b]{0.21\textwidth}
        \centering
        \includegraphics[width=\textwidth]{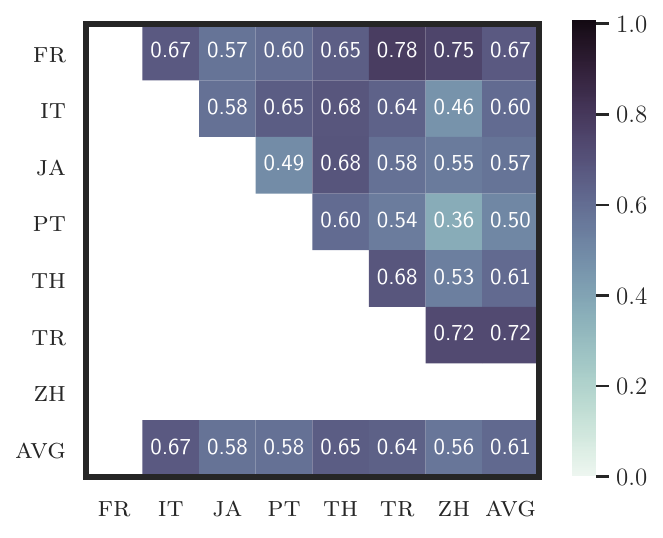}
        \caption{TED, $\varepsilon = 1$, $l = 0$}
        \label{fig:pos_rsa_ted2020_1_lay0}
    \end{subfigure}
    \begin{subfigure}[b]{0.21\textwidth}
        \centering
        \includegraphics[width=\textwidth]{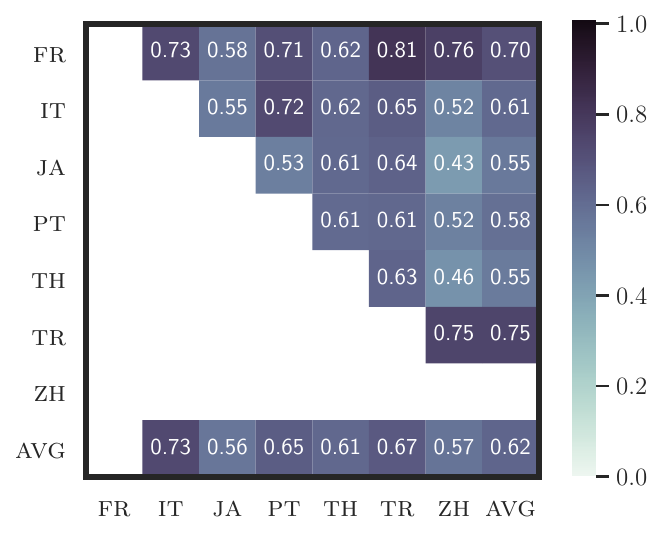}
        \caption{TED, $\varepsilon = 1$, $l = 8$}
        \label{fig:pos_rsa_ted2020_1_lay8}
    \end{subfigure}
        \begin{subfigure}[b]{0.21\textwidth}
        \centering
        \includegraphics[width=\textwidth]{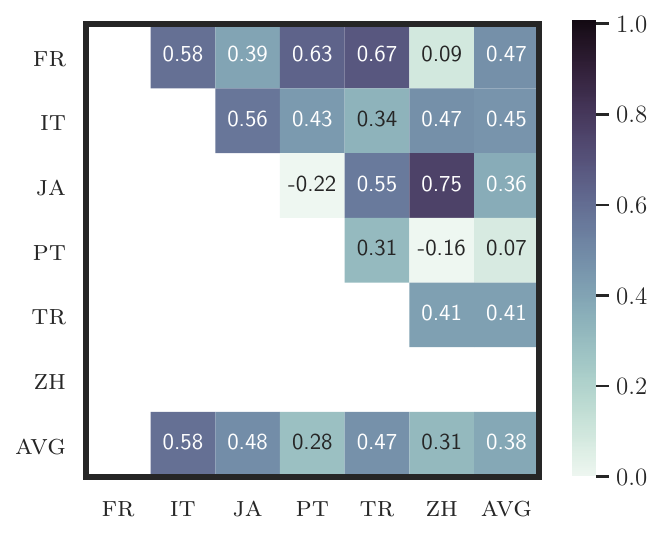}
        \caption{WM, $\varepsilon = 1$, $l = 0$}
        \label{fig:pos_rsa_wikimatrix_1_lay0}
    \end{subfigure}
    \begin{subfigure}[b]{0.21\textwidth}
        \centering
        \includegraphics[width=\textwidth]{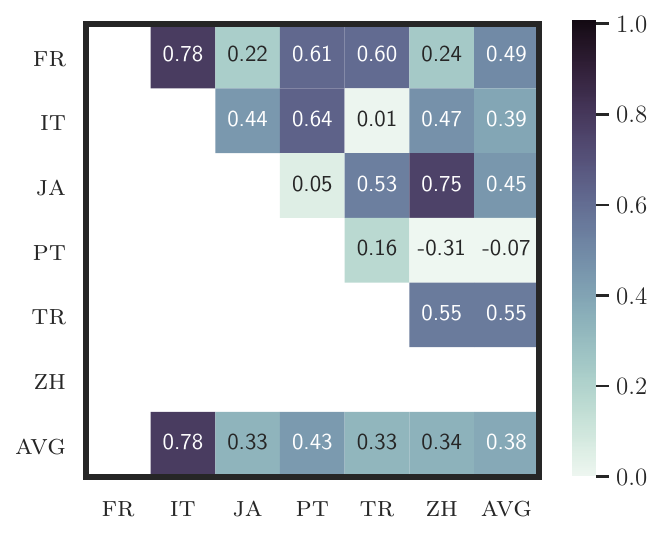}
        \caption{WM, $\varepsilon = 1$, $l = 8$}
        \label{fig:pos_rsa_wikimatrix_1_lay8}
    \end{subfigure}
    \begin{subfigure}[b]{0.21\textwidth}
        \centering
        \includegraphics[width=\textwidth]{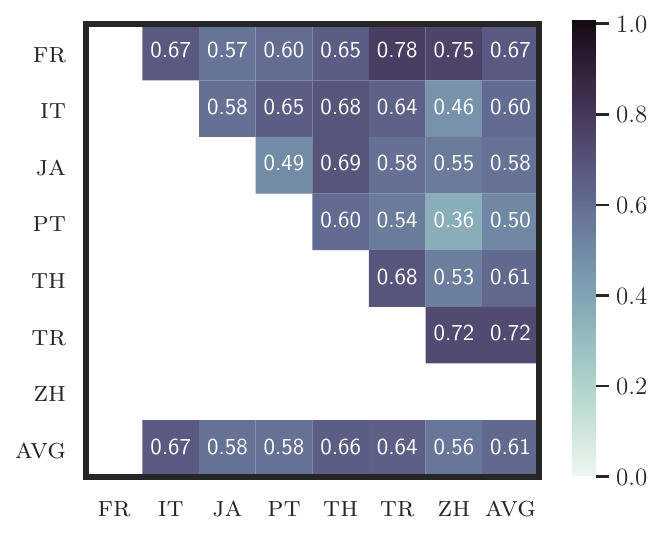}
        \caption{TED, $\varepsilon = 3$, $l = 0$}
        \label{fig:pos_rsa_ted2020_3_lay0}
    \end{subfigure}
    \begin{subfigure}[b]{0.21\textwidth}
        \centering
        \includegraphics[width=\textwidth]{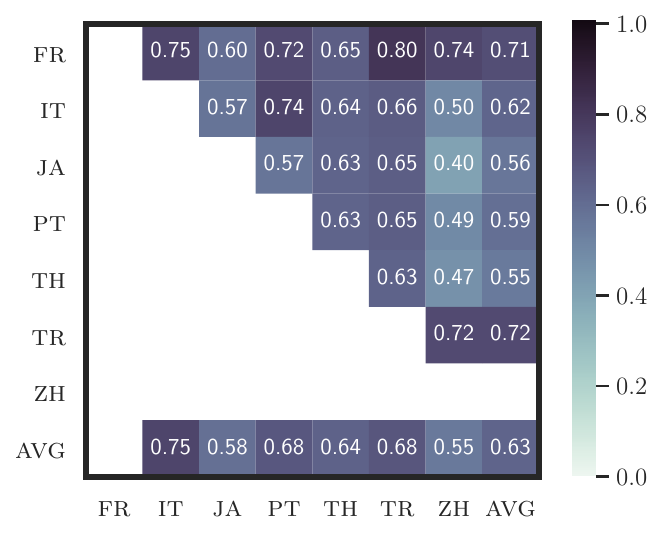}
        \caption{TED, $\varepsilon = 3$, $l = 8$}
        \label{fig:pos_rsa_ted2020_3_lay8}
    \end{subfigure}
        \begin{subfigure}[b]{0.21\textwidth}
        \centering
        \includegraphics[width=\textwidth]{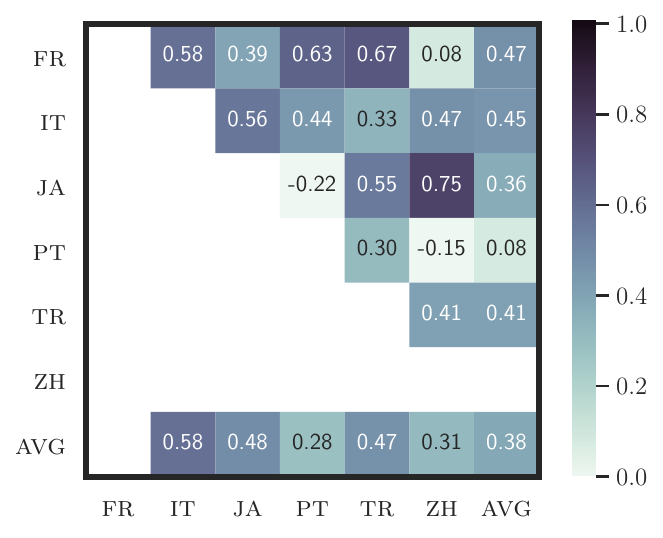}
        \caption{WM, $\varepsilon = 3$, $l = 0$}
        \label{fig:pos_rsa_wikimatrix_3_lay0}
    \end{subfigure}
    \begin{subfigure}[b]{0.21\textwidth}
        \centering
        \includegraphics[width=\textwidth]{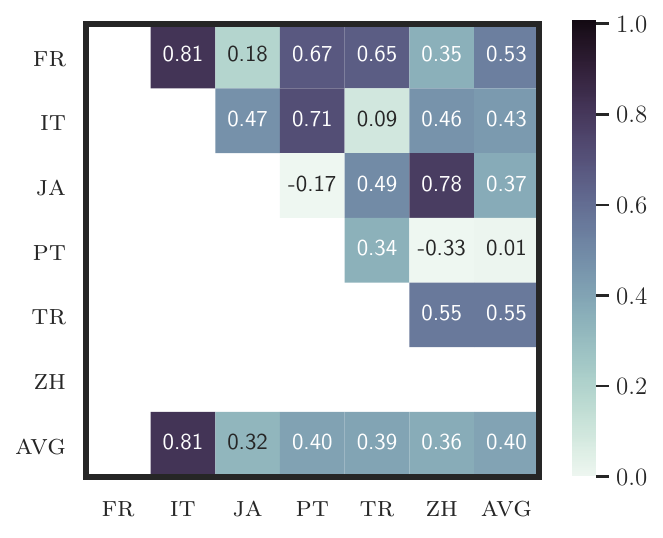}
        \caption{WM, $\varepsilon = 3$, $l = 8$}
        \label{fig:pos_rsa_wikimatrix_3_lay8}
    \end{subfigure}
    \begin{subfigure}[b]{0.21\textwidth}
        \centering
        \includegraphics[width=\textwidth]{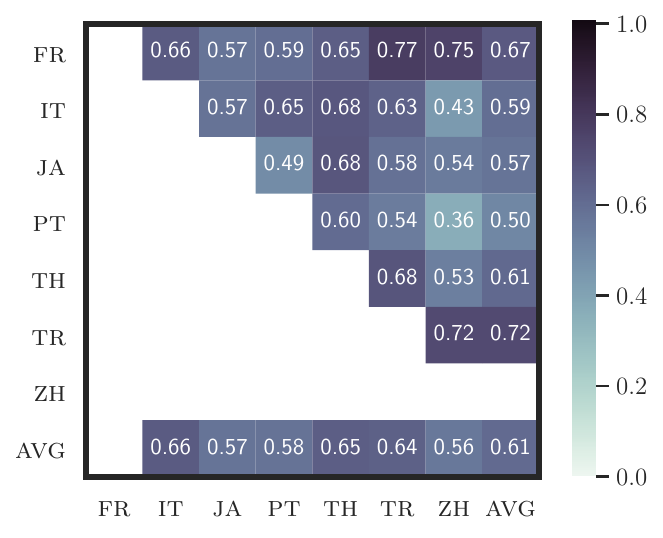}
        \caption{TED, $\varepsilon = 8$, $l = 0$}
        \label{fig:pos_rsa_ted2020_8_lay0}
    \end{subfigure}
    \begin{subfigure}[b]{0.21\textwidth}
        \centering
        \includegraphics[width=\textwidth]{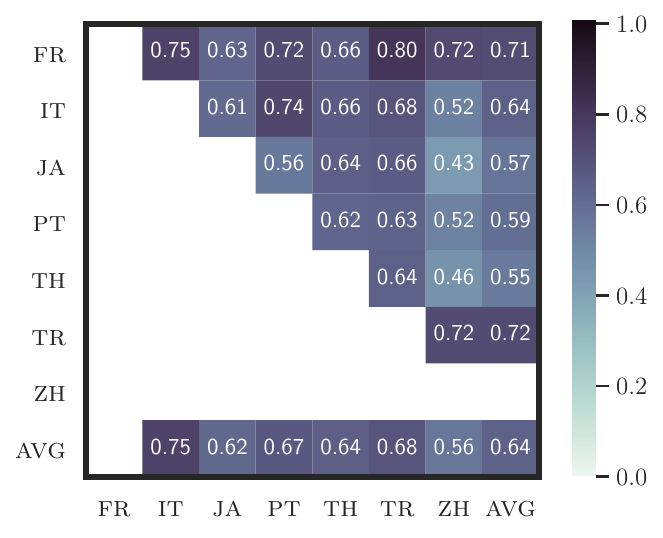}
        \caption{TED, $\varepsilon = 8$, $l = 8$}
        \label{fig:pos_rsa_ted2020_8_lay8}
    \end{subfigure}
        \begin{subfigure}[b]{0.21\textwidth}
        \centering
        \includegraphics[width=\textwidth]{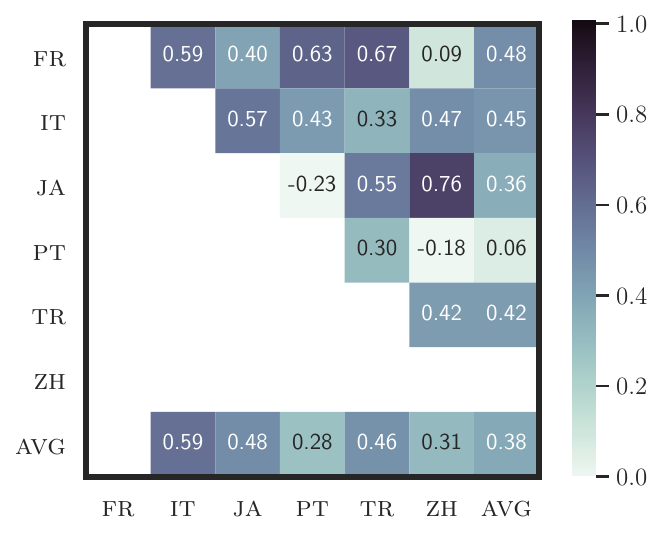}
        \caption{WM, $\varepsilon = 8$, $l = 0$}
        \label{fig:pos_rsa_wikimatrix_8_lay0}
    \end{subfigure}
    \begin{subfigure}[b]{0.21\textwidth}
        \centering
        \includegraphics[width=\textwidth]{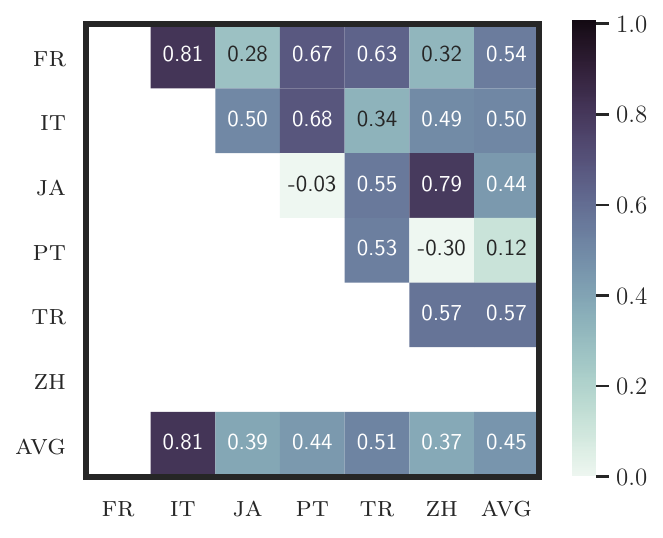}
        \caption{WM, $\varepsilon = 8$, $l = 8$}
        \label{fig:pos_rsa_wikimatrix_8_lay8}
    \end{subfigure}
    \begin{subfigure}[b]{0.21\textwidth}
        \centering
        \includegraphics[width=\textwidth]{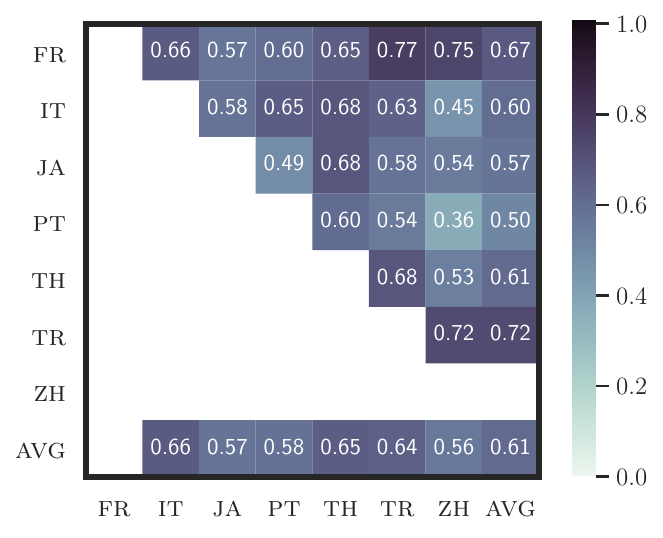}
        \caption{TED, $\varepsilon = 15$, $l = 0$}
        \label{fig:pos_rsa_ted2020_15_lay0}
    \end{subfigure}
    \begin{subfigure}[b]{0.21\textwidth}
        \centering
        \includegraphics[width=\textwidth]{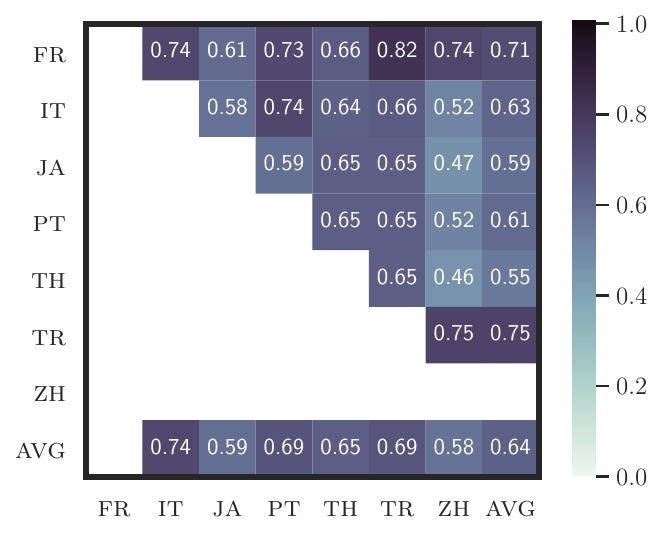}
        \caption{TED, $\varepsilon = 15$, $l = 8$}
        \label{fig:pos_rsa_ted2020_15_lay8}
    \end{subfigure}
        \begin{subfigure}[b]{0.21\textwidth}
        \centering
        \includegraphics[width=\textwidth]{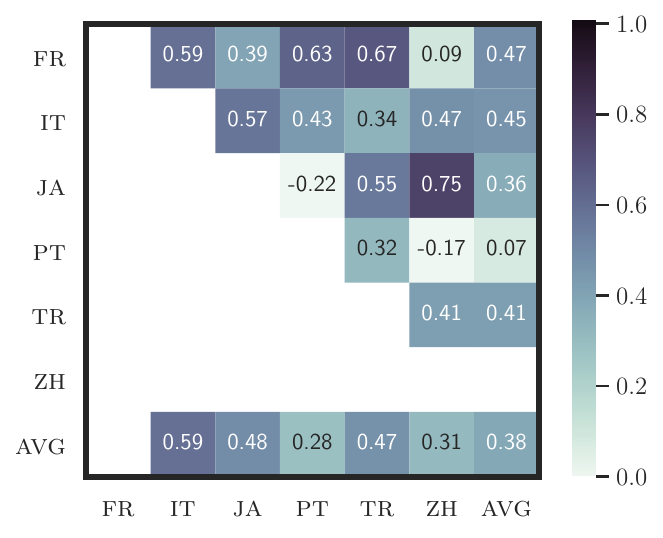}
        \caption{WM, $\varepsilon = 15$, $l = 0$}
        \label{fig:pos_rsa_wikimatrix_15_lay0}
    \end{subfigure}
    \begin{subfigure}[b]{0.21\textwidth}
        \centering
        \includegraphics[width=\textwidth]{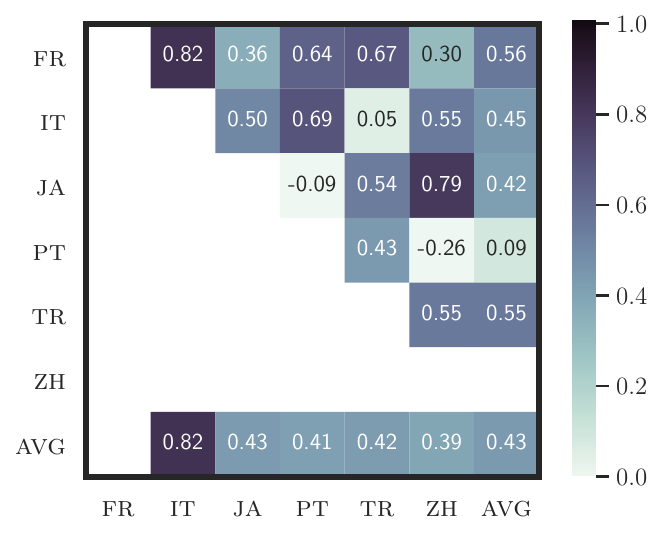}
        \caption{WM, $\varepsilon = 15$, $l = 8$}
        \label{fig:pos_rsa_wikimatrix_15_lay8}
    \end{subfigure}
    \begin{subfigure}[b]{0.21\textwidth}
        \centering
        \includegraphics[width=\textwidth]{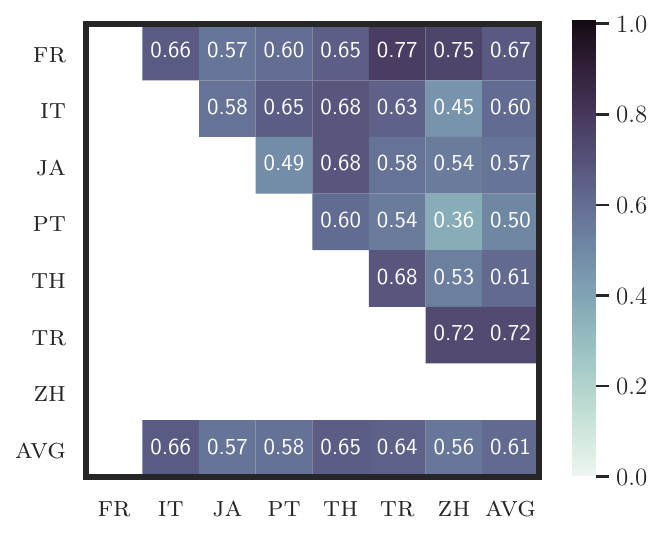}
        \caption{TED, $\varepsilon = 30$, $l = 0$}
        \label{fig:pos_rsa_ted2020_30_lay0}
    \end{subfigure}
    \begin{subfigure}[b]{0.21\textwidth}
        \centering
        \includegraphics[width=\textwidth]{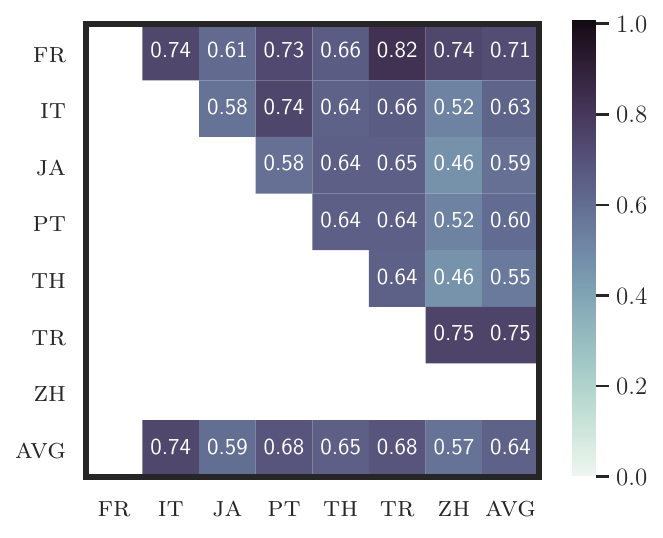}
        \caption{TED, $\varepsilon = 30$, $l = 8$}
        \label{fig:pos_rsa_ted2020_30_lay8}
    \end{subfigure}
        \begin{subfigure}[b]{0.21\textwidth}
        \centering
        \includegraphics[width=\textwidth]{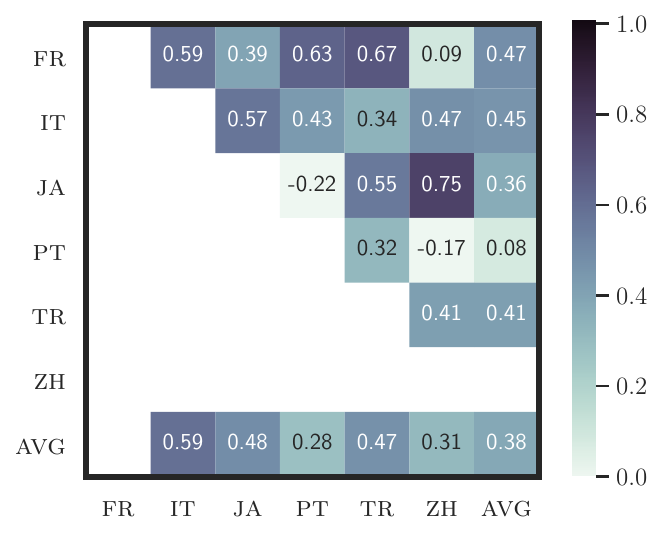}
        \caption{WM, $\varepsilon = 30$, $l = 0$}
        \label{fig:pos_rsa_wikimatrix_30_lay0}
    \end{subfigure}
    \begin{subfigure}[b]{0.21\textwidth}
        \centering
        \includegraphics[width=\textwidth]{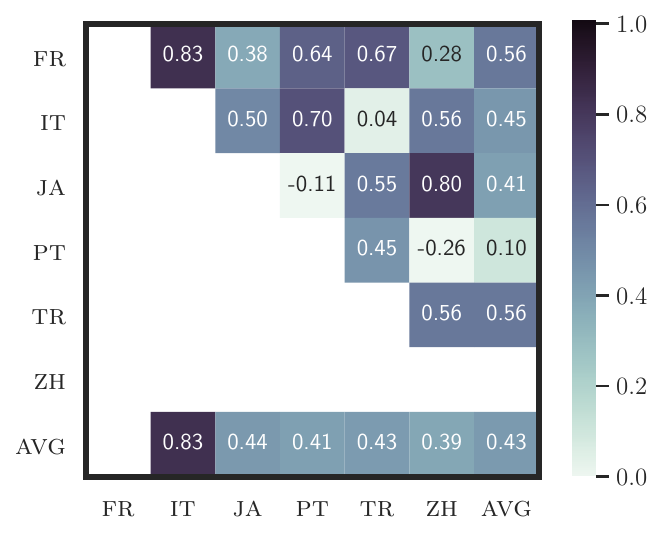}
        \caption{WM, $\varepsilon = 30$, $l = 8$}
        \label{fig:pos_rsa_wikimatrix_30_lay8}
    \end{subfigure}
    \begin{subfigure}[b]{0.21\textwidth}
        \centering
        \includegraphics[width=\textwidth]{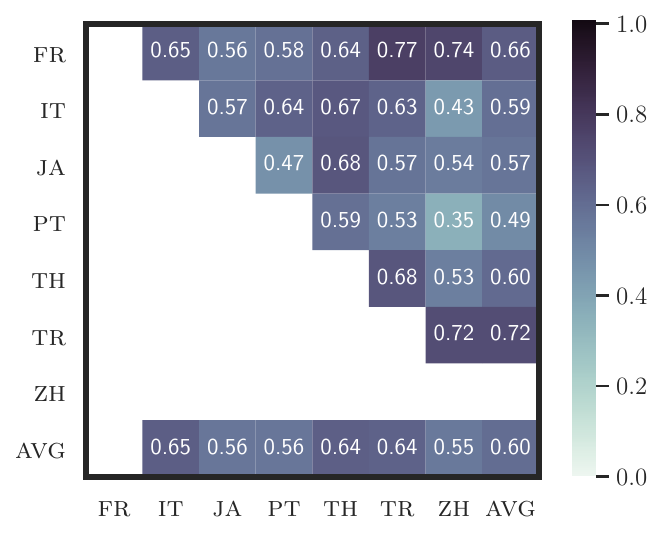}
        \caption{TED, $\varepsilon = \infty$, $l = 0$}
        \label{fig:pos_rsa_ted2020_inf_lay0}
    \end{subfigure}
    \begin{subfigure}[b]{0.21\textwidth}
        \centering
        \includegraphics[width=\textwidth]{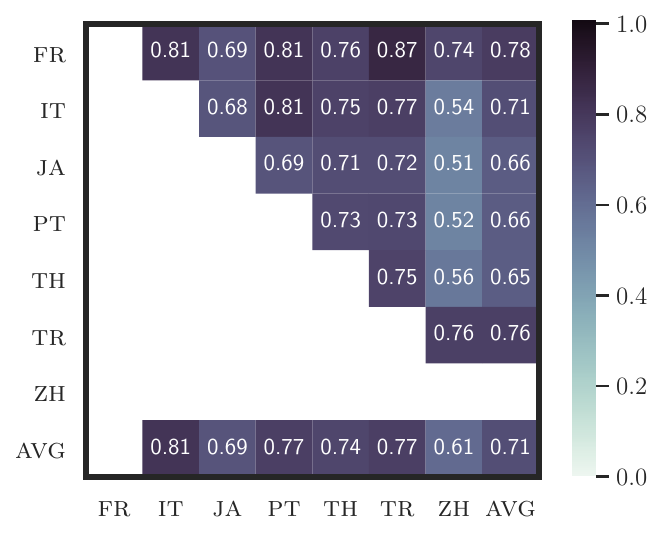}
        \caption{TED, $\varepsilon = \infty$, $l = 8$}
        \label{fig:pos_rsa_ted2020_inf_lay8}
    \end{subfigure}
        \begin{subfigure}[b]{0.21\textwidth}
        \centering
        \includegraphics[width=\textwidth]{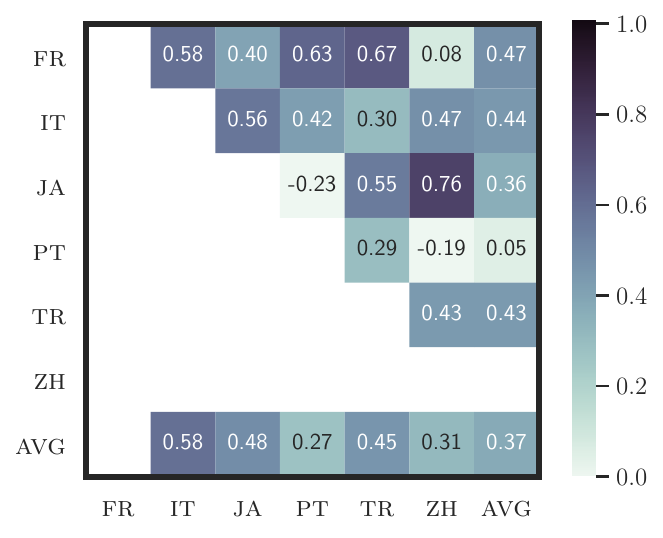}
        \caption{WM, $\varepsilon = \infty$, $l = 0$}
        \label{fig:pos_rsa_wikimatrix_inf_lay0}
    \end{subfigure}
    \begin{subfigure}[b]{0.21\textwidth}
        \centering
        \includegraphics[width=\textwidth]{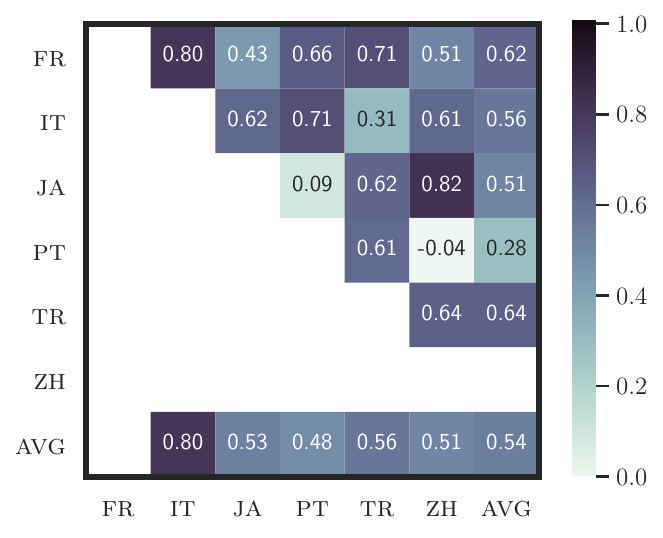}
        \caption{WM, $\varepsilon = \infty$, $l = 8$}
        \label{fig:pos_rsa_wikimatrix_inf_lay8}
    \end{subfigure}
    \caption{\textbf{POS} RSA results for the TED 2020 (TED) and WikiMatrix (WM) datasets and different combinations of privacy budgets ($\varepsilon$) and layers ($l$). Each heatmap cell corresponds to the average over 5 random seeds. We observe that the overall patterns are highly similar across all levels of privacy, particularly at layer 0.}
    \label{fig:pos_rsa_tedwm}
\end{figure*}

\begin{figure*}[ht!]
    \centering
    \begin{subfigure}[b]{0.21\textwidth}
        \centering
        \includegraphics[width=\textwidth]{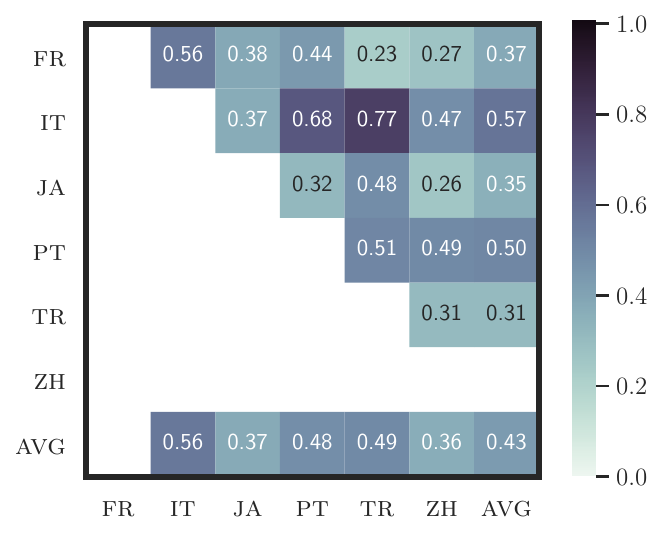}
        \caption{$\varepsilon = 1$, $l = 0$}
        \label{fig:pos_rsa_tatoeba_1_lay0}
    \end{subfigure}
    \begin{subfigure}[b]{0.21\textwidth}
        \centering
        \includegraphics[width=\textwidth]{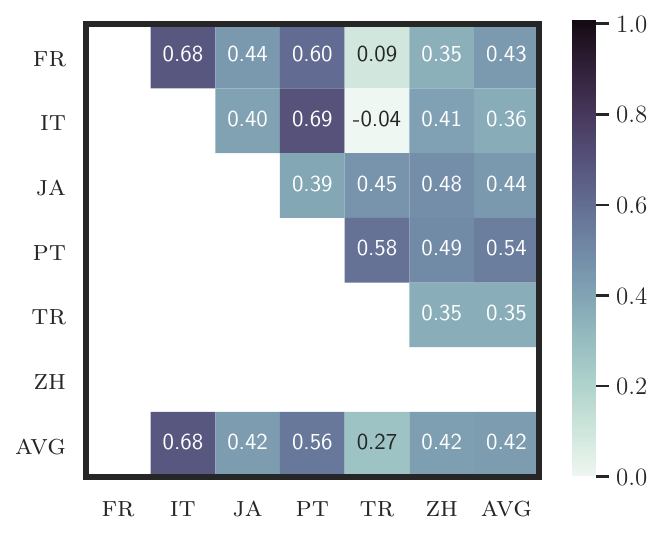}
        \caption{$\varepsilon = 1$, $l = 8$}
        \label{fig:pos_rsa_tatoeba_1_lay8}
    \end{subfigure}
    \hspace{0.21\textwidth}
    \hspace{0.21\textwidth}
    \begin{subfigure}[b]{0.21\textwidth}
        \centering
        \includegraphics[width=\textwidth]{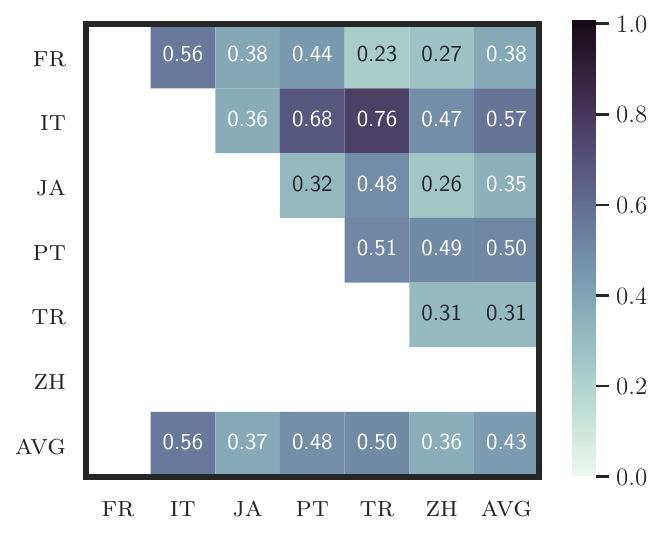}
        \caption{$\varepsilon = 3$, $l = 0$}
        \label{fig:pos_rsa_tatoeba_3_lay0}
    \end{subfigure}
    \begin{subfigure}[b]{0.21\textwidth}
        \centering
        \includegraphics[width=\textwidth]{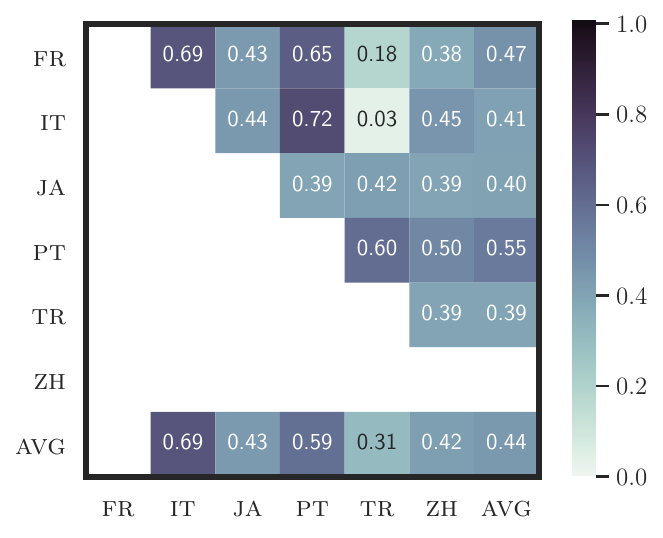}
        \caption{$\varepsilon = 3$, $l = 8$}
        \label{fig:pos_rsa_tatoeba_3_lay8}
    \end{subfigure}
    \hspace{0.21\textwidth}
    \hspace{0.21\textwidth}
    \begin{subfigure}[b]{0.21\textwidth}
        \centering
        \includegraphics[width=\textwidth]{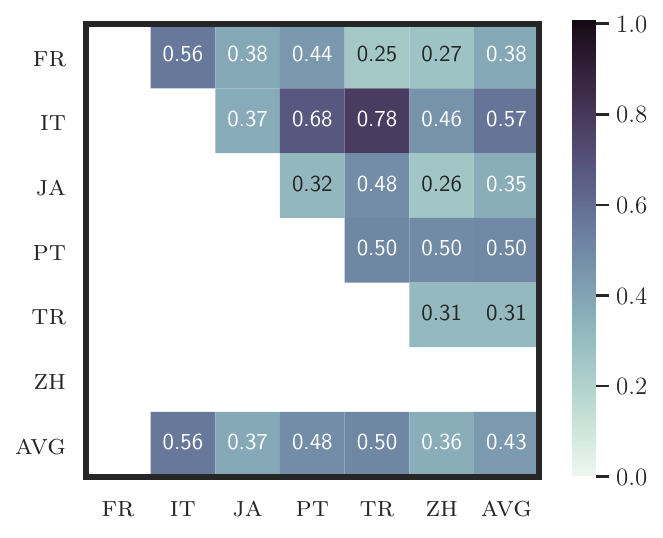}
        \caption{$\varepsilon = 8$, $l = 0$}
        \label{fig:pos_rsa_tatoeba_8_lay0}
    \end{subfigure}
    \begin{subfigure}[b]{0.21\textwidth}
        \centering
        \includegraphics[width=\textwidth]{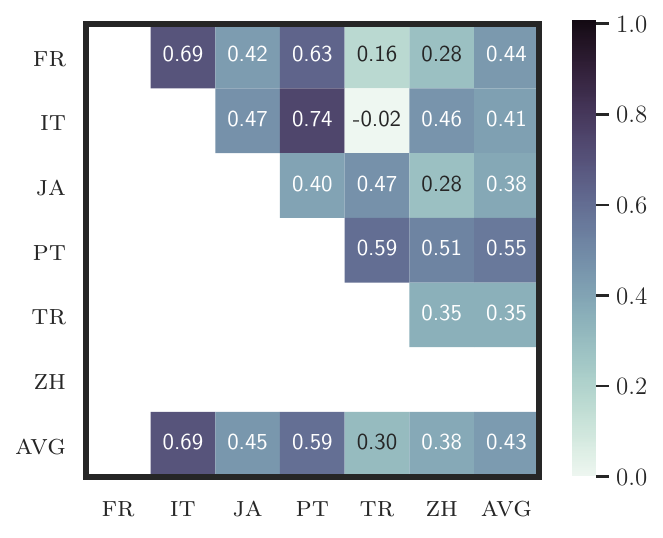}
        \caption{$\varepsilon = 8$, $l = 8$}
        \label{fig:pos_rsa_tatoeba_8_lay8}
    \end{subfigure}
    \hspace{0.21\textwidth}
    \hspace{0.21\textwidth}
    \begin{subfigure}[b]{0.21\textwidth}
        \centering
        \includegraphics[width=\textwidth]{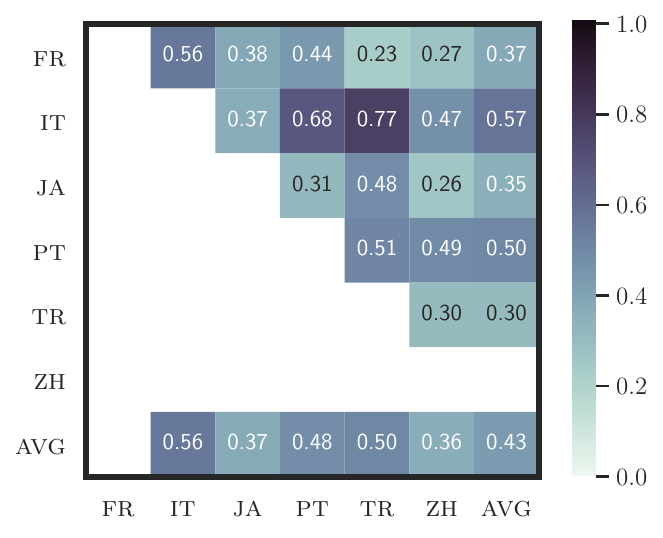}
        \caption{$\varepsilon = 15$, $l = 0$}
        \label{fig:pos_rsa_tatoeba_15_lay0}
    \end{subfigure}
    \begin{subfigure}[b]{0.21\textwidth}
        \centering
        \includegraphics[width=\textwidth]{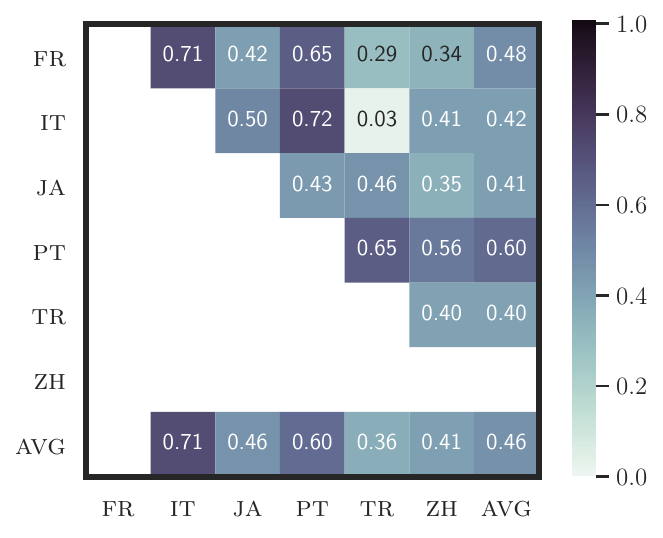}
        \caption{$\varepsilon = 15$, $l = 8$}
        \label{fig:pos_rsa_tatoeba_15_lay8}
    \end{subfigure}
    \hspace{0.21\textwidth}
    \hspace{0.21\textwidth}
    \begin{subfigure}[b]{0.21\textwidth}
        \centering
        \includegraphics[width=\textwidth]{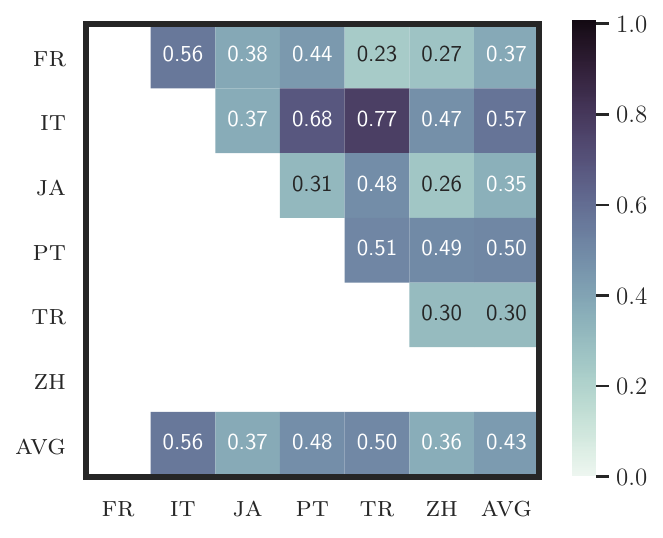}
        \caption{$\varepsilon = 30$, $l = 0$}
        \label{fig:pos_rsa_tatoeba_30_lay0}
    \end{subfigure}
    \begin{subfigure}[b]{0.21\textwidth}
        \centering
        \includegraphics[width=\textwidth]{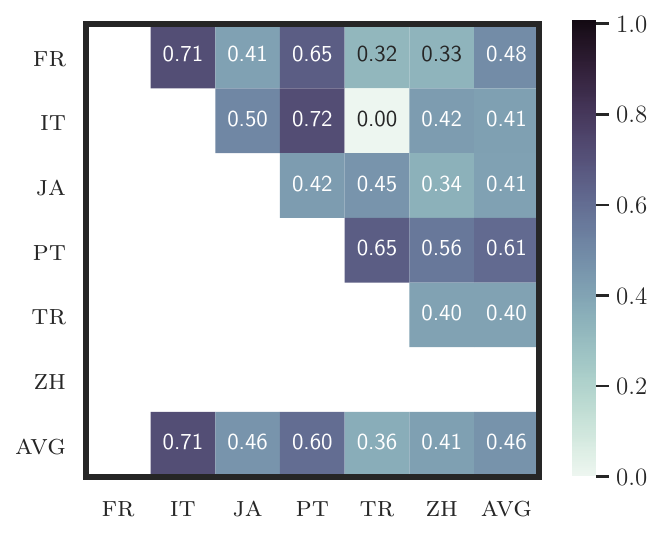}
        \caption{$\varepsilon = 30$, $l = 8$}
        \label{fig:pos_rsa_tatoeba_30_lay8}
    \end{subfigure}
    \hspace{0.21\textwidth}
    \hspace{0.21\textwidth}
    \begin{subfigure}[b]{0.21\textwidth}
        \centering
        \includegraphics[width=\textwidth]{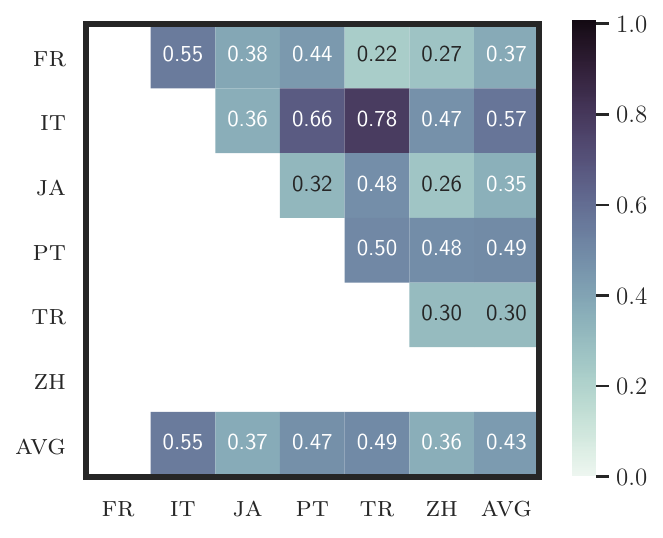}
        \caption{$\varepsilon = \infty$, $l = 0$}
        \label{fig:pos_rsa_tatoeba_inf_lay0}
    \end{subfigure}
    \begin{subfigure}[b]{0.21\textwidth}
        \centering
        \includegraphics[width=\textwidth]{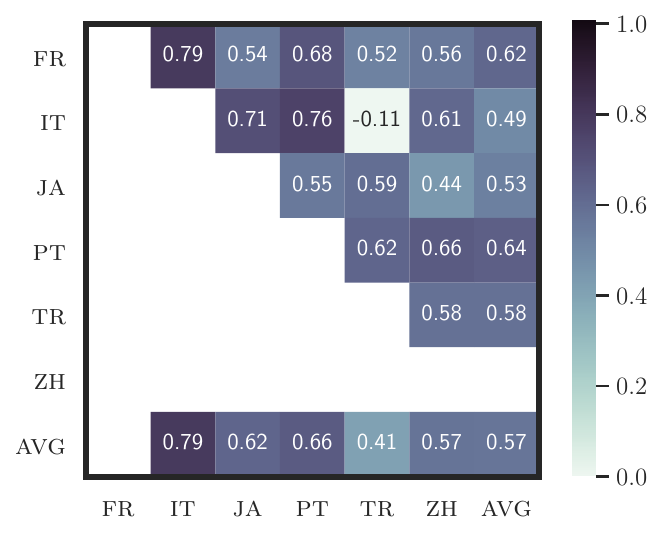}
        \caption{$\varepsilon = \infty$, $l = 8$}
        \label{fig:pos_rsa_tatoeba_inf_lay8}
    \end{subfigure}
    \caption{\textbf{POS} RSA results for the Tatoeba dataset and different combinations of privacy budgets ($\varepsilon$) and layers ($l$). Each heatmap cell corresponds to the average over 5 random seeds. We observe that the overall patterns are highly similar across all levels of privacy, particularly at layer 0. Also note that, unlike in CKA (Figure~\ref{fig:pos_cka_tatoeba}), the similarity between \textsc{it} and \textsc{tr} is high at layer 0 but low at layer 8.}
    \label{fig:pos_rsa_tatoeba}
\end{figure*}

\begin{figure*}[ht!]
    \centering
    \begin{subfigure}[b]{0.21\textwidth}
        \centering
        \includegraphics[width=\textwidth]{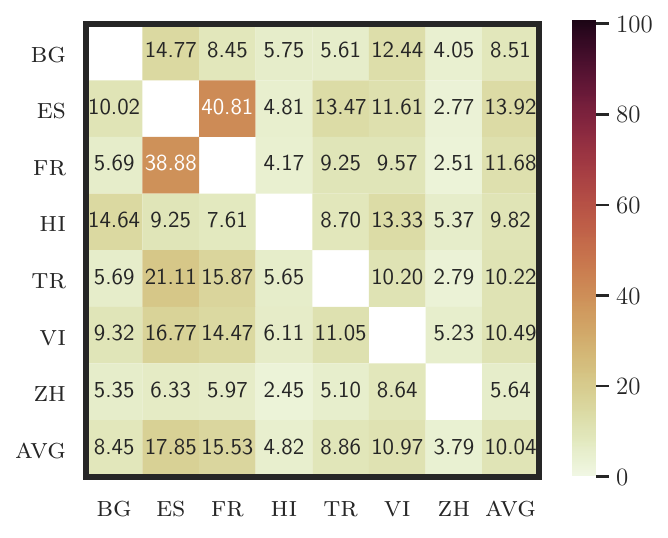}
        \caption{TED, $\varepsilon = 1$, $l = 0$}
        \label{fig:retrieval_ted2020_1_lay0}
    \end{subfigure}
    \begin{subfigure}[b]{0.21\textwidth}
        \centering
        \includegraphics[width=\textwidth]{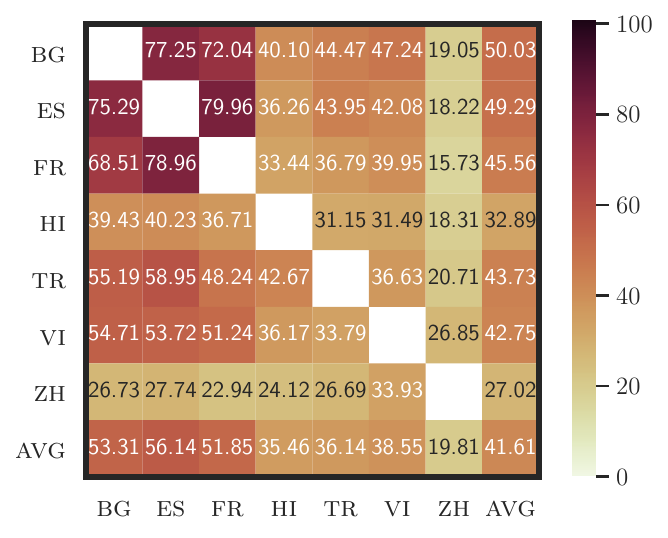}
        \caption{TED, $\varepsilon = 1$, $l = 8$}
        \label{fig:retrieval_ted2020_1_lay8}
    \end{subfigure}
        \begin{subfigure}[b]{0.21\textwidth}
        \centering
        \includegraphics[width=\textwidth]{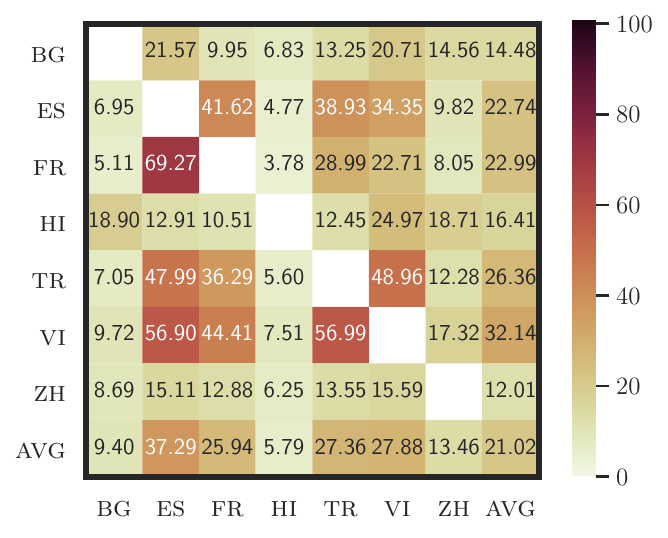}
        \caption{WM, $\varepsilon = 1$, $l = 0$}
        \label{fig:retrieval_wikimatrix_1_lay0}
    \end{subfigure}
    \begin{subfigure}[b]{0.21\textwidth}
        \centering
        \includegraphics[width=\textwidth]{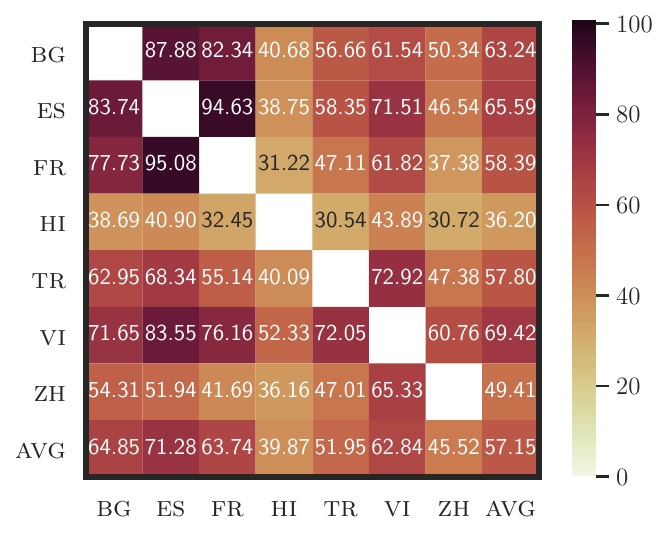}
        \caption{WM, $\varepsilon = 1$, $l = 8$}
        \label{fig:retrieval_wikimatrix_1_lay8}
    \end{subfigure}
    \begin{subfigure}[b]{0.21\textwidth}
        \centering
        \includegraphics[width=\textwidth]{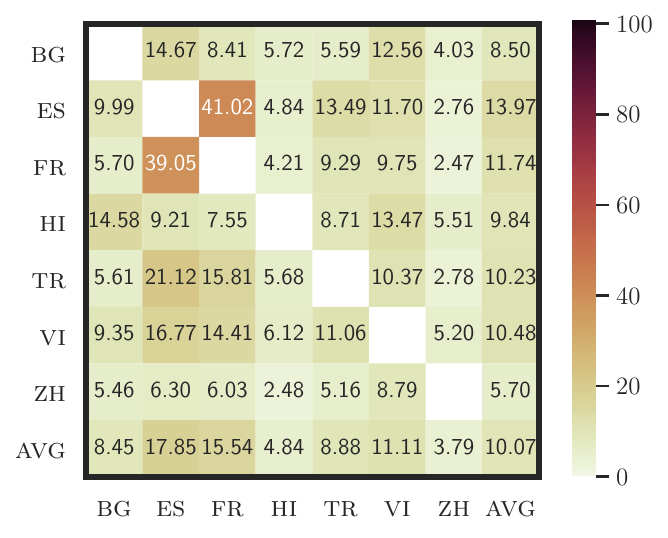}
        \caption{TED, $\varepsilon = 3$, $l = 0$}
        \label{fig:retrieval_ted2020_3_lay0}
    \end{subfigure}
    \begin{subfigure}[b]{0.21\textwidth}
        \centering
        \includegraphics[width=\textwidth]{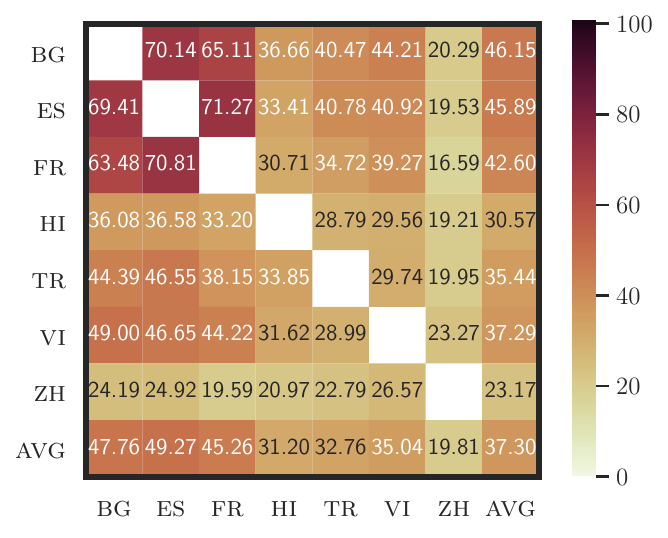}
        \caption{TED, $\varepsilon = 3$, $l = 8$}
        \label{fig:retrieval_ted2020_3_lay8}
    \end{subfigure}
        \begin{subfigure}[b]{0.21\textwidth}
        \centering
        \includegraphics[width=\textwidth]{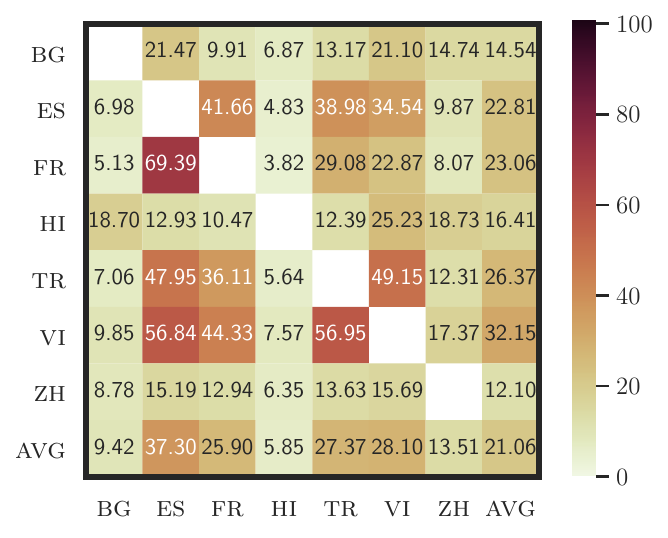}
        \caption{WM, $\varepsilon = 3$, $l = 0$}
        \label{fig:retrieval_wikimatrix_3_lay0}
    \end{subfigure}
    \begin{subfigure}[b]{0.21\textwidth}
        \centering
        \includegraphics[width=\textwidth]{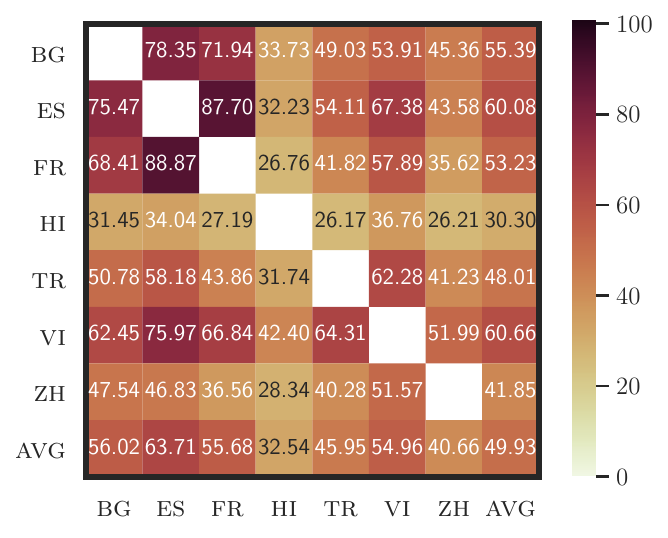}
        \caption{WM, $\varepsilon = 3$, $l = 8$}
        \label{fig:retrieval_wikimatrix_3_lay8}
    \end{subfigure}
    \begin{subfigure}[b]{0.21\textwidth}
        \centering
        \includegraphics[width=\textwidth]{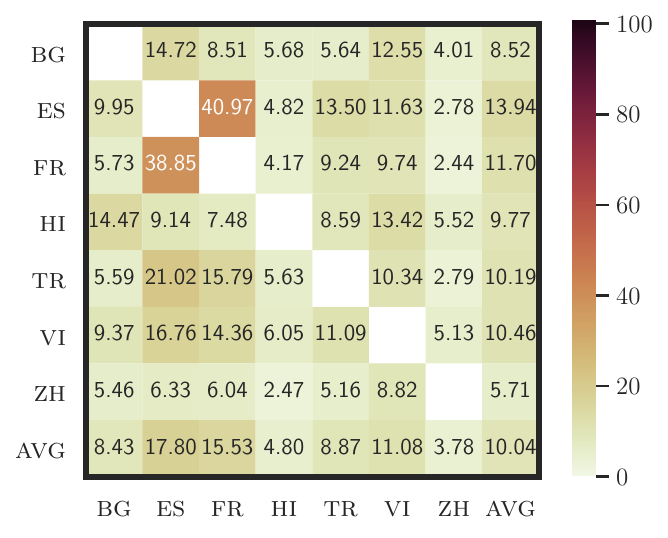}
        \caption{TED, $\varepsilon = 8$, $l = 0$}
        \label{fig:retrieval_ted2020_8_lay0}
    \end{subfigure}
    \begin{subfigure}[b]{0.21\textwidth}
        \centering
        \includegraphics[width=\textwidth]{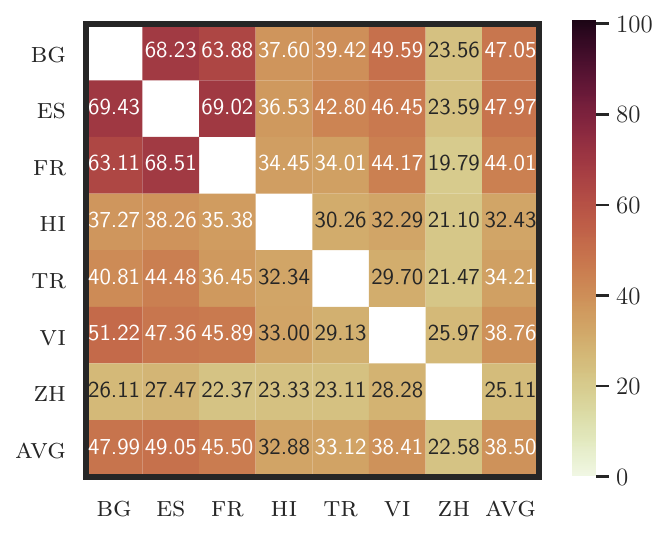}
        \caption{TED, $\varepsilon = 8$, $l = 8$}
        \label{fig:retrieval_ted2020_8_lay8}
    \end{subfigure}
        \begin{subfigure}[b]{0.21\textwidth}
        \centering
        \includegraphics[width=\textwidth]{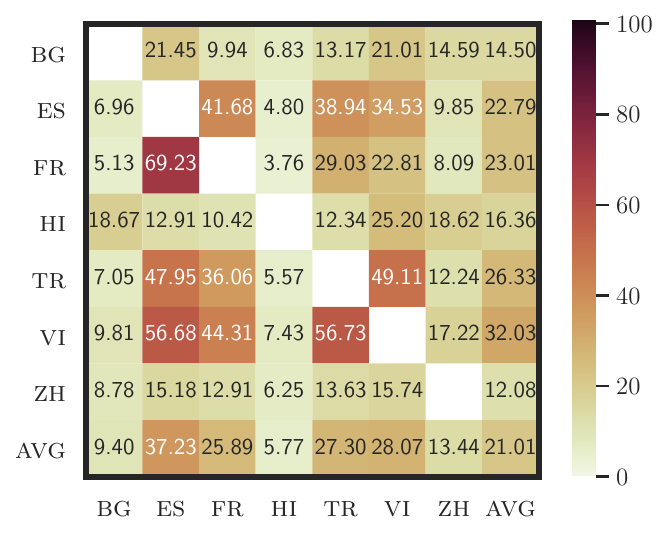}
        \caption{WM, $\varepsilon = 8$, $l = 0$}
        \label{fig:retrieval_wikimatrix_8_lay0}
    \end{subfigure}
    \begin{subfigure}[b]{0.21\textwidth}
        \centering
        \includegraphics[width=\textwidth]{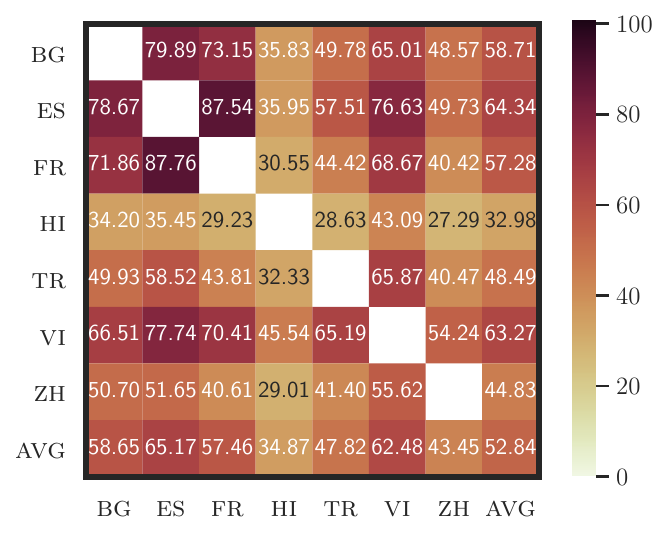}
        \caption{WM, $\varepsilon = 8$, $l = 8$}
        \label{fig:retrieval_wikimatrix_8_lay8}
    \end{subfigure}
    \begin{subfigure}[b]{0.21\textwidth}
        \centering
        \includegraphics[width=\textwidth]{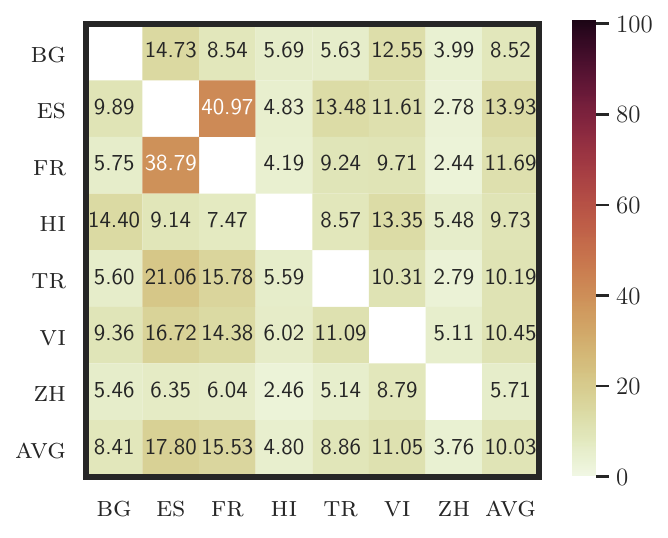}
        \caption{TED, $\varepsilon = 15$, $l = 0$}
        \label{fig:retrieval_ted2020_15_lay0}
    \end{subfigure}
    \begin{subfigure}[b]{0.21\textwidth}
        \centering
        \includegraphics[width=\textwidth]{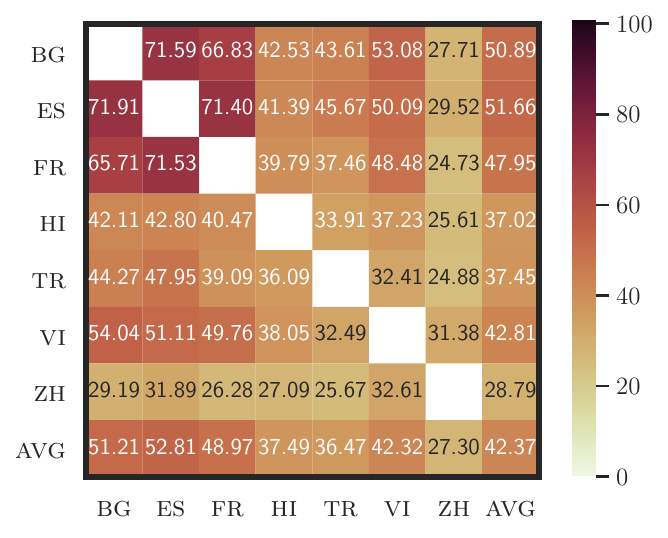}
        \caption{TED, $\varepsilon = 15$, $l = 8$}
        \label{fig:retrieval_ted2020_15_lay8}
    \end{subfigure}
        \begin{subfigure}[b]{0.21\textwidth}
        \centering
        \includegraphics[width=\textwidth]{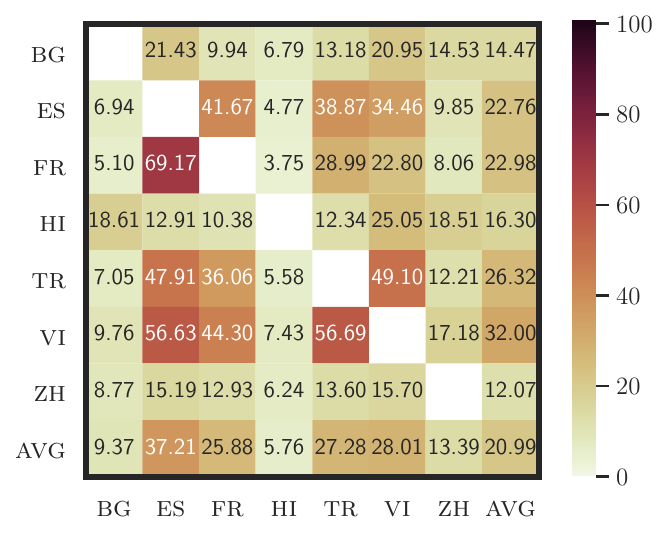}
        \caption{WM, $\varepsilon = 15$, $l = 0$}
        \label{fig:retrieval_wikimatrix_15_lay0}
    \end{subfigure}
    \begin{subfigure}[b]{0.21\textwidth}
        \centering
        \includegraphics[width=\textwidth]{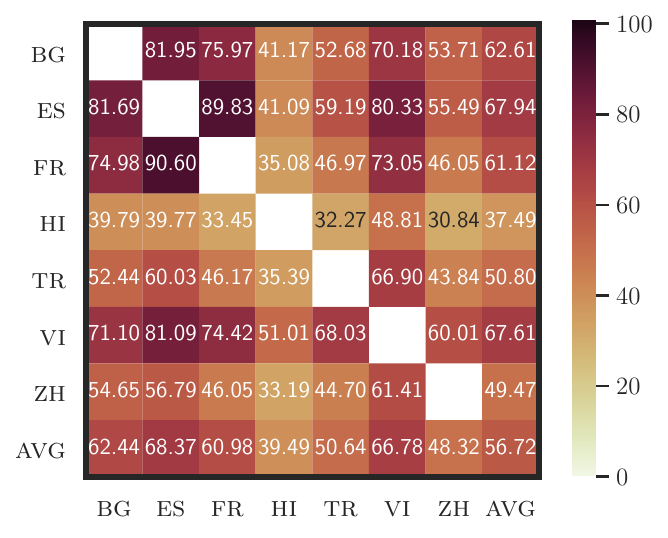}
        \caption{WM, $\varepsilon = 15$, $l = 8$}
        \label{fig:retrieval_wikimatrix_15_lay8}
    \end{subfigure}
    \begin{subfigure}[b]{0.21\textwidth}
        \centering
        \includegraphics[width=\textwidth]{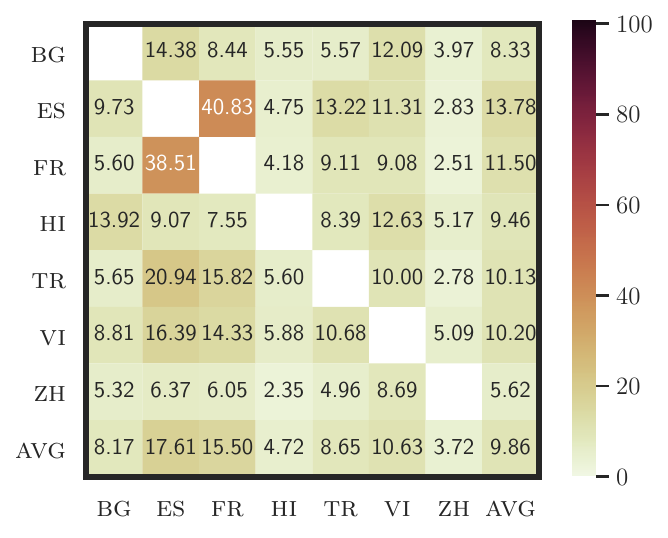}
        \caption{TED, $\varepsilon = 30$, $l = 0$}
        \label{fig:retrieval_ted2020_30_lay0}
    \end{subfigure}
    \begin{subfigure}[b]{0.21\textwidth}
        \centering
        \includegraphics[width=\textwidth]{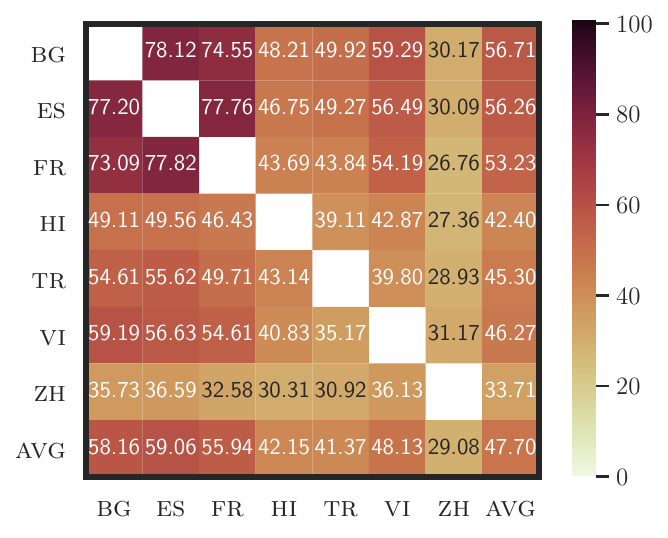}
        \caption{TED, $\varepsilon = 30$, $l = 8$}
        \label{fig:retrieval_ted2020_30_lay8}
    \end{subfigure}
        \begin{subfigure}[b]{0.21\textwidth}
        \centering
        \includegraphics[width=\textwidth]{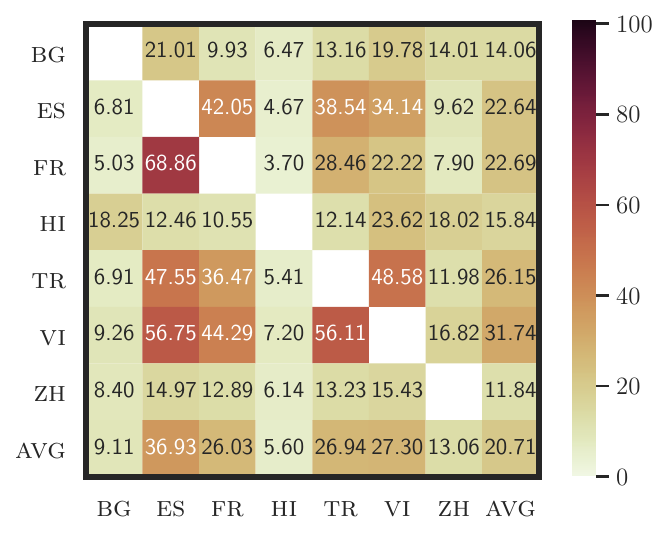}
        \caption{WM, $\varepsilon = 30$, $l = 0$}
        \label{fig:retrieval_wikimatrix_30_lay0}
    \end{subfigure}
    \begin{subfigure}[b]{0.21\textwidth}
        \centering
        \includegraphics[width=\textwidth]{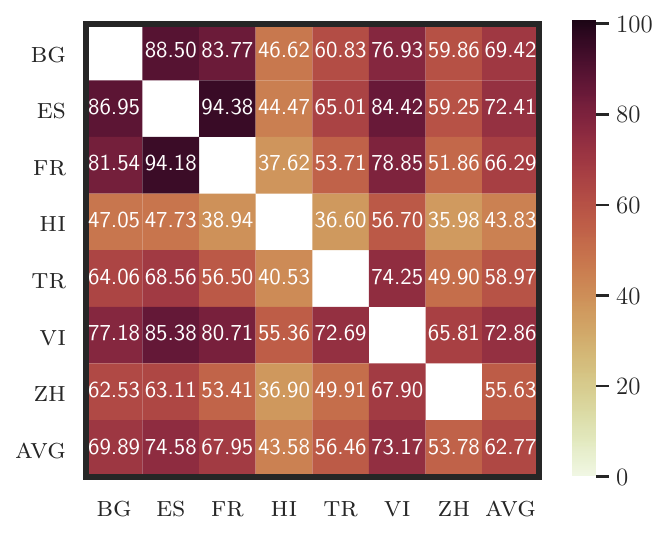}
        \caption{WM, $\varepsilon = 30$, $l = 8$}
        \label{fig:retrieval_wikimatrix_30_lay8}
    \end{subfigure}
    \begin{subfigure}[b]{0.21\textwidth}
        \centering
        \includegraphics[width=\textwidth]{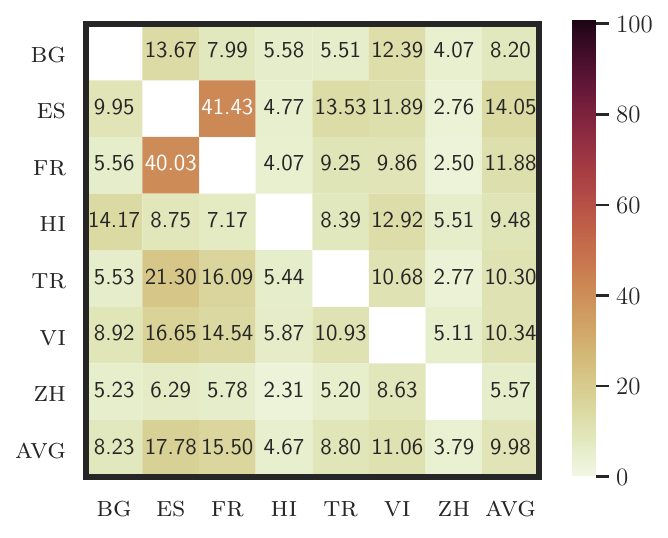}
        \caption{TED, $\varepsilon = \infty$, $l = 0$}
        \label{fig:retrieval_ted2020_inf_lay0}
    \end{subfigure}
    \begin{subfigure}[b]{0.21\textwidth}
        \centering
        \includegraphics[width=\textwidth]{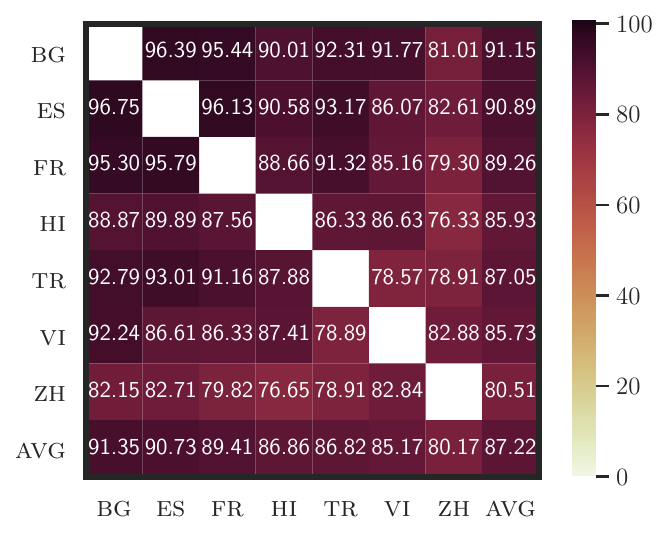}
        \caption{TED, $\varepsilon = \infty$, $l = 8$}
        \label{fig:retrieval_ted2020_inf_lay8}
    \end{subfigure}
        \begin{subfigure}[b]{0.21\textwidth}
        \centering
        \includegraphics[width=\textwidth]{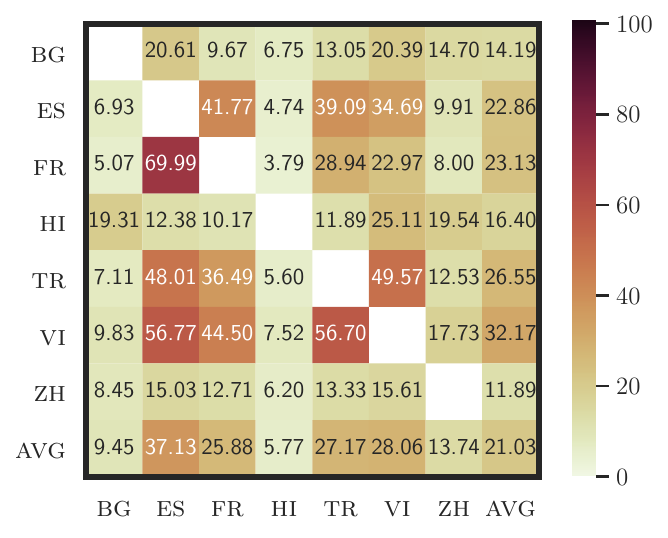}
        \caption{WM, $\varepsilon = \infty$, $l = 0$}
        \label{fig:retrieval_wikimatrix_inf_lay0}
    \end{subfigure}
    \begin{subfigure}[b]{0.21\textwidth}
        \centering
        \includegraphics[width=\textwidth]{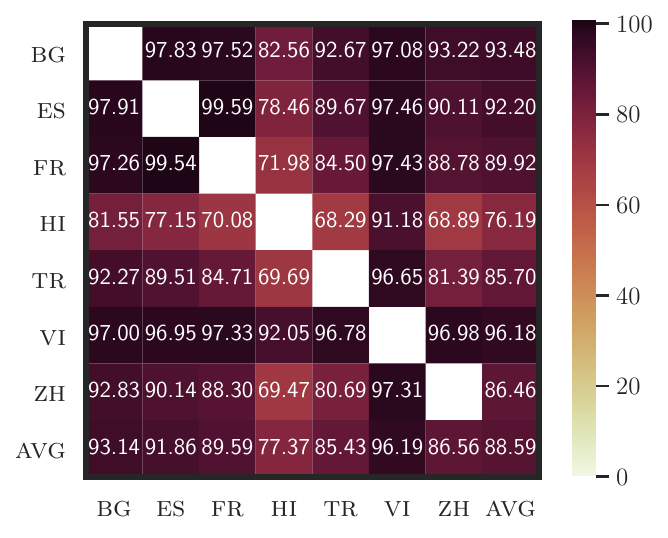}
        \caption{WM, $\varepsilon = \infty$, $l = 8$}
        \label{fig:retrieval_wikimatrix_inf_lay8}
    \end{subfigure}
    \caption{\textbf{XNLI} Sentence retrieval results for the TED 2020 (TED) and WikiMatrix (WM) datasets and different combinations of privacy budgets ($\varepsilon$) and layers ($l$). Each heatmap cell corresponds to the average over 5 random seeds. We observe that the overall patterns are highly similar across all levels of privacy, particularly at layer 0.}
    \label{fig:xnli_retrieval_full}
\end{figure*}

\begin{figure*}[ht!]
    \centering
    \begin{subfigure}[b]{0.21\textwidth}
        \centering
        \includegraphics[width=\textwidth]{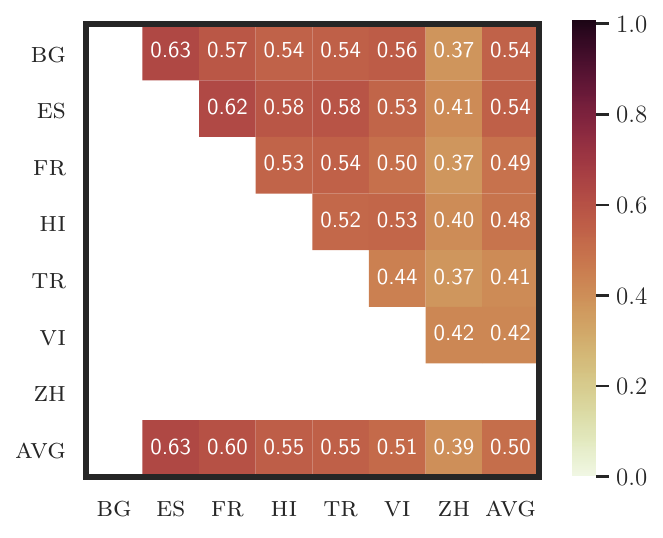}
        \caption{TED, $\varepsilon = 1$, $l = 0$}
        \label{fig:cka_ted2020_1_lay0}
    \end{subfigure}
    \begin{subfigure}[b]{0.21\textwidth}
        \centering
        \includegraphics[width=\textwidth]{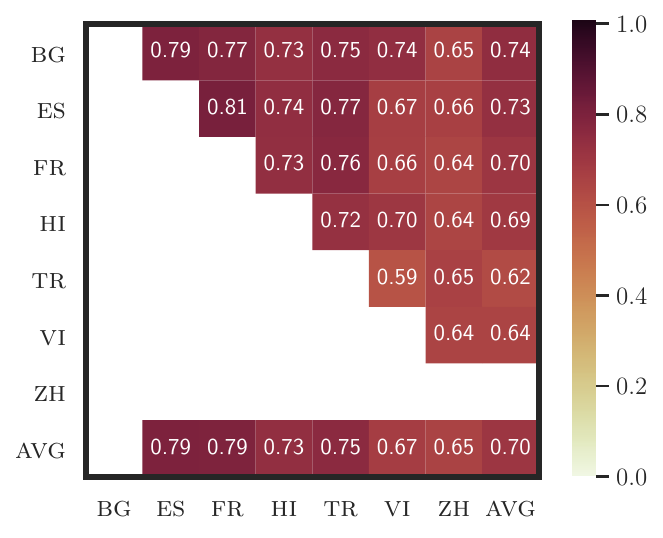}
        \caption{TED, $\varepsilon = 1$, $l = 8$}
        \label{fig:cka_ted2020_1_lay8}
    \end{subfigure}
        \begin{subfigure}[b]{0.21\textwidth}
        \centering
        \includegraphics[width=\textwidth]{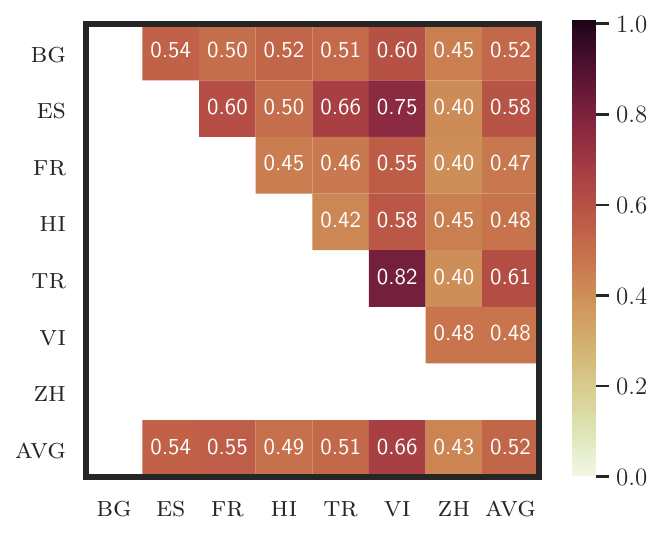}
        \caption{WM, $\varepsilon = 1$, $l = 0$}
        \label{fig:cka_wikimatrix_1_lay0}
    \end{subfigure}
    \begin{subfigure}[b]{0.21\textwidth}
        \centering
        \includegraphics[width=\textwidth]{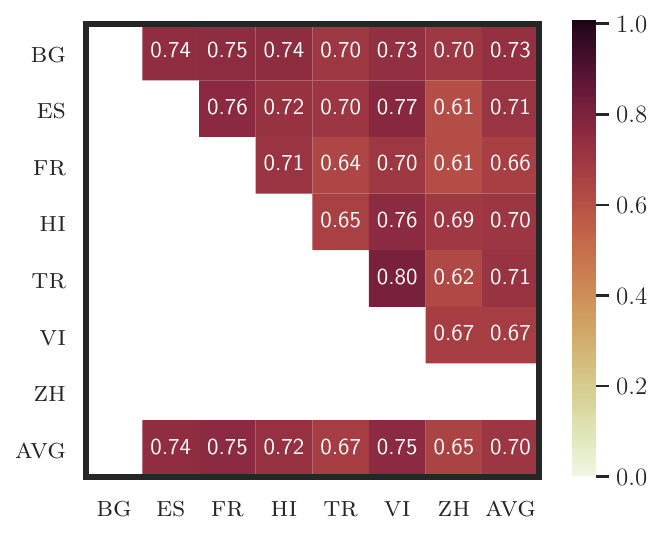}
        \caption{WM, $\varepsilon = 1$, $l = 8$}
        \label{fig:cka_wikimatrix_1_lay8}
    \end{subfigure}
    \begin{subfigure}[b]{0.21\textwidth}
        \centering
        \includegraphics[width=\textwidth]{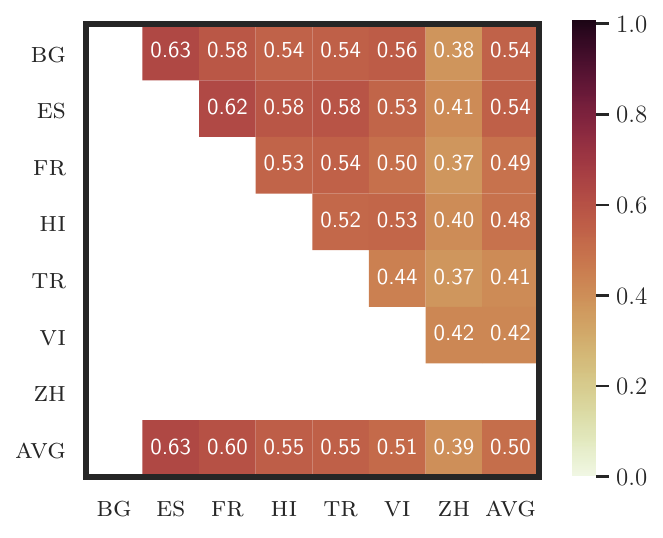}
        \caption{TED, $\varepsilon = 3$, $l = 0$}
        \label{fig:cka_ted2020_3_lay0}
    \end{subfigure}
    \begin{subfigure}[b]{0.21\textwidth}
        \centering
        \includegraphics[width=\textwidth]{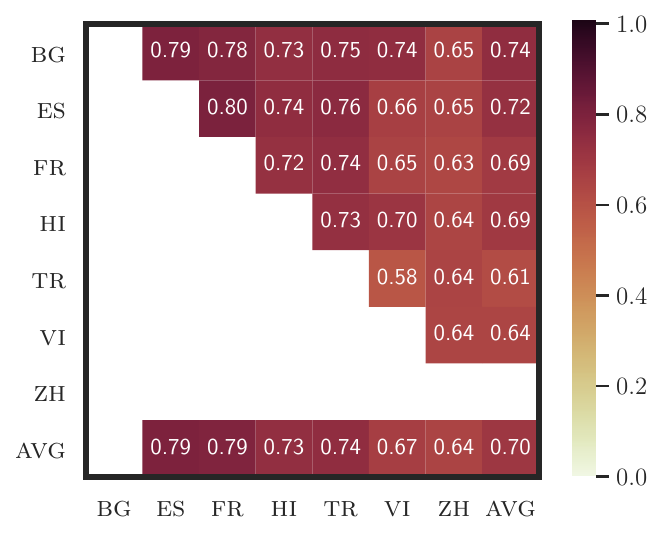}
        \caption{TED, $\varepsilon = 3$, $l = 8$}
        \label{fig:cka_ted2020_3_lay8}
    \end{subfigure}
        \begin{subfigure}[b]{0.21\textwidth}
        \centering
        \includegraphics[width=\textwidth]{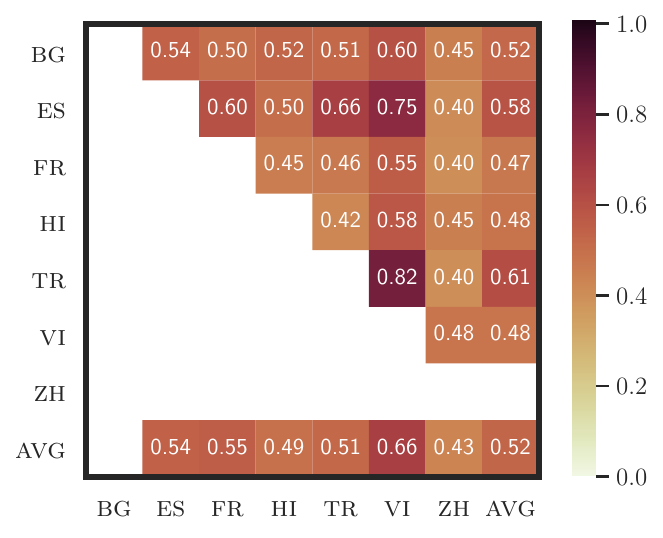}
        \caption{WM, $\varepsilon = 3$, $l = 0$}
        \label{fig:cka_wikimatrix_3_lay0}
    \end{subfigure}
    \begin{subfigure}[b]{0.21\textwidth}
        \centering
        \includegraphics[width=\textwidth]{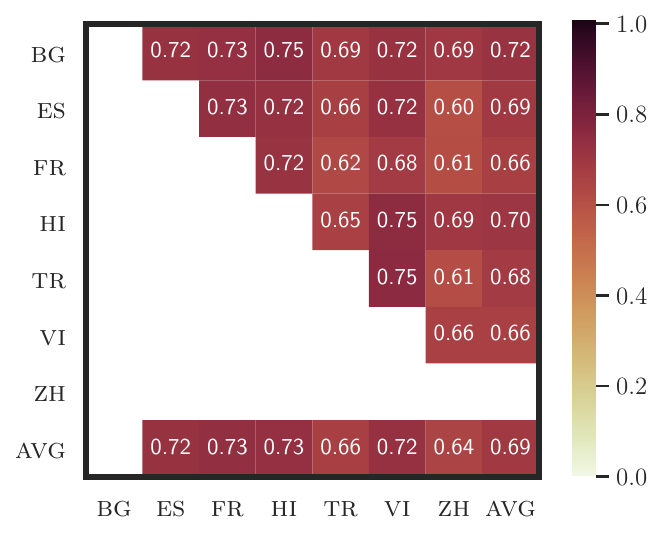}
        \caption{WM, $\varepsilon = 3$, $l = 8$}
        \label{fig:cka_wikimatrix_3_lay8}
    \end{subfigure}
    \begin{subfigure}[b]{0.21\textwidth}
        \centering
        \includegraphics[width=\textwidth]{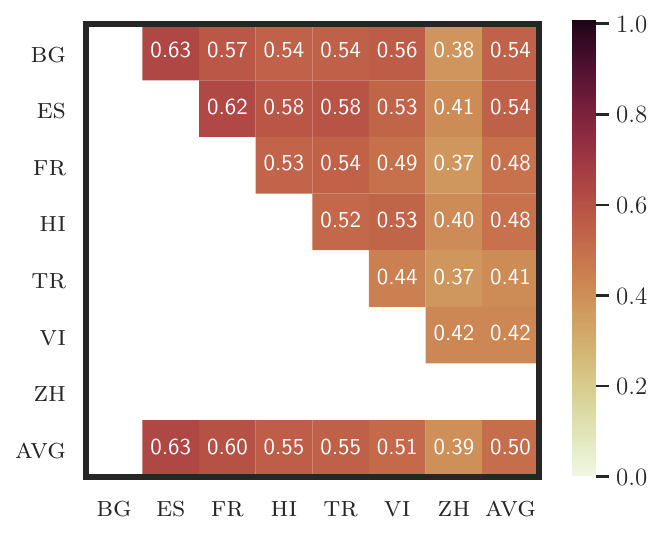}
        \caption{TED, $\varepsilon = 8$, $l = 0$}
        \label{fig:cka_ted2020_8_lay0}
    \end{subfigure}
    \begin{subfigure}[b]{0.21\textwidth}
        \centering
        \includegraphics[width=\textwidth]{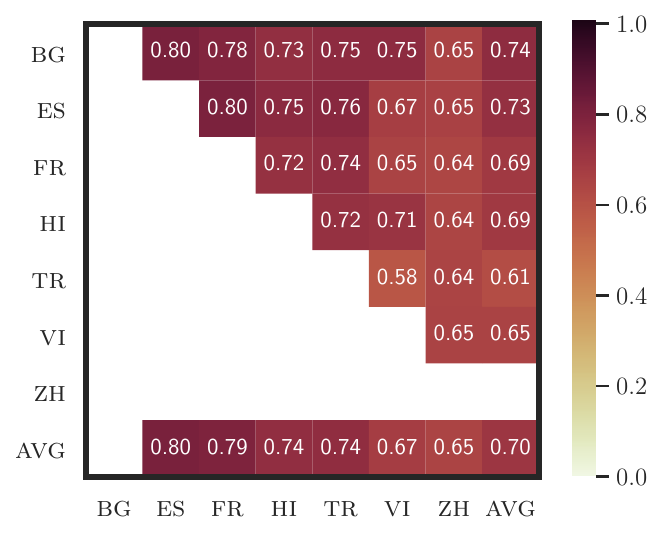}
        \caption{TED, $\varepsilon = 8$, $l = 8$}
        \label{fig:cka_ted2020_8_lay8}
    \end{subfigure}
        \begin{subfigure}[b]{0.21\textwidth}
        \centering
        \includegraphics[width=\textwidth]{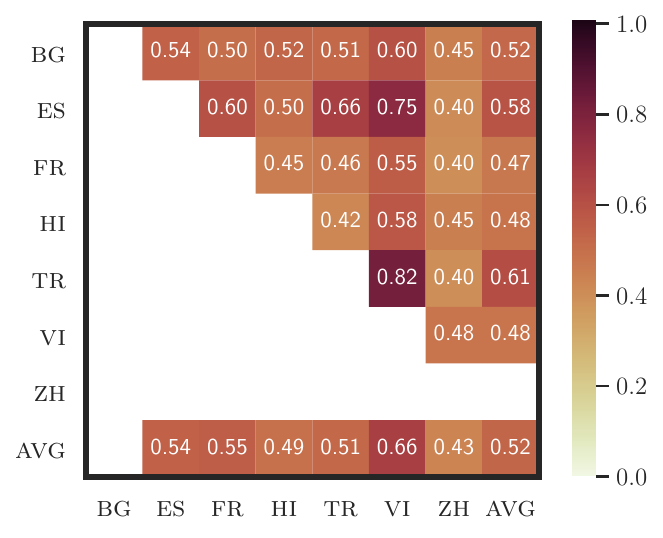}
        \caption{WM, $\varepsilon = 8$, $l = 0$}
        \label{fig:cka_wikimatrix_8_lay0}
    \end{subfigure}
    \begin{subfigure}[b]{0.21\textwidth}
        \centering
        \includegraphics[width=\textwidth]{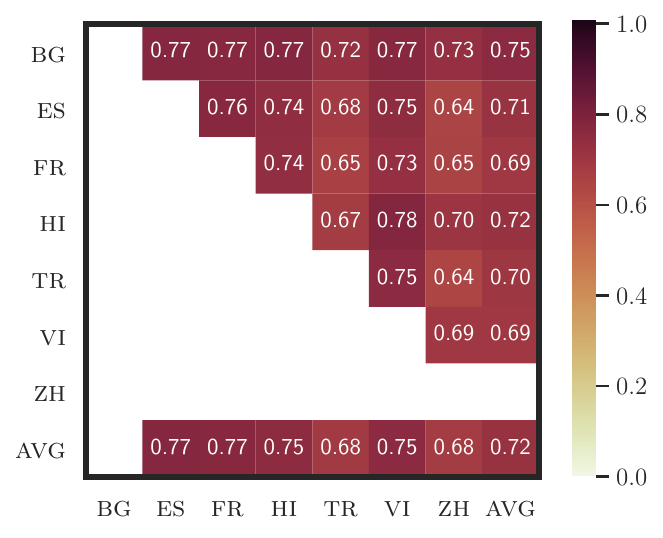}
        \caption{WM, $\varepsilon = 8$, $l = 8$}
        \label{fig:cka_wikimatrix_8_lay8}
    \end{subfigure}
    \begin{subfigure}[b]{0.21\textwidth}
        \centering
        \includegraphics[width=\textwidth]{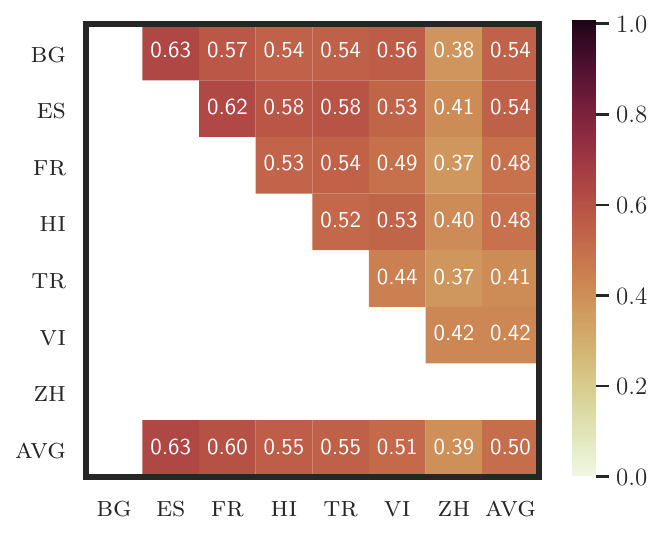}
        \caption{TED, $\varepsilon = 15$, $l = 0$}
        \label{fig:cka_ted2020_15_lay0}
    \end{subfigure}
    \begin{subfigure}[b]{0.21\textwidth}
        \centering
        \includegraphics[width=\textwidth]{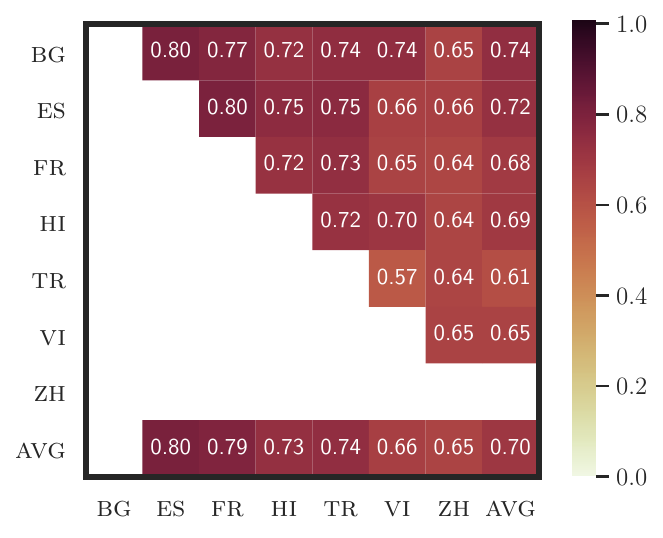}
        \caption{TED, $\varepsilon = 15$, $l = 8$}
        \label{fig:cka_ted2020_15_lay8}
    \end{subfigure}
        \begin{subfigure}[b]{0.21\textwidth}
        \centering
        \includegraphics[width=\textwidth]{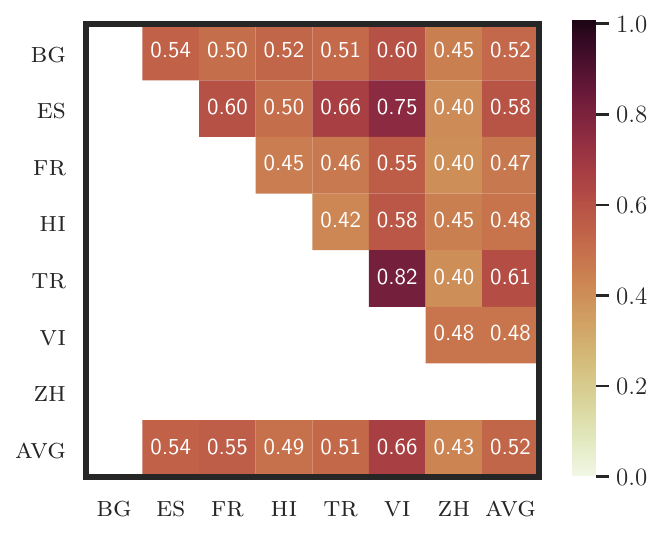}
        \caption{WM, $\varepsilon = 15$, $l = 0$}
        \label{fig:cka_wikimatrix_15_lay0}
    \end{subfigure}
    \begin{subfigure}[b]{0.21\textwidth}
        \centering
        \includegraphics[width=\textwidth]{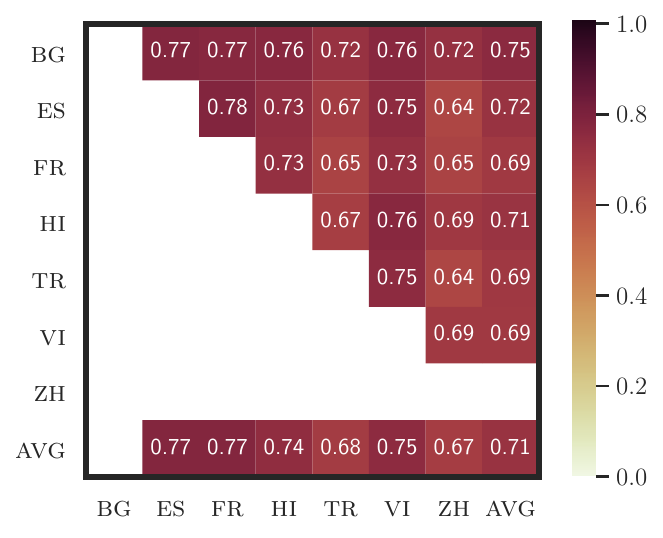}
        \caption{WM, $\varepsilon = 15$, $l = 8$}
        \label{fig:cka_wikimatrix_15_lay8}
    \end{subfigure}
    \begin{subfigure}[b]{0.21\textwidth}
        \centering
        \includegraphics[width=\textwidth]{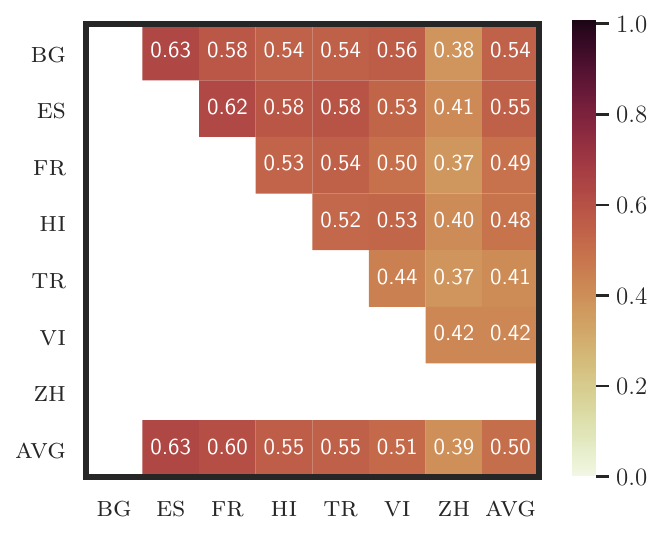}
        \caption{TED, $\varepsilon = 30$, $l = 0$}
        \label{fig:cka_ted2020_30_lay0}
    \end{subfigure}
    \begin{subfigure}[b]{0.21\textwidth}
        \centering
        \includegraphics[width=\textwidth]{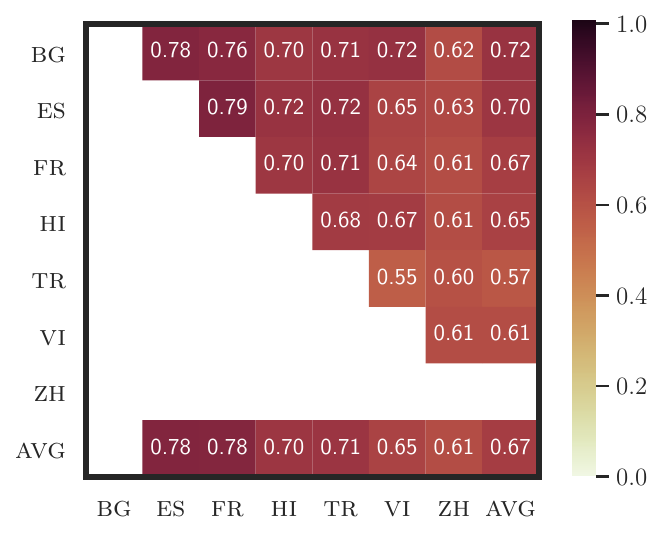}
        \caption{TED, $\varepsilon = 30$, $l = 8$}
        \label{fig:cka_ted2020_30_lay8}
    \end{subfigure}
        \begin{subfigure}[b]{0.21\textwidth}
        \centering
        \includegraphics[width=\textwidth]{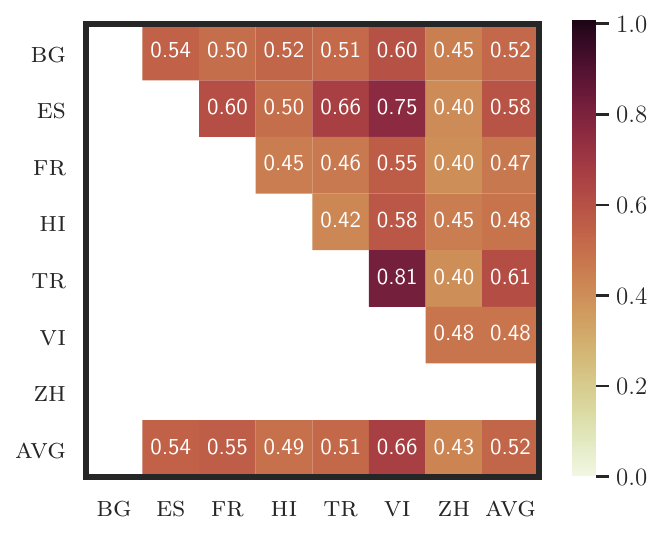}
        \caption{WM, $\varepsilon = 30$, $l = 0$}
        \label{fig:cka_wikimatrix_30_lay0}
    \end{subfigure}
    \begin{subfigure}[b]{0.21\textwidth}
        \centering
        \includegraphics[width=\textwidth]{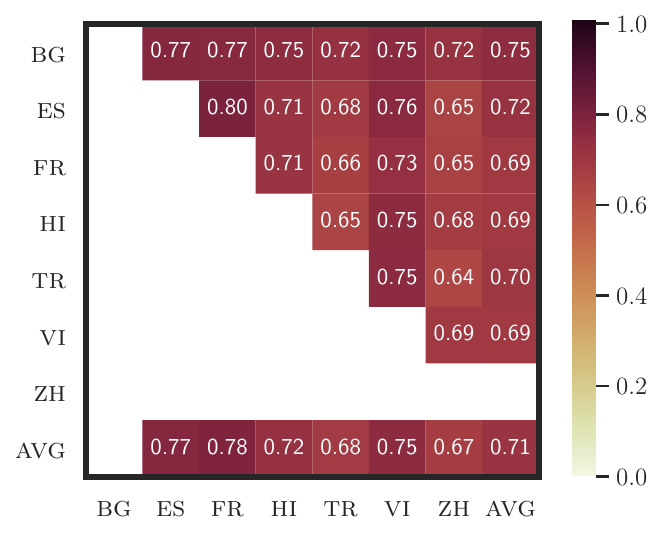}
        \caption{WM, $\varepsilon = 30$, $l = 8$}
        \label{fig:cka_wikimatrix_30_lay8}
    \end{subfigure}
    \begin{subfigure}[b]{0.21\textwidth}
        \centering
        \includegraphics[width=\textwidth]{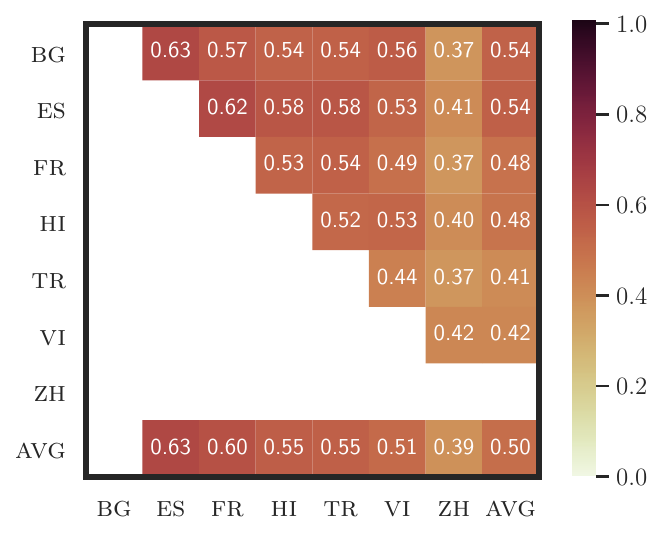}
        \caption{TED, $\varepsilon = \infty$, $l = 0$}
        \label{fig:cka_ted2020_inf_lay0}
    \end{subfigure}
    \begin{subfigure}[b]{0.21\textwidth}
        \centering
        \includegraphics[width=\textwidth]{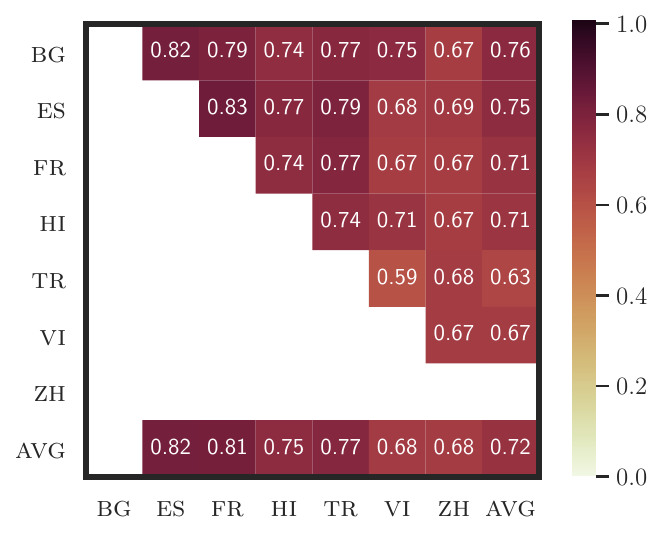}
        \caption{TED, $\varepsilon = \infty$, $l = 8$}
        \label{fig:cka_ted2020_inf_lay8}
    \end{subfigure}
        \begin{subfigure}[b]{0.21\textwidth}
        \centering
        \includegraphics[width=\textwidth]{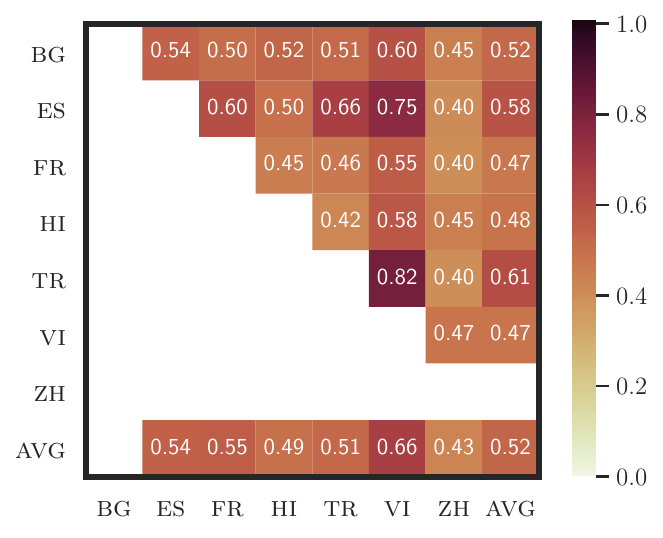}
        \caption{WM, $\varepsilon = \infty$, $l = 0$}
        \label{fig:cka_wikimatrix_inf_lay0}
    \end{subfigure}
    \begin{subfigure}[b]{0.21\textwidth}
        \centering
        \includegraphics[width=\textwidth]{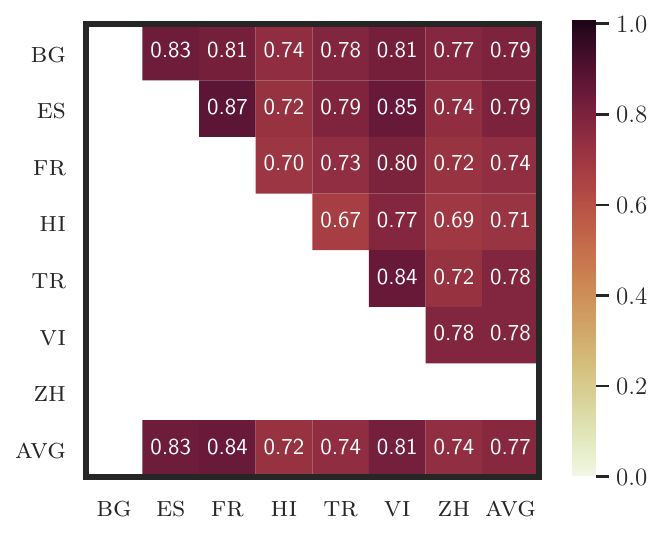}
        \caption{WM, $\varepsilon = \infty$, $l = 8$}
        \label{fig:cka_wikimatrix_inf_lay8}
    \end{subfigure}
    \caption{\textbf{XNLI} CKA results for the TED 2020 (TED) and WikiMatrix (WM) datasets and different combinations of privacy budgets ($\varepsilon$) and layers ($l$). Each heatmap cell corresponds to the average over 5 random seeds. We observe that the overall patterns are highly similar across all levels of privacy, particularly at layer 0.}
    \label{fig:xnli_cka_full}
\end{figure*}

\begin{figure*}[ht!]
    \centering
    \begin{subfigure}[b]{0.21\textwidth}
        \centering
        \includegraphics[width=\textwidth]{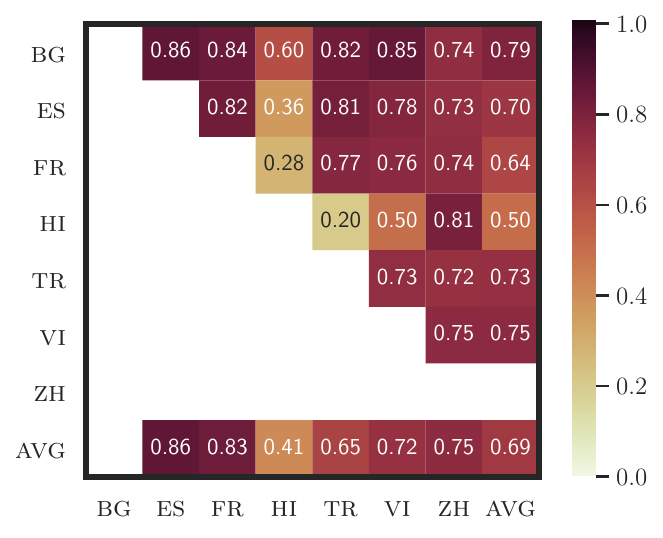}
        \caption{TED, $\varepsilon = 1$, $l = 0$}
        \label{fig:rsa_ted2020_1_lay0}
    \end{subfigure}
    \begin{subfigure}[b]{0.21\textwidth}
        \centering
        \includegraphics[width=\textwidth]{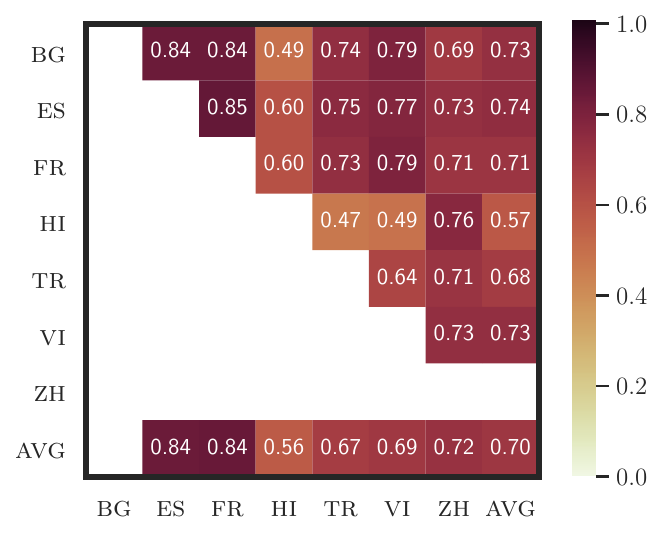}
        \caption{TED, $\varepsilon = 1$, $l = 8$}
        \label{fig:rsa_ted2020_1_lay8}
    \end{subfigure}
        \begin{subfigure}[b]{0.21\textwidth}
        \centering
        \includegraphics[width=\textwidth]{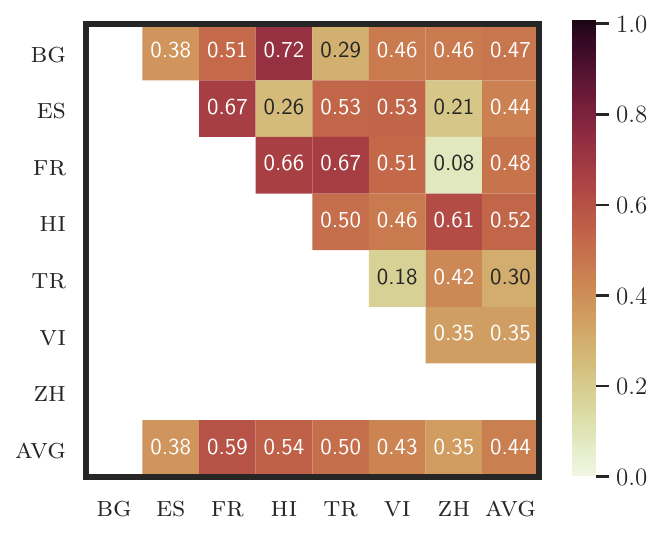}
        \caption{WM, $\varepsilon = 1$, $l = 0$}
        \label{fig:rsa_wikimatrix_1_lay0}
    \end{subfigure}
    \begin{subfigure}[b]{0.21\textwidth}
        \centering
        \includegraphics[width=\textwidth]{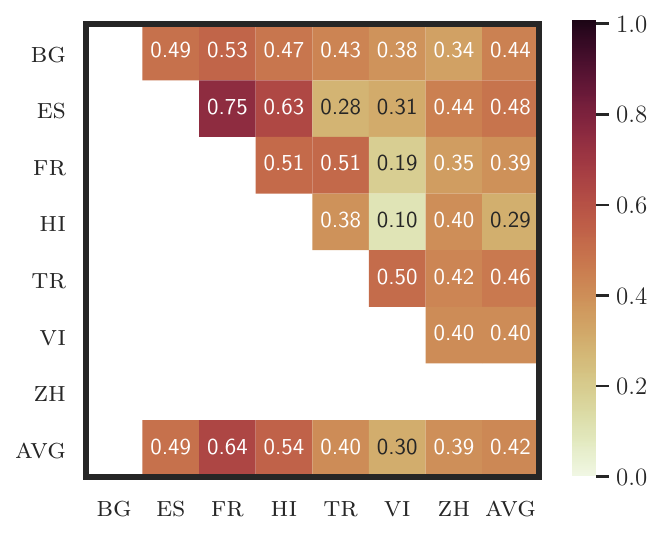}
        \caption{WM, $\varepsilon = 1$, $l = 8$}
        \label{fig:rsa_wikimatrix_1_lay8}
    \end{subfigure}
    \begin{subfigure}[b]{0.21\textwidth}
        \centering
        \includegraphics[width=\textwidth]{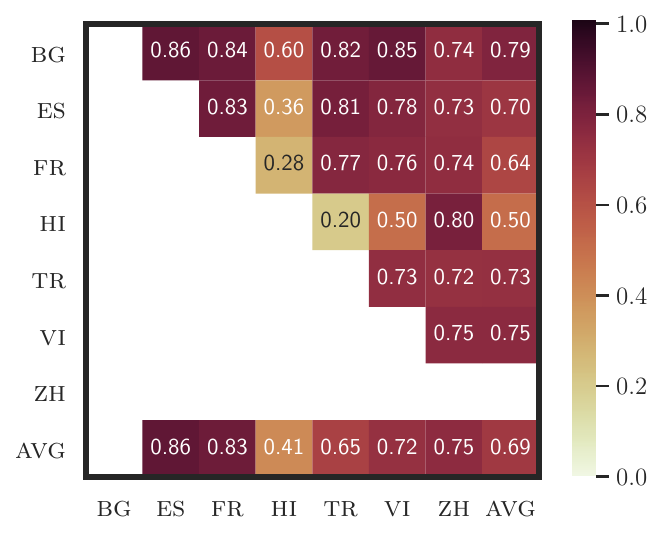}
        \caption{TED, $\varepsilon = 3$, $l = 0$}
        \label{fig:rsa_ted2020_3_lay0}
    \end{subfigure}
    \begin{subfigure}[b]{0.21\textwidth}
        \centering
        \includegraphics[width=\textwidth]{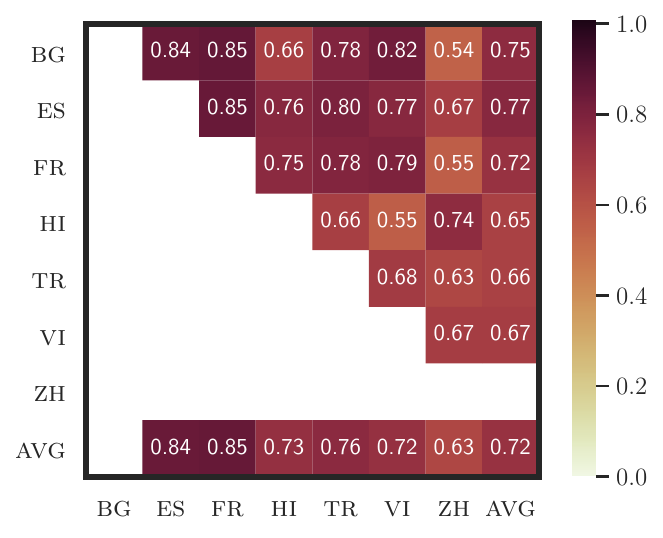}
        \caption{TED, $\varepsilon = 3$, $l = 8$}
        \label{fig:rsa_ted2020_3_lay8}
    \end{subfigure}
        \begin{subfigure}[b]{0.21\textwidth}
        \centering
        \includegraphics[width=\textwidth]{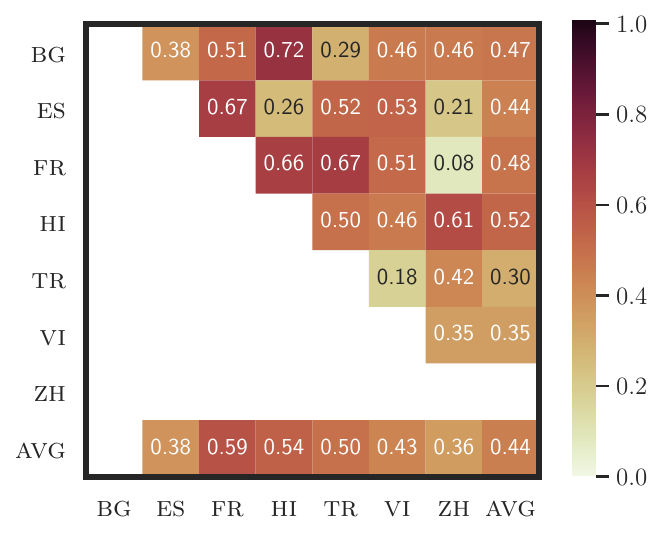}
        \caption{WM, $\varepsilon = 3$, $l = 0$}
        \label{fig:rsa_wikimatrix_3_lay0}
    \end{subfigure}
    \begin{subfigure}[b]{0.21\textwidth}
        \centering
        \includegraphics[width=\textwidth]{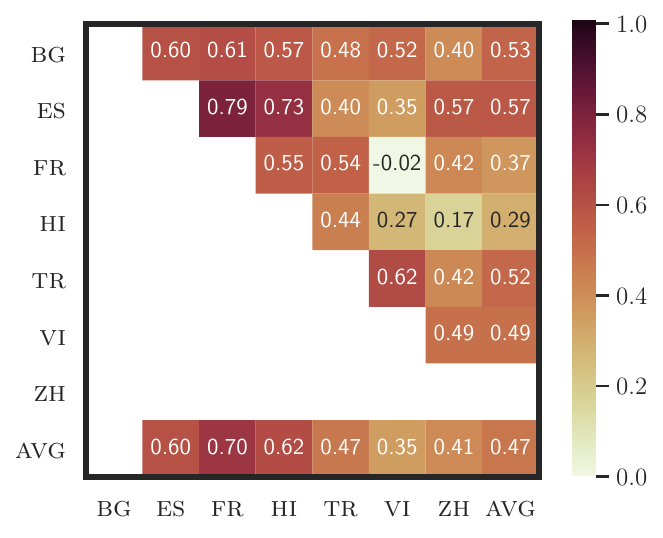}
        \caption{WM, $\varepsilon = 3$, $l = 8$}
        \label{fig:rsa_wikimatrix_3_lay8}
    \end{subfigure}
    \begin{subfigure}[b]{0.21\textwidth}
        \centering
        \includegraphics[width=\textwidth]{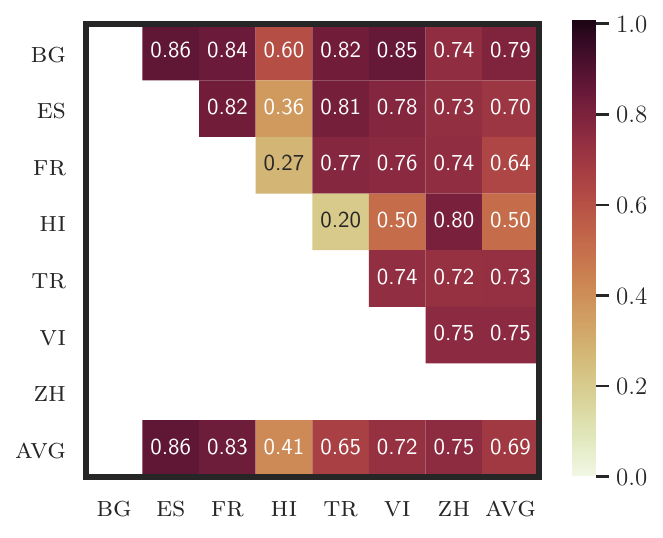}
        \caption{TED, $\varepsilon = 8$, $l = 0$}
        \label{fig:rsa_ted2020_8_lay0}
    \end{subfigure}
    \begin{subfigure}[b]{0.21\textwidth}
        \centering
        \includegraphics[width=\textwidth]{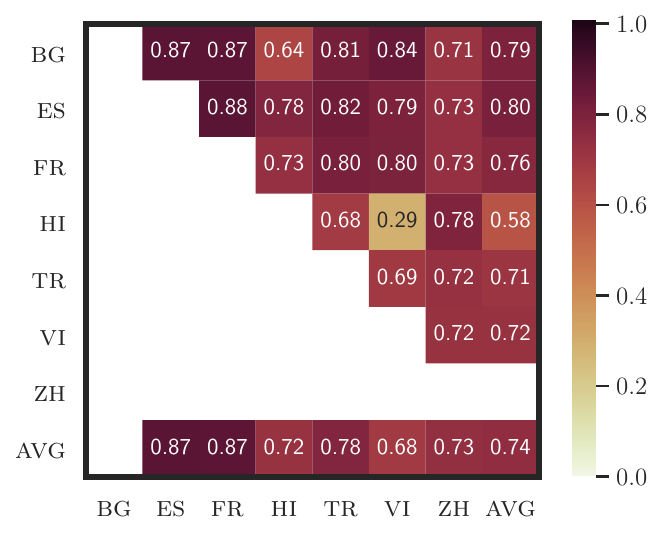}
        \caption{TED, $\varepsilon = 8$, $l = 8$}
        \label{fig:rsa_ted2020_8_lay8}
    \end{subfigure}
        \begin{subfigure}[b]{0.21\textwidth}
        \centering
        \includegraphics[width=\textwidth]{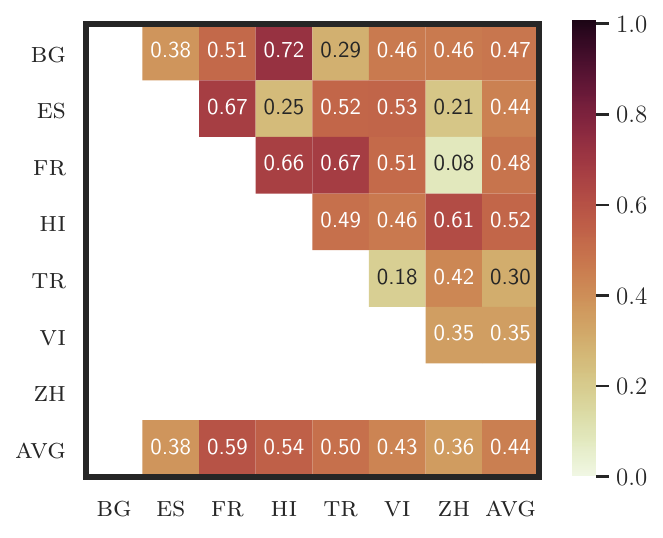}
        \caption{WM, $\varepsilon = 8$, $l = 0$}
        \label{fig:rsa_wikimatrix_8_lay0}
    \end{subfigure}
    \begin{subfigure}[b]{0.21\textwidth}
        \centering
        \includegraphics[width=\textwidth]{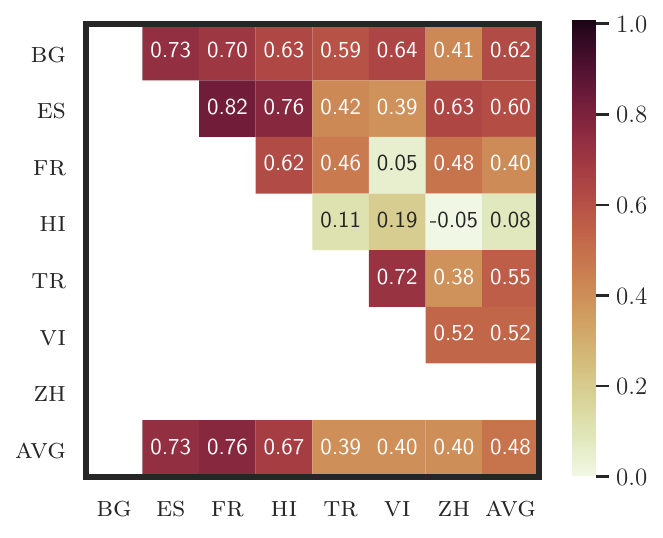}
        \caption{WM, $\varepsilon = 8$, $l = 8$}
        \label{fig:rsa_wikimatrix_8_lay8}
    \end{subfigure}
    \begin{subfigure}[b]{0.21\textwidth}
        \centering
        \includegraphics[width=\textwidth]{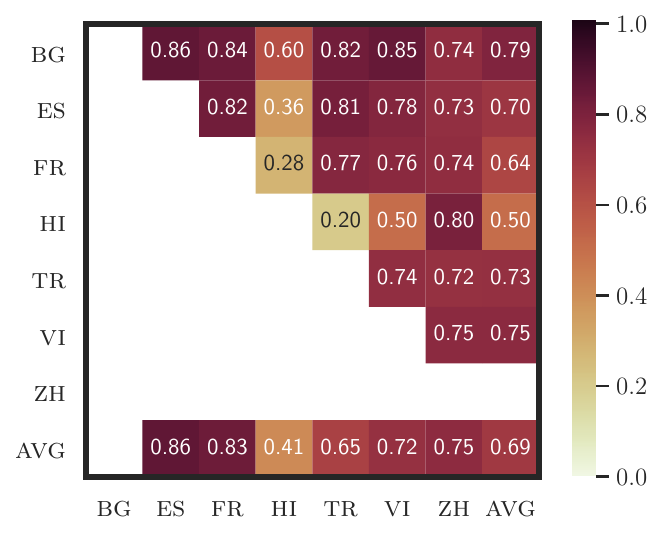}
        \caption{TED, $\varepsilon = 15$, $l = 0$}
        \label{fig:rsa_ted2020_15_lay0}
    \end{subfigure}
    \begin{subfigure}[b]{0.21\textwidth}
        \centering
        \includegraphics[width=\textwidth]{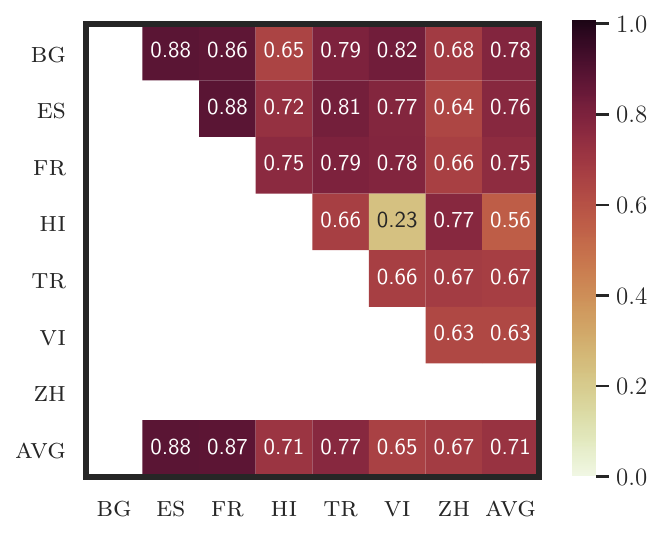}
        \caption{TED, $\varepsilon = 15$, $l = 8$}
        \label{fig:rsa_ted2020_15_lay8}
    \end{subfigure}
        \begin{subfigure}[b]{0.21\textwidth}
        \centering
        \includegraphics[width=\textwidth]{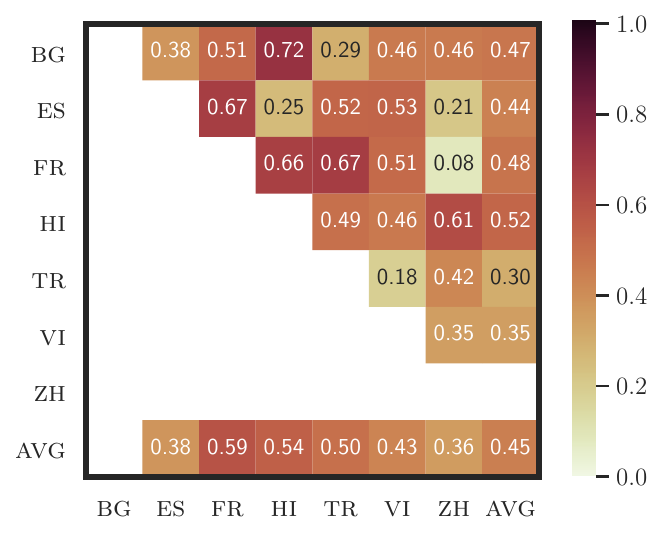}
        \caption{WM, $\varepsilon = 15$, $l = 0$}
        \label{fig:rsa_wikimatrix_15_lay0}
    \end{subfigure}
    \begin{subfigure}[b]{0.21\textwidth}
        \centering
        \includegraphics[width=\textwidth]{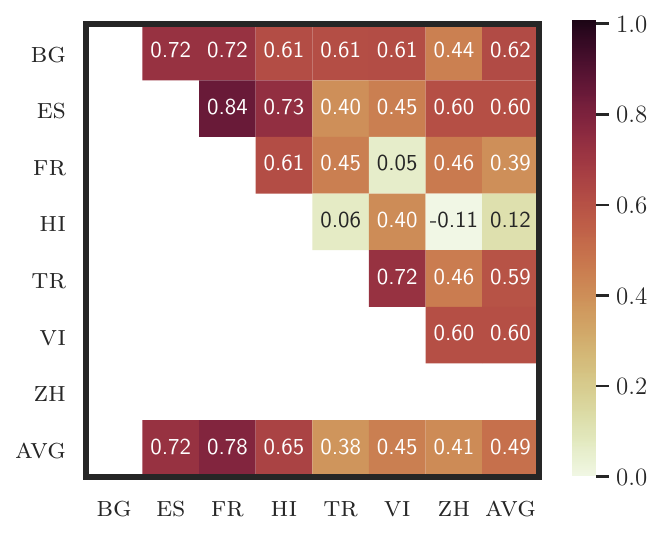}
        \caption{WM, $\varepsilon = 15$, $l = 8$}
        \label{fig:rsa_wikimatrix_15_lay8}
    \end{subfigure}
    \begin{subfigure}[b]{0.21\textwidth}
        \centering
        \includegraphics[width=\textwidth]{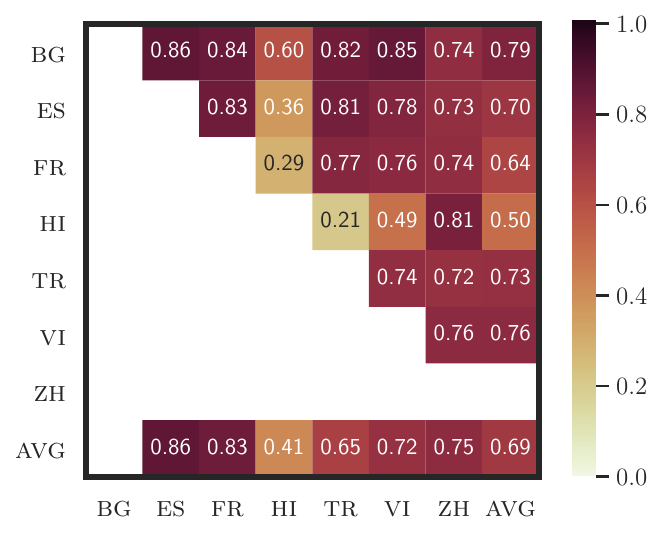}
        \caption{TED, $\varepsilon = 30$, $l = 0$}
        \label{fig:rsa_ted2020_30_lay0}
    \end{subfigure}
    \begin{subfigure}[b]{0.21\textwidth}
        \centering
        \includegraphics[width=\textwidth]{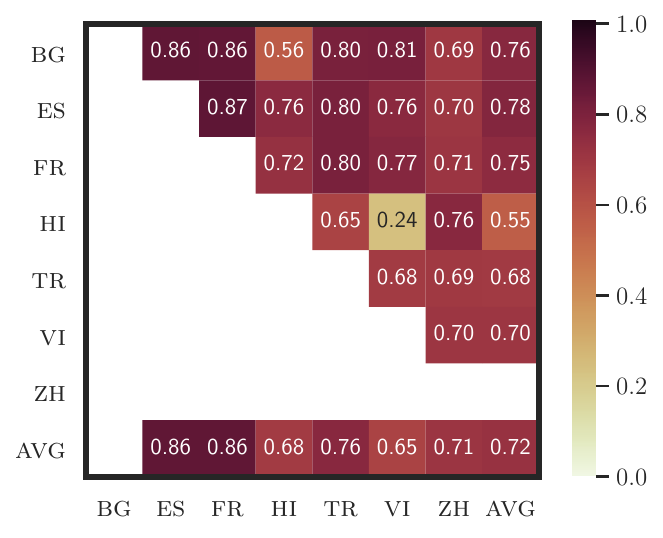}
        \caption{TED, $\varepsilon = 30$, $l = 8$}
        \label{fig:rsa_ted2020_30_lay8}
    \end{subfigure}
        \begin{subfigure}[b]{0.21\textwidth}
        \centering
        \includegraphics[width=\textwidth]{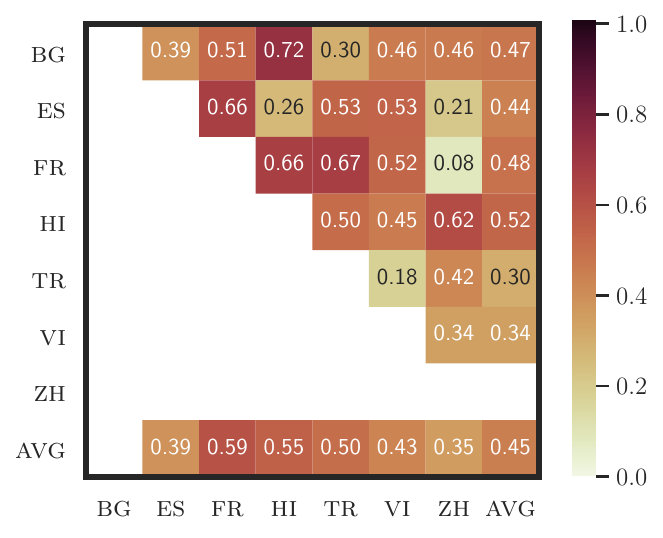}
        \caption{WM, $\varepsilon = 30$, $l = 0$}
        \label{fig:rsa_wikimatrix_30_lay0}
    \end{subfigure}
    \begin{subfigure}[b]{0.21\textwidth}
        \centering
        \includegraphics[width=\textwidth]{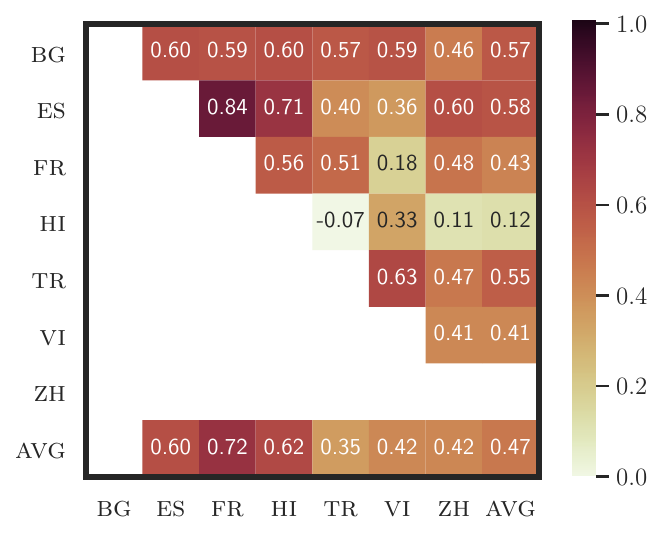}
        \caption{WM, $\varepsilon = 30$, $l = 8$}
        \label{fig:rsa_wikimatrix_30_lay8}
    \end{subfigure}
    \begin{subfigure}[b]{0.21\textwidth}
        \centering
        \includegraphics[width=\textwidth]{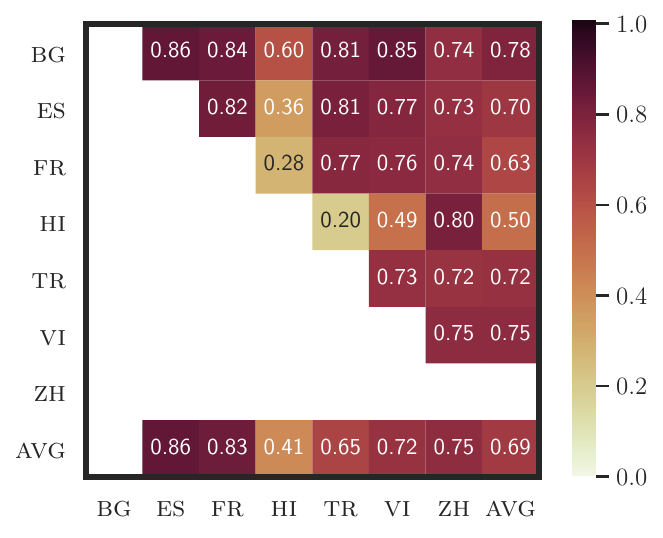}
        \caption{TED, $\varepsilon = \infty$, $l = 0$}
        \label{fig:rsa_ted2020_inf_lay0}
    \end{subfigure}
    \begin{subfigure}[b]{0.21\textwidth}
        \centering
        \includegraphics[width=\textwidth]{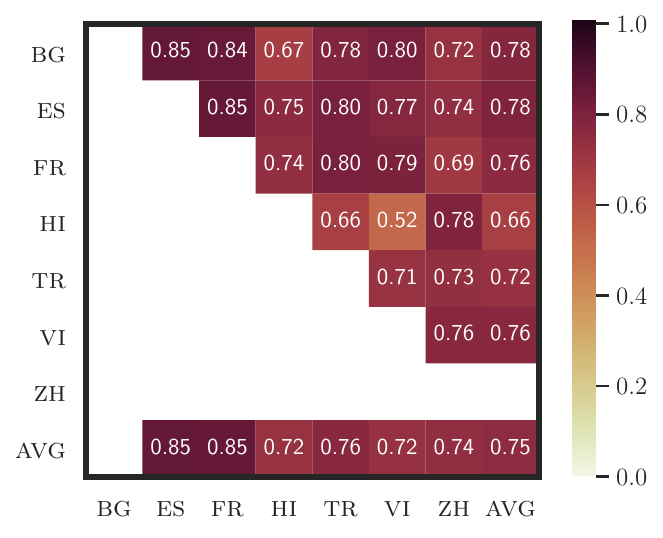}
        \caption{TED, $\varepsilon = \infty$, $l = 8$}
        \label{fig:rsa_ted2020_inf_lay8}
    \end{subfigure}
        \begin{subfigure}[b]{0.21\textwidth}
        \centering
        \includegraphics[width=\textwidth]{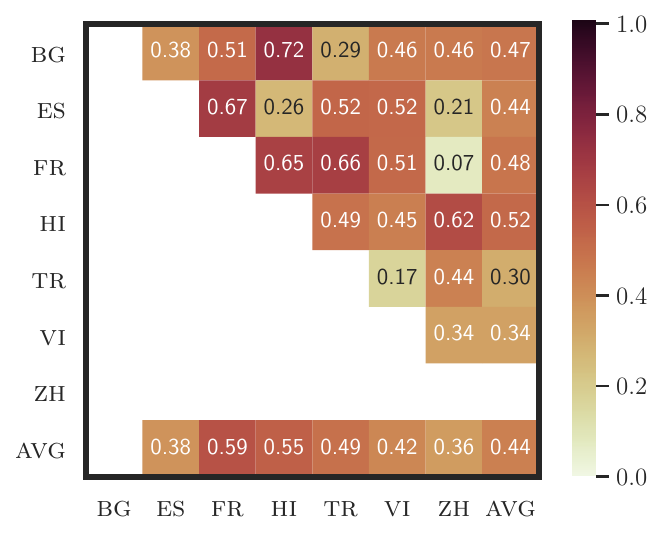}
        \caption{WM, $\varepsilon = \infty$, $l = 0$}
        \label{fig:rsa_wikimatrix_inf_lay0}
    \end{subfigure}
    \begin{subfigure}[b]{0.21\textwidth}
        \centering
        \includegraphics[width=\textwidth]{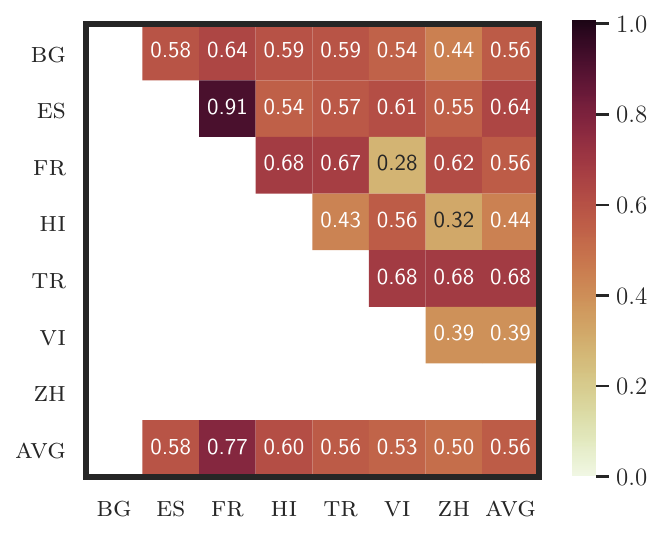}
        \caption{WM, $\varepsilon = \infty$, $l = 8$}
        \label{fig:rsa_wikimatrix_inf_lay8}
    \end{subfigure}
    \caption{\textbf{XNLI} RSA results for the TED 2020 (TED) and WikiMatrix (WM) datasets and different combinations of privacy budgets ($\varepsilon$) and layers ($l$). Each heatmap cell corresponds to the average over 5 random seeds. We observe that the overall patterns are highly similar across all levels of privacy, particularly at layer 0.}
    \label{fig:xnli_rsa_full}
\end{figure*}

\begin{table*}[htp]
\centering
\caption{\textbf{POS} IsoScores for different combinations of privacy budgets ($\varepsilon$) and layers ($l$). We show results averaged over 5 random seeds, except for \textsc{rnd} and \textsc{pre}. \textsc{rnd} and \textsc{pre} (added for comparison) denote XLM-R with randomly initialized weights and the original pretrained XLM-R, respectively. We see that the isotropy is fairly uniform across privacy budgets at layer 0 and generally higher at layer 0 than at layer 8. At layer 8, it peaks for non-private ($\varepsilon = \infty$) and our most private ($\varepsilon = 1 $) models.}
\label{tab:pos_all_isoscores}
\resizebox{0.6\textwidth}{!}{%
\begin{tabular}{@{}cccccll@{}}
\toprule
\multirow{2}{*}{\textbf{$\pmb{\varepsilon}$}} & \multicolumn{2}{c}{\textbf{TED 2020}} & \multicolumn{2}{c}{\textbf{WikiMatrix}} & \multicolumn{2}{c}{\textbf{Tatoeba}} \\
              & $l=0$ & $l=8$ & $l=0$ & $l=8$ & $l=0$ & $l=8$ \\ \midrule
\textsc{rnd} & 0.141 & 0.132 & 0.114 & 0.111 & 0.054 & 0.061 \\
\textsc{pre}  & 0.187 & 0.130 & 0.198 & 0.112 & 0.134 & 0.075 \\ \midrule
1             & 0.188 & 0.054 & 0.199 & 0.046 & 0.135 & 0.033 \\
3             & 0.188 & 0.044 & 0.199 & 0.038 & 0.135 & 0.027 \\
8             & 0.187 & 0.045 & 0.197 & 0.038 & 0.133 & 0.027 \\
15            & 0.187 & 0.047 & 0.199 & 0.040 & 0.135 & 0.028 \\
30            & 0.187 & 0.047 & 0.199 & 0.040 & 0.135 & 0.028 \\
$\infty$      & 0.188 & 0.087 & 0.199 & 0.070 & 0.135 & 0.051 \\ \bottomrule
\end{tabular}%
}

\vspace*{\floatsep}
\vspace*{\floatsep}
\vspace*{\floatsep}
\vspace*{\floatsep}

\centering
\caption{\textbf{XNLI} IsoScores for different combinations of privacy budgets ($\varepsilon$) and layers ($l$). We show results averaged over 5 random seeds, except for \textsc{rnd} and \textsc{pre}. \textsc{rnd} and \textsc{pre} (added for comparison) denote XLM-R with randomly initialized weights and the original pretrained XLM-R, respectively. We see that the isotropy is fairly uniform across privacy budgets at layer 0 and generally higher at layer 0 than at layer 8. At layer 8, it peaks for non-private ($\varepsilon = \infty$) and our most private ($\varepsilon = 1 $) models.}
\label{tab:xnli_all_isoscores}
\resizebox{0.4\textwidth}{!}{%
\begin{tabular}{@{}ccccc@{}}
\toprule
\multirow{2}{*}{\textbf{$\pmb{\varepsilon}$}} & \multicolumn{2}{c}{\textbf{TED 2020}} & \multicolumn{2}{c}{\textbf{WikiMatrix}} \\
              & $l=0$  & $l=8$  & $l=0$  & $l=8$  \\ \midrule
\textsc{rnd} & 0.144 & 0.134 & 0.130 & 0.124 \\
\textsc{pre}  & 0.195 & 0.138 & 0.210 & 0.129 \\ \midrule
1             & 0.195 & 0.121 & 0.211 & 0.120 \\
3             & 0.196 & 0.101 & 0.211 & 0.104 \\
8             & 0.196 & 0.074 & 0.212 & 0.079 \\
15            & 0.196 & 0.071 & 0.212 & 0.077 \\
30            & 0.194 & 0.087 & 0.210 & 0.089 \\
$\infty$      & 0.195 & 0.182 & 0.211 & 0.166 \\ \bottomrule
\end{tabular}%
}
\end{table*}

\end{document}